\documentclass{article}

\usepackage{microtype}
\usepackage{graphicx}
\usepackage{subcaption}
\usepackage{booktabs} 
\usepackage{bm}
\usepackage{multirow}
\usepackage{caption}
\usepackage{subcaption}
\usepackage[table]{xcolor}
\usepackage{enumitem}
\usepackage{subcaption}
\usepackage{mathtools}
\usepackage{tcolorbox}
\usepackage{tabularx}
\usepackage{listings} 
\usepackage{seqsplit}

\usepackage{hyperref}

\usepackage[accepted]{icml2026}

\usepackage{amsmath}
\usepackage{amssymb}
\usepackage{mathtools}
\usepackage{amsthm}
\usepackage{array}
\usepackage[table]{xcolor}
\usepackage{threeparttable}

\usepackage[capitalize,noabbrev]{cleveref}

\theoremstyle{plain}
\newtheorem{theorem}{Theorem}[section]

\newtheorem{lemma}[theorem]{Lemma}

\theoremstyle{definition}
\newtheorem{definition}[theorem]{Definition}
\newtheorem{assumption}[theorem]{Assumption}
\theoremstyle{remark}

\newcommand{\figref}[1]{Figure \ref{#1}}
\newcommand{\tabref}[1]{Table \ref{#1}}
\newcommand{\secref}[1]{Section \ref{#1}}
\newcommand{\equref}[1]{Equation (\ref{#1})}
\newcommand{\appendixref}[1]{Appendix \ref{#1}}
\newcommand{\algoref}[1]{Algorithm \ref{#1}}
\newcommand{\theoremref}[1]{Theorem \ref{#1}}
\newcommand{\lemmaref}[1]{Lemma \ref{#1}}
\newcommand{\assumpref}[1]{Assumption \ref{#1}}

\newcommand{\model}{CBS}

\usepackage[textsize=tiny]{todonotes}

\icmltitlerunning{Contextual Rollout Bandits for Reinforcement Learning with Verifiable Rewards}

\begin{document}

\twocolumn[
  \icmltitle{
 Contextual Rollout Bandits for Reinforcement Learning with Verifiable Rewards 
    }



  \icmlsetsymbol{equal}{*}
  \icmlsetsymbol{coadvisor}{†}

  \begin{icmlauthorlist}
    \icmlauthor{Xiaodong Lu}{beihang1,meituan}
    \icmlauthor{Xiaohan Wang}{meituan}
    \icmlauthor{Jiajun Chai}{meituan}
    \icmlauthor{Guojun Yin}{meituan}
    \icmlauthor{Wei Lin}{meituan}
    \icmlauthor{Zhijun Chen}{beihang1}
    \icmlauthor{Yu Luo}{huawei}
    \icmlauthor{Fuzhen Zhuang}{beihang2}
    \icmlauthor{Yikun Ban}{beihang1,coadvisor}
    \icmlauthor{Deqing Wang}{beihang1,equal}
  \end{icmlauthorlist}

  \icmlaffiliation{beihang1}{School of Computer Science and Engineering, Beihang University}
  \icmlaffiliation{beihang2}{School of Artificial Intelligence, Beihang University}
  \icmlaffiliation{meituan}{Meituan}
  \icmlaffiliation{huawei}{Huawei}

  \icmlcorrespondingauthor{Deqing Wang}{dqwang@buaa.edu.cn}


  \vskip 0.3in
]



\printAffiliationsAndNotice{If you have any questions, please contact via \textit{xiaodonglu@buaa.edu.cn} or \textit{yikunb@buaa.edu.cn}. †: Co-advised this work.}  

\begin{abstract}
Reinforcement Learning with Verifiable Rewards (RLVR) is an effective paradigm for improving the reasoning capabilities of large language models. However, existing RLVR methods utilize rollouts in an indiscriminate and short-horizon manner: responses of heterogeneous quality within each prompt are treated uniformly, and historical rollouts are discarded after a single use. This leads to noisy supervision, poor sample efficiency, and suboptimal policy updates.
We address these issues by formulating rollout scheduling in RLVR as a contextual bandit problem and proposing a unified neural scheduling framework that adaptively selects high-value rollouts throughout training. Each rollout is treated as an arm whose reward is defined by the induced performance gain between consecutive optimization steps. The resulting scheduler supports both noise-aware intra-group selection and adaptive global reuse of historical rollouts within a single principled framework.
We provide theoretical justification by deriving sublinear regret bounds and showing that enlarging the rollout buffer improves the achievable performance upper bound. Experiments on six mathematical reasoning benchmarks demonstrate consistent gains in performance and training efficiency across multiple RLVR optimization methods.
\end{abstract}

\section{Introduction}

From OpenAI O1~\cite{DBLP:journals/corr/abs-2412-16720} to DeepSeek R1~\cite{DBLP:journals/corr/abs-2501-12948}, recent advancements~\cite{zou2025transformer,chen2025llmboostmakelargelanguage} in large reasoning models have highlighted the crucial role of Reinforcement Learning with Verifiable Rewards (RLVR) in enhancing mathematical reasoning capabilities. By grounding policy optimization in verifiable reward signals, RLVR establishes a scalable self-improvement paradigm that iteratively corrects reasoning trajectories (i.e., rollouts) through trajectory generation and verification-driven learning.

Advanced RLVR methods~\cite{DBLP:journals/corr/abs-2507-18071,DBLP:dapo,DBLP:journals/corr/abs-2402-03300,DBLP:journals/corr/abs-2601-08521} typically employ group-relative policy optimization, which generates multiple responses for a single prompt to derive relative feedback signals. 
However, these methods often adopt an indiscriminate utilization of responses within a group, incorporating all generated samples into the gradient computation regardless of their individual learning value. In practice, due to the coarse granularity of rule-based rewards, responses within a group exhibit substantial quality heterogeneity, with many providing weak, noisy, or even misleading supervision, such as guessed-correct solutions~\cite{DBLP:journals/corr/abs-2504-13837,DBLP:journals/corr/abs-2503-08679} and trajectories dominated by redundant reasoning steps~\cite{DBLP:journals/corr/abs-2506-02678,DBLP:journals/corr/abs-2505-22411}. Failing to filter such low-utility rollouts not only incurs unnecessary computational overhead but can also distort policy updates, motivating the need for fine-grained, noise-aware selection within each rollout group.

Beyond intra-group noise, existing RLVR pipelines also suffer from a second fundamental limitation: \emph{restricted data horizons}. Most methods rely exclusively on the latest batch of rollouts for optimization, discarding past trajectories after a single use. This myopic training paradigm leads to poor sample efficiency and limits attainable policy performance, as many high-value rollouts are never revisited. While recent efforts have explored reusing historical data~\cite{DBLP:journals/corr/abs-2408-14037,DBLP:journals/corr/abs-2510-02245}, these approaches largely depend on static, hand-crafted heuristics and lack adaptivity to evolving training dynamics. How to systematically and adaptively reuse historical rollouts remains underexplored in RLVR.

To address the above two challenges, we formulate rollout scheduling as a \emph{contextual bandit} problem, where each rollout is viewed as an arm, and the arm reward is derived from the policy performance gain between two consecutive optimization steps induced by selecting that rollout. Building on this formulation, we propose a neural data scheduler plugin, named \model, which decomposes naturally into two complementary contributions.

\emph{(i) Intra-group rollout selection.}
To address response-quality heterogeneity within groups, \model~introduces a contextual bandit-based scheduler that performs fine-grained selection within each rollout group. We design a compact 10-dimensional arm representation that encodes policy-optimization-related training dynamics (e.g., reward, advantage, entropy, and clipping statistics), and instantiate the scheduler as an MLP that predicts the future utility of each rollout under online feedback. This enables \model~to automatically prioritize informative responses while suppressing noisy or misleading ones during policy optimization.

\emph{(ii) Global Adaptive Rollout Reuse.}
To overcome the limited-horizon constraint of existing RLVR methods, \model~further extends the scheduler to a global selection regime over a replay buffer of historical rollouts. By unifying online and offline rollout selection under the same contextual bandit framework, \model~adaptively reuses high-value trajectories across training rounds, substantially improving sample efficiency and enabling long-horizon policy refinement.

We validate the effectiveness of \model~from both theoretical and empirical perspectives. Theoretically, under a simplified yet essential single-rollout selection setting, we establish a formal connection between RLVR data scheduling and contextual bandits, define the per-round regret, and derive a sublinear regret bound for the proposed neural scheduler. We also show that enlarging the replay buffer improves the achievable upper bound of policy performance, providing principled justification for global rollout reuse. Empirically, experiments on six widely used mathematical reasoning benchmarks demonstrate that \model~consistently improves both performance and training efficiency across three representative policy optimization methods. Ablation studies further verify the effectiveness of the proposed components.

Our contributions can be summarized as follows:
\begin{itemize}[topsep=0pt, itemsep=0pt, parsep=0pt, partopsep=0pt]
    \item \textbf{Unified Contextual-Rollout Bandit for RLVR.}
    We formulate rollout scheduling in RLVR as a contextual bandit problem and propose a unified
    neural scheduling framework that supports both (i) noise-aware intra-group rollout selection
    and (ii) adaptive global reuse of historical rollouts, enabling fine-grained supervision
    filtering and long-horizon data reuse within a single principled decision-making framework.
    \item \textbf{Theoretical Guarantees.}
    We establish a formal connection between RLVR data scheduling and contextual bandits, and
    derive sublinear regret bounds for the proposed scheduler under a simplified yet essential
    setting, providing theoretical justification for both local and global rollout selection.
    \item \textbf{Empirical Effectiveness and Efficiency.}
    Extensive experiments on six mathematical reasoning benchmarks demonstrate that the proposed
    method consistently improves both performance and training efficiency across multiple RLVR
    optimization algorithms, with ablation studies validating the contribution of each component.

\end{itemize}

\section{Related Work}
\subsection{RLVR for LLM Reasoning}
Reinforcement Learning (RL) has emerged as a fundamental paradigm for the post-training of Large Language Models (LLMs), where the model is conceptualized as a policy $\pi_{\theta}$ that generates tokens as sequential actions to maximize cumulative reward signals. Initial developments in this field primarily focused on Reinforcement Learning from Human Feedback (RLHF) \cite{DBLP:conf/nips/Ouyang0JAWMZASR22,DBLP:conf/iclr/DaiPSJXL0024,DBLP:conf/nips/ChristianoLBMLA17,DBLP:conf/nips/RafailovSMMEF23,DBLP:conf/aistats/AzarGPMRVC24}, which utilizes human preference data to either train reward models \cite{DBLP:conf/nips/Ouyang0JAWMZASR22} or directly guide policy optimization \cite{DBLP:conf/nips/RafailovSMMEF23}. More recently, there has been a surge of interest in Reinforcement Learning with Verifiable Rewards (RLVR) for verifiable domains such as mathematics and code generation. These approaches \cite{DBLP:journals/corr/abs-2507-18071,DBLP:dapo,DBLP:journals/corr/abs-2402-03300,DBLP:journals/corr/abs-2601-08521,DBLP:journals/corr/abs-2509-02333} leverage rule-based rewards—such as deterministic correctness or execution feedback—to incentivize the model’s reasoning capabilities and have demonstrated significant potential in improving complex problem-solving tasks. Despite these advancements, contemporary RLVR methods typically utilize generated trajectories in an indiscriminate and local manner, ignoring the heterogeneous learning utility of different rollouts.

\subsection{Data Selection for RLVR}
Data selection is a critical technique for effective LLM post-training, which has been extensively studied in SFT \cite{DBLP:conf/nips/ZhouLX0SMMEYYZG23,DBLP:journals/corr/abs-2305-09246,DBLP:conf/iclr/LuY0LLTZZ24,DBLP:journals/corr/abs-2501-19393,DBLP:journals/corr/abs-2503-01807,DBLP:conf/icml/XiaMGA024} and RLHF \cite{DBLP:conf/pkdd/DasCPC25,DBLP:conf/aaai/MahmudNZ25,DBLP:journals/corr/abs-2312-00267,DBLP:journals/corr/abs-2409-09603,DBLP:conf/nips/huang26} scenarios to mitigate data noise and enhance training efficiency. With the rising success of RLVR, recent works have begun to explore data selection mechanisms within this paradigm \cite{DBLP:journals/corr/abs-2507-07451,DBLP:journals/corr/abs-2506-02177,DBLP:journals/corr/abs-2408-14037,DBLP:journals/corr/abs-2510-02245,DBLP:conf/nlpcc/DouWXZGZ25,DBLP:journals/corr/abs-2504-13818}. For instance, GRESO \cite{DBLP:journals/corr/abs-2506-02177} leverages historical success-failure records of prompts to identify and filter out zero-advantage samples that are unlikely to contribute to policy improvement; while ExGRPO \cite{DBLP:journals/corr/abs-2510-02245} enhances training efficiency by augmenting on-policy data with historical rollouts, specifically prioritizing high-entropy responses generated under prompts of intermediate difficulty. However, current data selection methods for RLVR predominantly rely on manual and method-specific heuristics that are labor-intensive and lack flexibility, leaving the potential for more adaptive and automated mechanisms largely unexplored.
\section{Problem Formulation}
\subsection{Preliminary} \label{sec:preliminary_po} 
\subsubsection{Group relative policy optimization}
Group relative policy optimization methods are widely adopted in existing RLVR paradigm. Specifically, given a question-answer pair $(q, a)$, the group relative policy optimization method first generates a group of responses $\{o_i\}_{i=1}^G$ using the current policy. The advantage of each response is then estimated by $A_i = \frac{v_i - \text{mean}(\{v_i\}_{i=1}^G)}{\text{std}(\{v_i\}_{i=1}^G)}$, where $v_i$ is the reward of the $i$-th response. The policy $\pi_{\theta}$ is optimized by maximizing
\begin{equation}
    \begin{aligned}
          \mathcal J_{\theta}  & = \mathbb E_{(q,a) \sim D, \{o_i\}_{i=1}^G \sim \pi_{{old}}(\cdot|q)} \Bigg[ \frac{1}{G} \sum_{i=1}^{G} \frac{1}{|o_i|} \sum_{t=1}^{|o_i|} A_i \\ 
          & \Phi(A_{i},r_{i,t}(\theta),\epsilon_{\text{low}},\epsilon_{\text{high}})\Bigg],
    \end{aligned}
    \label{eq:general_po}
\end{equation}
where $\mathcal D$ is the training dataset, $\Phi$ is a method-specific clip function,  $r_{i,t}(\theta) = \frac{\pi_{\theta}(o_{i,t} | q, o_{i,<t})}{\pi_{\text{old}}(o_{i,t} | q, o_{i,<t})}$ is the importance sampling weight at the $t$-th token of $o_i$, $\pi_{\text{old}}(\cdot)$ indicates the policy to generate responses $\{o\}_{i=1}^G$, $\epsilon_{\text{low}}$ and $\epsilon_{\text{high}}$ are the hyperparameters controlling the clip lower bound and the clip upper bound respectively. In this paper, we consider three popular policy optimization methods: GRPO \cite{DBLP:journals/corr/abs-2402-03300}, DAPO \cite{DBLP:journals/corr/abs-2503-14476}, and GSPO \cite{DBLP:journals/corr/abs-2507-18071}, and refer to Appendix \ref{appdx:detailed_po_introduction} for their detailed introduction.

\subsubsection{RLVR training pipeline} 
The RLVR training can be considered as a multi-round policy optimization problem. Assume the standard RLVR training spans $T$ rounds. In each round $t$, a batch of question-answer pairs, $\mathcal{B}_t = \{(q_1^t, a_1^t), \dots, (q_B^t, a_B^t)\}$, is first randomly sampled from the training dataset $D$, and then a group of responses $\{o_{i,j}^t\}_{j\in [G]}$ are generated by the policy model $\pi_{\theta_{t-1}}$ for each question $q_i$, forming a batch of rollouts $\mathcal R_t = \{x_{i,j}^t\}_{i \in [B],j \in [G]}$, where $x_{i,j}^t=(q_i^t,a_i^t,o_{i,j}^t)$ \footnote{We only show partial fields in $x_{i,j}^t$; any quantity with the same index is an attribute of that rollout (i.e., $v^t_{i,j}$ indicates reward).} indicates a piece of rollout data. Then the policy parameter is updated by \footnote{We adopt the SGD optimizer here for simplicity.}
 \begin{equation}
     \theta_{t} = \theta_{t-1} + \eta_1 \nabla_{\theta_{t-1}}\mathcal J_{\theta}(\mathcal R_t),
     \label{eq:policy_optimization}
 \end{equation}
 where $\mathcal J_{\theta}(\mathcal R_t)$ indicates the target function in \equref{eq:general_po}, computed using $\mathcal R_t$. A visual illustration for this pipeline can be found in \appendixref{appdx:RLVR_pipeline}.

\subsection{Contextual Rollout Bandit}
We formulate data scheduling as a contextual bandit problem \cite{ban2022eenet}. Specifically, we introduce a scheduler that, at each round $t$, selects a high-quality subset $\bar {\mathcal C}_t$ with $|\bar{\mathcal C}_t|=K$ from a given rollout set $\mathcal{C}_t$. Let $f(\cdot)$ denote a rollout encoding function that maps a subset into a representation. Then, each candidate subset $C' \subseteq \mathcal{C}_t$ of size $K$ can be viewed as an arm with representation $H'=f(C')$. Depending on the candidate arm space and the data source, we study the following two scheduling problems.

\begin{tcolorbox}[
    colback=blue!2,
    colframe=blue!35!black,
    boxrule=0.6pt,
    arc=1.5pt,
    left=6pt,
    right=6pt,
    top=5pt,
    bottom=5pt
]
\begin{definition}[\textbf{Intra-Group Scheduling Problem}]
We aim to \emph{downsample training rollouts} by retaining only high-quality samples within each group.
Formally, for each group of rollouts
$\{x_{i,j}^t\}_{j=1}^{G} \subset \mathcal{R}_t$,
we select a top-$p\%$ subset indexed by
$\mathcal{S}_i^t \subseteq \{1,\ldots,G\}$ with
$|\mathcal{S}_i^t|=\lfloor p\% \cdot G \rfloor$,
yielding the selected set
$\{x_{i,j}^t\}_{j\in \mathcal{S}_i^t}$.
Denoting the selected rollouts as
$
\bar{\mathcal{C}}_t
= \bigcup_{i=1}^{B} \{x_{i,j}^t\}_{j\in \mathcal{S}_i^t}$,
the policy is optimized by training on $\bar{\mathcal{C}}_t$
in \equref{eq:policy_optimization}.
\end{definition}
\end{tcolorbox}

\begin{tcolorbox}[
    colback=blue!2,
    colframe=blue!35!black,
    boxrule=0.6pt,
    arc=1.5pt,
    left=6pt,
    right=6pt,
    top=5pt,
    bottom=5pt
]
\begin{definition}[\textbf{Global Scheduling Problem}]
We aim to \emph{reuse high-quality historical rollouts}.
Specifically, we maintain a FIFO buffer that stores rollouts from the most recent $L$ iterations,
$\mathcal{C}_t = \bigcup_{t'=t-L+1}^{t}\mathcal{R}_{t'}$.
At round $t$, we select $K$ high-quality rollouts from $\mathcal{C}_t$
to construct a training set
$\bar{\mathcal{C}}_t \subseteq \mathcal{C}_t$,
where $|\bar{\mathcal{C}}_t| = K$.
We set $K = |\mathcal{R}_t|$ to match the amount of the latest rollouts.
The policy is then optimized by training on $\bar{\mathcal{C}}_t$
in \equref{eq:policy_optimization}.
\end{definition}
\end{tcolorbox}

The two scheduling problems will both generate a rollout sequence $\bar{\mathcal C}_1,\bar{\mathcal C}_2,\cdots,\bar{\mathcal C}_T$ and corresponding parameter sequence $\theta_1,\cdots,\theta_T$. Assume we have an unknown reward function $R(\bar{C}_t,\theta_{t-1},\theta_t)$ that evaluates the performance change from $\pi_{\theta_{t-1}}$ to $\pi_{\theta_t}$, after training on $\bar {\mathcal C}_t$. Then the target of the scheduling sequence is to maximize the cumulative reward
\begin{equation}
    R_T = \sum_{t=1}^T R(\bar C_t,\theta_{t-1},\theta_{t}).
\end{equation}

 Due to the high storage and computational cost for the LLM policy, it is impractical to accurately evaluate the performance change $\theta_{t-1} \rightarrow \theta_t$. Therefore, we resort to a computationally efficient surrogate that captures the true performance gain, and define the reward function as the metric gain between two consecutive rollout data $\mathcal R_{t}, \mathcal R_{t+1}$.\footnote{From here on, we flatten the two-dimensional indexing into a single dimension, using $x_i^t$ to denote the $i$-th data instance in $\mathcal{R}_t$.} 
 
\begin{tcolorbox}[
    colback=blue!2,
    colframe=blue!35!black,
    boxrule=0.6pt,
    arc=1.5pt,
    left=6pt,
    right=6pt,
    top=5pt,
    bottom=5pt
]
\begin{definition} [\textbf{Performance Gain Reward}]
Let $V_t = \text{mean}(\{v_{i}^t|x_{i}^t \in \mathcal R_t\})$ and $E_t = \text{mean}(\{e_i^t|x_i^t \in \mathcal R_t \})$ denote the average reward and entropy for the $t$ step rollout data, with $e_i^t$ being defined as 
\begin{equation}
    \begin{aligned}
        e_i^t = \frac{-1}{|o_i^t|}\sum_{k=1}^{|o_i^t|}\sum_{v \in \mathcal V} \pi_{\theta_{t-1}}(v|q_i^t,o^t_{i,<k})  \\ \log \pi_{\theta_{t-1}}(v|q_i^t,o^t_{i,<k}),
    \end{aligned}
\end{equation}
where $\mathcal V$ is the vocabulary of the LLM policy. Then the reward function is defined as
\begin{equation}
\begin{aligned}
     &R(\overline{\mathcal C}_t,\theta_{t-1},\theta_t) = V_{t+1} - V_t - w_e \mathbb I[E_{t} >  e_{\min}]* \\ 
     & \qquad \qquad \qquad \qquad (E_{t+1} - E_t),
\end{aligned}
 \label{eq:group_reward}
\end{equation}
where $w_e,e_{\min}$ are hyperparameters and $\mathbb I[\cdot]$ is an indicator function. 
\end{definition} 
\end{tcolorbox}

Since each mini-batch $\mathcal B_t$ is randomly sampled from the training set, $V_t$ provides a stochastic estimate of the expected performance of $\theta_{t-1}$ on the training set $\mathcal D$, and thus $V_{t+1} - V_t$ reflects the performance improvement between consecutive updates. The term $w_e \mathbb I[E_t > e_{\min}](E_{t+1} - E_t)$ serves as an entropy control term, penalizing entropy increases once the entropy exceeds a predefined threshold $e_{\min}$, preventing training collapse (shown in \secref{sec:ablation_study}). It is also worth noting that the deviation between the stochastic estimate and its corresponding expectation is bounded (\lemmaref{lemma:bounded_reward}).

\section{Methodology} \label{sec:methodology}
\begin{figure}[t]
    \centering
    \includegraphics[width=0.95\linewidth]{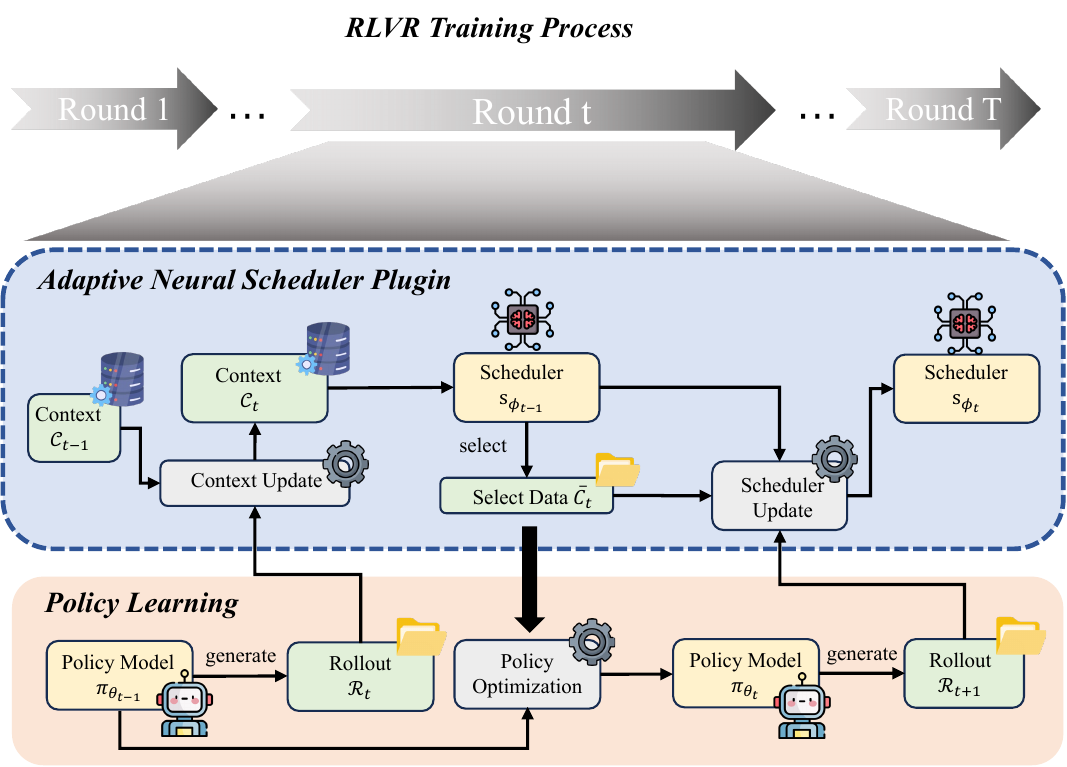}
    \caption{Workflow of \model, introducing a neural scheduler plugin that augments RLVR training by selectively using rollouts. The scheduler also updates based on feedback.}
    \label{fig:workflow}
\end{figure}

As shown in \figref{fig:workflow}, we present \model, a general data selection plugin that can be seamlessly integrated into existing RLVR methods for adaptive rollout selection. \model~employs a neural reward estimator $s_{\phi}(\cdot)$ that predicts arm-level rewards for rollout selection, and continually adapts its parameters based on online feedback. In the following, we first introduce the overall framework, then describe the detailed configuration, and finally explain how to employ the scheduler to address the two scheduling problems. The overall workflow is summarized in \algoref{alg:overall_workflow}.

\subsection{Overall Framework of \model}
\textbf{Subset selection via sample scoring.} Scoring each rollout subset $\mathcal C' \subset \mathcal C_t$ is infeasible due to the combinatorial explosion in the number of arms. We therefore decompose subset-level scoring into per-rollout evaluation. Specifically, let $s_{\phi_{t-1}}$ denote the scheduler at round $t$, for each rollout $x_i^t \in \mathcal C_t$, we construct individual representations $h_i^t = f(x_i^t)$ and score it via $s_{\phi_{t-1}}(h_i^t)$. Selecting the optimal size-K rollout subset $\bar{\mathcal C}_t$ then reduces to selecting the top-$K$ rollouts. Notably, this decomposition can be justified by a first-order Taylor expansion of the group-level reward, which yields a linear approximation where the overall reward is expressed as the sum of per-sample rewards. Similar linearizations of subset utility are also seen in prior work on data selection for SFT and RLHF, as they enable more efficient subset selection \cite{DBLP:conf/icml/XiaMGA024,DBLP:conf/iclr/SaunshiGBA23}. Detailed discussion can be found in \appendixref{appdx:subset_selection}.

\textbf{Sample reward.} To provide feedback for the sample scoring, we correspondingly dispatch the group-level reward $R(\bar{\mathcal C}_t,\theta_{t-1},\theta_t)$ to sample-level rewards $R(\bar x_i^t,\theta_{t-1},\theta_t)$ for each $\bar x_i^t \in \bar{\mathcal C}_t$. Inspired by GRPO-style objectives in \equref{eq:general_po}, where each importance weight is scaled by an advantage term, we adopt a simple advantage-based assignment of:
\begin{equation}
    R(\bar x_i^t,\theta_{t-1},\theta_t)
    =
    \frac{|\bar A_i^t|}{\sum_{\bar x_j^t \in \bar{\mathcal C}_t} |\bar A_j^t|}
    \, R(\bar{\mathcal C}_t,\theta_{t-1},\theta_t), \label{eq:sample_reward}
\end{equation}
where the contribution of each sample is related to its advantage magnitude. 

\textbf{Scheduler update.} We further update the scheduler parameters online, allowing it to adapt its scheduling policy in response to reward feedback. Specifically, after receiving the sample feedback, we will optimize the scheduler according to the sample reward, which is
\begin{equation}
    \begin{aligned}
    \phi_{t} = \phi_{t-1} - \eta_2 \nabla_{\phi_{t-1}}\frac{1}{K}\sum_{i=1}^K \bigg( s_{\phi_{t-1}}(\bar h_i^t)- \\
    R(\bar x_i^t,\theta_{t-1},\theta_{t})\bigg)^2,
    \end{aligned} \label{eq:scheduler_learning}
\end{equation}
where $\eta_2$ is the learning rate of the scheduler. Since the feedback at round $t$ depends on the rollout in round $t{+}1$, we perform the scheduler update just after the rollout phase at round $t+1$ (see \algoref{alg:overall_workflow}), before scheduling rollouts.

\subsection{Detailed Configuration} \label{sec:detailed_configuration}
\textbf{Arm representation.} A notable amount of prior work has shown that training dynamics associated with rollout data—such as advantage, entropy, and clip ratio—are strongly correlated with the underlying data value \cite{DBLP:journals/corr/abs-2510-02245,DBLP:journals/corr/abs-2503-20783,DBLP:journals/corr/abs-2506-01347}. Inspired by this, we encode each arm $x_i^t$ into a 10-dimensional training-dynamic vector $h_i^t=f(x_i^t) \in \mathbb R^{10}$. As shown in \tabref{tab:arm-features}, these features can be divided into four groups of \textit{reward-related signals}, \textit{length-related signals}, \textit{policy perception metrics}, and \textit{data interaction history}. Although each feature is individually simple, their proper combination yields a rich representation of rollouts, making it possible to identify informative patterns, such as high or low entropy under fresh or stale data regimes. 
Besides, the \textit{policy perception metrics} and \textit{data interaction history} in \tabref{tab:arm-features} dynamically evolve whenever a rollout is used or a new round begins, enabling a reused rollout to be presented as different representations. Notably, constructing these representations incurs negligible computational overhead (will be shown in \secref{sec:main_results}): most features are off-the-shelf metrics readily available in standard RL frameworks (e.g., verl), and the relatively complex policy perception metrics can be efficiently obtained during each forward pass without requiring an additional forward computation. 
 
\begin{table}[t]
\centering
\footnotesize
\caption{The 10-dimensional training-dynamics representation is defined for each rollout $x_i^t \in \mathcal{C}_t$.}
\label{tab:arm-features}
\renewcommand{\arraystretch}{1.5} 

\resizebox{\columnwidth}{!}{
    \begin{threeparttable}
    \begin{tabular}{>{\raggedright\arraybackslash}p{3.5cm} >{$\displaystyle}l<{$}}
    \toprule
    \textbf{Description} & \textbf{Mathematical Formulation} \\

    \rowcolor{gray!15} \multicolumn{2}{l}{\textbf{Reward-related signals} \textit{(Intra-group relative merit)}} \\ \addlinespace 
    Reward & v_i^t \\
    Advantage & A_i^t \\
    Mean group reward &
    \operatorname{mean}\left(\{v_j^t \mid x_j^t \text{ is in the same group as } x_i^t\}\right) \\
    Std of group reward &
    \operatorname{std}\left(\{v_j^t \mid x_j^t \text{ is in the same group as } x_i^t\}\right) \\

    \addlinespace 
    \rowcolor{gray!15} \multicolumn{2}{l}{\textbf{Length-related signals} \textit{(Response volume and integrity)}} \\ \addlinespace 
    Normalized length \tnote{1} & |o_{i}^t| / L_{\max}  \\
    Truncation flag & \mathbb I [ o_i^t \text{ is truncated} ] \\

    \addlinespace 
    \rowcolor{gray!15} \multicolumn{2}{l}{\textbf{Policy perception metrics} \textit{(Model state projected onto data)}} \\ \addlinespace 
    Entropy \tnote{2} & -\frac{1}{|o_{i}^t|}\sum_{j=1}^{|o_{i}^t|} \sum_{v \in \mathcal{V}} \pi_{\theta_{t^-}}(v|q^t_i,o^t_{i,<j}) \cdot \log \pi_{\theta_{t^-}}(v|q^t_i,o^t_{i,<j}) \\ \addlinespace
    Clip ratio & \frac{1}{|o_{i}^t|} \sum_{j=1}^{|o_i^t|} \mathbb I [ r_{i,j}^t(\theta_{t^-}) \text{ is clipped} ] \\ 

    \addlinespace 
    \rowcolor{gray!15} \multicolumn{2}{l}{\textbf{Data interaction history} \textit{(Usage and freshness)}} \\  \addlinespace 
    Usage count \tnote{3} & \sum\nolimits_{k=1}^{t-1} \mathbb I[x_i^t \in \bar{\mathcal C}_{k}] \\
    Sample age \tnote{4} & t - \max \{ k \mid k < t, o_i^{t} \in \mathcal R_k \} \\

    \bottomrule
    \end{tabular}
    \begin{tablenotes}
        \footnotesize 
        \item[1] $L_{\max}$ denotes the maximum response length.
        \item[2] $t^{-}$ denotes the latest round that $x_i^t$ is used for policy optimization, or the round at which it was generated if it has never been used. The same definition applies to the clip ratio.
        \item[3] The number of times $x_i^t$ is used for policy optimization.
        \item[4] The number of rounds elapsed since the sample was generated.
    \end{tablenotes}
    \end{threeparttable}
} 
\end{table}

\textbf{Scheduler architecture and decision.} We design the scheduler as a multi-layer perceptron (MLP). For scheduling, we select the top-$K$ rollouts based on their scores. Besides, to balance exploration and exploitation, we use epsilon-greedy to select between the highest-scoring and the newest data. Specifically, the $i$-th data  $\bar x_i^t \in \bar{\mathcal C}_t$ is selected by
\begin{equation}
    \bar{x}_i^t = 
    \begin{cases} 
    \underset{x \in \mathcal C_t \setminus \{\bar{x}_j^t\}_{j=1}^{i-1}}{\arg\max} s_{\phi_{t-1}}(h) & \text{w.p.} \, 1-\epsilon_t, \\
    \underset{x \in \mathcal C_t \setminus \{\bar{x}_j^t\}_{j=1}^{i-1}}{\arg\min} f_d(x) & \text{w.p.} \, \epsilon_t.
    \end{cases} \label{eq:selection}
\end{equation}
where $h$ indicates the representation of $x$ and $f_d(x) \in \mathbb R$ is the \textit{sample age} of $x$ in \tabref{tab:arm-features}. The probability changes as 
\begin{equation}
        \epsilon_t = 
    \begin{cases} 
    1.0 & t \leq T_w, \\
    \max(\epsilon_{0}-(t-1)*\Delta_{\epsilon},\epsilon_{\min}) & t > T_{w},
    \end{cases} \label{eq:exploration_prob}
\end{equation}
where $T_{w}, \Delta_{\epsilon}, \epsilon_{\min}, \epsilon_0$ are hyperparameters controlling the warmup step, probability decay weight, minimum explore probability, and initial explore probability, respectively.

\textbf{Reward augmentation.} To improve the training stability of the scheduler, we normalize the group reward by Exponential Moving Average (EMA). Specifically, we first replace the raw reward gain in \eqref{eq:group_reward} with an EMA one and squeeze it within $[0,1]$ by Sigmoid, which is
\begin{equation}
\begin{aligned}
    & \hat R(\overline{\mathcal C}_t,\theta_{t-1},\theta_t) =
      \bigg[\text{Sigmoid}\left(\frac{(V_{t+1} - V_t)-\mu_t}{\sqrt{\sigma_t}}\right)
      \\  
      & \qquad - w_e \mathbb I[E_{t} > e_{\min}](E_{t+1} - E_{t}) \bigg], \\
    & \mu_t = (1-\alpha) \mu_{t-1} + \alpha (V_{t+1} - V_t),  \\
    & \sigma_t = (1-\alpha) \sigma_{t-1} +
      \alpha \left[(V_{t+1} - V_t) - \mu_t\right]^2
\end{aligned}
\label{eq:ema_reward_gain}
\end{equation}

where $\alpha$ is a hyperparameter to control the degree of smoothness. 
Additionally, we find that the denominator in \eqref{eq:sample_reward} yields very small sample weights when many rollouts are selected, leading to unstable scheduler training. We therefore remove the denominator and redefine the sample reward as
\begin{equation}
\hat R(\bar x_i^t,\theta_{t-1},\theta_t) = |A_i^t| * \hat R(\bar{\mathcal C}_t,\theta_{t-1},\theta_t) \label{eq:true_sample_reward}
\end{equation}
Finally, we summarize the overall workflow in \algoref{alg:overall_workflow}.

\subsection{Employing the Scheduler for the Two Scheduling Problems}
\textbf{Intra-group scheduling.}
For intra-group scheduling, we set the context set to the latest batch of rollouts (i.e., $\mathcal{C}_t = \mathcal{R}_t$) and organize rollouts by group. In each scheduling round, the scheduler respectively selects the top-$p\%$ rollouts within each group and aggregates them to form the selected set $\bar{\mathcal{C}}_t$.

\textbf{Global scheduling problem.}
For the global scheduling problem, we employ a FIFO replay buffer to store rollouts from the latest $L$ steps and perform global selection to choose $K$ rollouts to form the set $\bar{\mathcal{C}}_t$. Moreover, since the \textit{policy perception metrics} and \textit{data interaction history} in \tabref{tab:arm-features} evolve across training rounds—where \textit{sample age} increases over time, while other features change only when a rollout is selected for policy optimization—we update these features at the end of each round (see \algoref{alg:overall_workflow}).

\section{Theoretical Analysis} \label{sec:theoretical_motivation}
In this section, we formally describe the connection between the data scheduling and contextual bandit problem to provide intuitive motivation for \model. For clarity, we consider the single-sample selection setting, in which we select one rollout per round, and the sample reward coincides with the subset reward. We refer to \appendixref{appdx:subset_selection} for the analysis of batch selection. For reward function, according to \lemmaref{lemma:bounded_reward}, the raw scheduler reward satisfies  $\mathcal R(\bar{\mathcal C}_t,\theta_{t-1},\theta_t)\in[-A,A]$, where 
$A=1+w_e\log N$ and $N=|\mathcal V|$ is the vocabulary size. Therefore, in the analysis, we use the normalized reward
\begin{equation*}
    \bar{R}(\bar{\mathcal C}_t,\theta_{t-1},\theta_t)
    = \frac{R(\bar{\mathcal C}_t,\theta_{t-1},\theta_t)+A}{2A}.
\end{equation*}
Then $\bar{R}(\bar{\mathcal C}_t,\theta_{t-1},\theta_t)\in[0,1]$, which satisfies the bounded label condition required by the neural regression lemmas. Notably, this affine normalization only scales each regret gap by the constant factor $1/(2A)$, and thus does not change the order of the regret bound. All proofs for this section can be found in \appendixref{appdex:proof}.

Under the above setting, \algoref{alg:overall_workflow} naturally reduces to a standard contextual bandit problem. Let $M=|\mathcal C_t|$ denote the context size. At each round $t$, a set of arms $\mathcal{C}_t = \{x_1^t, \dots, x_M^t\}$ and their associated representations $H_t = \{h_1^t, \dots, h_M^t\}$ are presented to the scheduler. An arm $a_t \in [M]$ is selected, and a reward $r_t$ is observed for the chosen arm. Let $V_t^s=\bar R(\overline{\mathcal C}_t,\theta_{t-1},\theta_t)$ be the random variable denoting the scheduler reward at round $t$. Rewriting the policy optimization and the expected rewards as functions below, 
\begin{equation}
    \begin{aligned}
        & \mathcal A(x,\theta_{t-1}) = \theta_{t-1} + \eta \nabla_{\theta_{t-1}} J_{\theta}(x), \\
        & g(\theta_{t}) =  \mathbb E_{\mathcal B_{t+1} \sim \mathcal D, \mathcal R_{t+1} \sim \pi_{\theta_t}(\cdot|\mathcal B_t)} \left[ V_{t}^s \right],  
    \end{aligned}
\end{equation}
the $r_t$ can be considered as a stochastic estimate of function $g(\mathcal A(x_{a_t}^t,\theta_{t-1}))$. Therefore, we can define the optimal arm as $a_t^* = \arg \max_{x \in \mathcal C_t} g(\mathcal A(x,\theta_{t-1}))$. The cumulative regret is then given as 
\begin{equation}
    R_T = \sum_{t=1}^T \left[ g(\mathcal A(x^t_{a_t^*},\theta_{t-1})) - g(\mathcal A(x^t_{a_t},\theta_{t-1})) \right].
\end{equation}
In the following, we interchangeably use  $g(\mathcal A(x_a^t,\theta_{t-1}))$ and $r^t_{a}$ for notation simplicity.

Our theoretical results build on the Neural Tangent Kernel (NTK) framework, which provides a tractable characterization of training dynamics in the overparameterized regime. We instantiate the selector network $s_{\phi}(\cdot)$ as a fully connected network with depth $L \ge 2$ and width $m$ (details in \appendixref{appdx:definition}). We slightly abuse the notations and rewrite the arm representations $\{h_{i}^t\}_{i \in [M],t\in [T]}$ and rewards $\{r^t_i\}_{t\in [T],i\in [M]}$ into $\{h_{i}\}_{i \in [TM]}$ and $\{r_i\}_{i \in [TM]}$ respectively. Let $H \in \mathbb R^{TM \times TM}$ denotes the NTK matrix of $\{h_{i}\}_{i \in [TM]}$ (defined in \appendixref{appdx:definition}). We adopt the following two standard assumptions for contextual bandit analysis \cite{DBLP:conf/icml/Zhou0G20,DBLP:conf/iclr/ZhangZ0G21,DBLP:conf/iclr/DaiSVFLJ23,DBLP:conf/iclr/JiaZZGW22,10.1145/3637528.3671691}.
 
\begin{assumption} \label{asump:arm_reward}
    For any $i \in [K],~t \in [T], \theta \in \mathbb R^d$, $\|h_i^t\|_2=1$ and $g(\theta) \in [0,1]$.   
\end{assumption}

\begin{assumption} \label{asump:ntk}
    There exists $\lambda_0 >0$ that $H \succeq \lambda_0 I$.
\end{assumption}
The assumptions are mild: the first is satisfied when the RLVR reward is bounded in $[0,1]$ and the arm representations are $\ell_2$-normalized before being fed into $s_{\phi}$, while the second holds as long as there are no identical representations among $\{h_i\}_{i \in [TM]}$.

We first investigate how the buffer size of global scheduling impacts the final policy performance. Holding all other hyperparameters fixed, we define $\varphi(M)=\max_{\phi}\ \mathbb{E}\!\left[\max _{t \in [T]}g(\theta_t)\right]$ as the optimal expected performance achievable during the training process, by setting buffer size to $M$ and selecting the scheduler policy $s_\phi$. The expectation is taken with respect to the randomness in the training procedure (i.e., rollout phase). We then state the following theorem.

\begin{tcolorbox}[
    colback=green!2,
    colframe=green!35!black,
    boxrule=0.6pt,
    arc=1.5pt,
    left=6pt,
    right=6pt,
    top=5pt,
    bottom=5pt
]
\begin{theorem}\label{theorem:buffer_size}
Let $B$ denote the batch size and $G$ denote the group size. There exist constants $C_1, C_2>0$ such that, if $m \ge C_1$ and $L \ge C_2$, then for any $i,j \in [T]$ with $i \ge j$, it holds that
\[
\varphi(iBG) \ge \varphi(jBG).
\]
\end{theorem}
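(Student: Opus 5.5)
The plan is a simulation (embedding) argument: any scheduling policy that is feasible with buffer size $jBG$ can be reproduced, up to arbitrarily small error, by some scheduler $s_\phi$ operating with the larger buffer $iBG$. Taking the supremum over schedulers then gives $\varphi(iBG)\ge\varphi(jBG)$.

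\textbf{Step 1: nesting of candidate sets.} Fix a realization of the training randomness, namely the sampled batches $\mathcal B_t$ and the rollout randomness, coupled across the two buffer sizes. The buffer of size $jBG$ stores $\mathcal C_t^{(j)}=\bigcup_{t'=t-j+1}^{t}\mathcal R_{t'}$, and the buffer of size $iBG$ stores $\mathcal C_t^{(i)}=\bigcup_{t'=t-i+1}^{t}\mathcal R_{t'}$. Since the FIFO buffer holds the most recent rounds and $i\ge j$, we have $\mathcal C_t^{(j)}\subseteq\mathcal C_t^{(i)}$ for every $t$. Consequently, every selection sequence $\bar{\mathcal C}_1,\dots,\bar{\mathcal C}_T$ feasible for the small buffer is also feasible for the large one. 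The parameter trajectory $\theta_t=\mathcal A(\bar{\mathcal C}_t,\theta_{t-1})$ depends only on the selected rollouts, so identical selections induce identical trajectories. Hence $\max_{t}g(\theta_t)$ coincides pathwise, and therefore in expectation.

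\textbf{Step 2: realizing the small-buffer policy with the large buffer.} Let $\phi^\star$ be an optimal (or $\delta$-optimal) scheduler for buffer size $jBG$. We need a $\phi'$ which, on the large buffer, scores the rollouts of $\mathcal C_t^{(j)}$ in the same order as $\phi^\star$ and scores every rollout older than $j$ rounds strictly below them. Such rollouts are exactly those whose \emph{sample age} feature exceeds $j-1$, so the target scoring function is
\[
\tilde s(h)=s_{\phi^\star}(h)-c\,\mathbb I[\text{age}(h)\ge j]
\]
for a sufficiently large constant $c$. This function is defined on the finite set of distinct representations $\{h_i\}_{i\in[TM]}$, which are separated by \assumpref{asump:ntk}. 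By the NTK and overparameterization interpolation results (the same regression lemmas used later for the regret bound), for $m\ge C_1$ and $L\ge C_2$ there is a $\phi'$ with $|s_{\phi'}(h_i)-\tilde s(h_i)|$ smaller than half the minimum score gap on all these points. Then the top-$K$ selections of $s_{\phi'}$ agree with those of $\tilde s$. The $\epsilon$-greedy branch selects the newest data, and the newest data lie in $\mathcal C_t^{(j)}$ in both cases, so the two branches can be coupled through the same coin flips.

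\textbf{Step 3: conclusion and the main obstacle.} From Step 2, $\mathbb E[\max_t g(\theta_t)]$ under $\phi'$ with buffer $iBG$ equals its value under $\phi^\star$ with buffer $jBG$, so $\varphi(iBG)\ge\varphi(jBG)-\delta$ for every $\delta>0$. Letting $\delta\to0$ completes the argument.

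The main obstacle is Step 2. Two issues arise:
\begin{itemize}
\item The features are history dependent, because usage count and policy-perception metrics change with the selections made. The interpolation must therefore be done along the coupled trajectory, using the fact that under identical selections the features of shared rollouts are identical.
\item Ties and score gaps must be handled, which requires a strictly positive separation margin.
\end{itemize}
To address both, I would first treat the scheduler class as rich enough (by the universal approximation/NTK interpolation at width $m\ge C_1$) to represent any ranking on the finite representation set. I would then argue that the maximum defining $\varphi$ is attained, or else work with suprema, so that no explicit margin is needed.
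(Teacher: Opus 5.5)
Your strategy is the same as the paper's. You couple the two runs, use $\mathcal C_t^{(j)}\subseteq\mathcal C_t^{(i)}$, and build a $\phi'$ that is an order-preserving image of $s_{\phi^\star}$ on rollouts younger than $j$ rounds and strictly lower on older ones. The paper's label $y_h=(2U\,\mathbb I[h_s<j]+s_{\phi^\star}(h))/(3U)$ is your $\tilde s$ up to a positive affine map. Your strict choice of $c$ (any $c>\max s_{\phi^\star}-\min s_{\phi^\star}$) avoids the boundary tie at $y_h=1/3$ that the paper's offset $2U$ permits. Your explicit coupling of the $\epsilon$-greedy branch and your $\delta$-optimal fallback are details the paper skips.

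Step 2, however, does not go through as written, for two reasons. First, $\varphi(M)=\max_\phi\mathbb E[\max_t g(\theta_t)]$ fixes $\phi$ before the training randomness, so a single $\phi'$ must reproduce $\phi^\star$'s choices on every realization at once. Fitting $\tilde s$ on the realized $\{h_i\}_{i\in[TM]}$, with separation taken from Assumption~\ref{asump:ntk}, produces a realization-dependent $\phi'$, which is not an admissible scheduler. The missing observation, which the paper uses, concerns the set $H$ of all arm representations reachable under any realization and any selection history. Because $|\mathcal V|$, $L_{\max}$, $T$ and the training set are finite, $H$ is finite. Since $\tilde s$ depends on $h$ alone, you can fit it on all of $H$ simultaneously, and no distinctness assumption is needed.

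Second, the regression lemma from the regret analysis (Lemma~\ref{lemma:ntk_bound}) does not give your uniform approximation. It only bounds the summed squared error by $\tilde O((\sqrt{\tilde d}+S)^2\tilde d)$, which need not fall below any score gap. The paper instead invokes exact memorization (Lemma~\ref{lemma:memorization}, with $L=3$ and $m=2|H|$) and obtains $s_{\phi'}(h)=y_h$ on all of $H$. The targets must first be rescaled into $[-1,1]$, which is the role of the factor $1/(3U)$. Exactness is also what resolves your tie concern: an exact positive affine image of $s_{\phi^\star}$ preserves both ties and order among the recent rollouts. An approximate fit, by contrast, may break exact ties of $s_{\phi^\star}$ differently and change the selected set. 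Appealing to attainment of the maximum or to suprema does not address this.
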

\end{tcolorbox}
\theoremref{theorem:buffer_size} emphasizes the benefits of reusing historical rollouts, demonstrating that selectively leveraging rollouts from multiple prior rounds—not just the most recent round—can enhance the performance upper bound of the policy model. Next, we present the regret bound for \model.
\begin{tcolorbox}[
    colback=green!2,
    colframe=green!35!black,
    boxrule=0.6pt,
    arc=1.5pt,
    left=6pt,
    right=6pt,
    top=5pt,
    bottom=5pt
]
\begin{theorem}
Let $r=[r_1,\cdots,r_{TM}] \in \mathbb R^{TM}$ be the stacked reward vector. If \assumpref{asump:arm_reward} and \assumpref{asump:ntk} hold, for any $\delta \in (0,1)$, there exists a choice of scheduler hyperparameters $(m,\eta_2)$ and a decay schedule for the exploration probability $\epsilon_t$ such that, with probability at least $1-\delta$ over scheduler initialization,
\begin{equation*}
    \mathbb E[R_T] \leq \mathbb E \left[\tilde{O}\left(\sqrt{T} \left( \tilde{d} + S \sqrt{\tilde{d}}  \right) + \sqrt{T} \right) \right],
\end{equation*}
where $S=\sqrt{r^TH^{-1}r}$ and $\tilde{d}=\frac{\log \det(I+H)}{\log (1+TM)}$.
\label{theorem:regret}
\end{theorem}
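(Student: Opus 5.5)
The plan is to split the regret round by round according to whether the scheduler explores or exploits under the $\epsilon$-greedy rule \eqref{eq:selection}. Then control the exploitation rounds with standard NTK-based neural regression bounds, in the style of \cite{DBLP:conf/icml/Zhou0G20,DBLP:conf/iclr/ZhangZ0G21,ban2022eenet}. Write $\mathbb I_t$ for the indicator that round $t$ explores, which happens with probability $\epsilon_t$. Since rewards lie in $[0,1]$ after normalization (\assumpref{asump:arm_reward}), each exploration round contributes at most $1$. Hence
\begin{equation*}
\mathbb E[R_T] \le \sum_{t=1}^T \epsilon_t + \mathbb E\Big[\sum_{t=1}^T (1-\mathbb I_t)\big(r^t_{a_t^*} - r^t_{a_t}\big)\Big].
\end{equation*}
I will choose a decay schedule with $\epsilon_t \lesssim 1/\sqrt t$ (for example $\epsilon_{\min}=0$, $\Delta_\epsilon$ tuned, and a warmup $T_w=O(\sqrt T)$). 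This makes the first sum $O(\sqrt T)$, which is the standalone $\sqrt T$ term in the statement.

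On an exploitation round the scheduler picks $a_t=\arg\max_a s_{\phi_{t-1}}(h_a^t)$, so $s_{\phi_{t-1}}(h^t_{a_t^*})\le s_{\phi_{t-1}}(h^t_{a_t})$. This gives
\begin{equation*}
r^t_{a_t^*}-r^t_{a_t} \le \big|r^t_{a_t^*}-s_{\phi_{t-1}}(h^t_{a_t^*})\big| + \big|s_{\phi_{t-1}}(h^t_{a_t})-r^t_{a_t}\big|,
\end{equation*}
and it suffices to bound the scheduler's prediction error on arbitrary arms. I will invoke the NTK linearization results. For $m \ge \mathrm{poly}(T,M,L,\lambda_0^{-1},\log(1/\delta))$ and $\eta_2$ of order $1/(mTM)$, with probability $1-\delta$ over initialization the following hold. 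First, there is $\phi^*$ with $g(\mathcal A(x_i,\cdot))=\langle \nabla s_{\phi_0}(h_i),\phi^*-\phi_0\rangle$ and $\sqrt m\|\phi^*-\phi_0\|_2\le \sqrt 2 S$; this uses \assumpref{asump:ntk} and $S=\sqrt{r^\top H^{-1}r}$. Second, gradient descent on \eqref{eq:scheduler_learning} keeps $\phi_{t-1}$ near the ridge-regression solution in the gradient feature space. Third, $|s_{\phi_{t-1}}(h)-\langle \nabla s_{\phi_0}(h),\phi^*-\phi_0\rangle| \le \gamma_t\|\nabla s_{\phi_0}(h)/\sqrt m\|_{Z_{t-1}^{-1}}+O(\text{approx})$, where $Z_{t-1}=\lambda I+\sum_{\tau<t}\nabla s\nabla s^\top/m$ and $\gamma_t=\tilde O(\sqrt{\tilde d}+S)$. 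The observed $r_t$ is only a bounded noisy estimate of $g$; this is supplied by \lemmaref{lemma:bounded_reward}, which makes the noise sub-Gaussian, so a self-normalized martingale bound gives the confidence radius.

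The main obstacle is that $\epsilon$-greedy does not play optimistically. The exploitation error is paid at $a_t^*$, which is never observed, and not at the played arm, so the elliptical potential lemma cannot be applied directly to $\sum_t\|\nabla s(h_{a_t^*})\|_{Z^{-1}}$. My first attempt is to bound both error terms by $\gamma_t\max_{a}\|\nabla s_{\phi_0}(h_a^t)/\sqrt m\|_{Z_{t-1}^{-1}}$. Then I would use the fact that on each exploration round the played arm, the freshest sample, is independent of the scheduler. Over the $\Theta(\sqrt T)$ expected exploration rounds this adds design directions; together with the regression on all observed arms, it gives $\sum_t \max_a\|\cdot\|_{Z_{t-1}^{-1}}\le \sqrt{T\cdot 2\tilde d\log(1+TM)}$ through the determinant identity $\log\det(I+H)=\tilde d\log(1+TM)$. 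If that coupling fails, I would fall back on the expectation form of the statement: take expectations over the arm randomness and bound the error at the unplayed arm through a Cauchy--Schwarz/online-to-batch conversion, following \cite{ban2022eenet}.

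Either route gives the exploitation part as $\tilde O(\gamma_T\sqrt{T\tilde d})=\tilde O(\sqrt T(\tilde d+S\sqrt{\tilde d}))$, with the linearization errors made $O(1)$ by taking $m$ large. Adding the $O(\sqrt T)$ exploration cost and rescaling by the constant $2A$ from the reward normalization gives the stated bound.
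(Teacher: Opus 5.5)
Your exploration/exploitation split and the $O(\sqrt T)$ exploration cost are fine and match the paper. The paper takes $T_w=1$, $\epsilon_0=1$, $\Delta_\epsilon=T^{-1/2}$, $\epsilon_{\min}=0$ to get $\sum_t\epsilon_t\le 2\sqrt T$; a linear schedule does not literally give $\epsilon_t\lesssim 1/\sqrt t$, but only the sum matters. The gap is in the exploitation term, exactly at the step you flag, and neither of your routes closes it.

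First, you import the ridge-tracking and ellipsoidal confidence bound from the NeuralUCB analysis \cite{DBLP:conf/icml/Zhou0G20}. Those results are proved for a learner that, every round, runs many gradient-descent iterations on a regularized loss over the \emph{entire} observed history. The scheduler here takes one unregularized gradient step on the current round's samples \eqref{eq:scheduler_learning}, so nothing ties $\phi_{t-1}$ to the ridge solution defined by $Z_{t-1}$.

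Second, even granting that confidence bound, the elliptical potential lemma does not control $\sum_t\max_a\|\nabla s_{\phi_0}(h_a^t)/\sqrt m\|_{Z_{t-1}^{-1}}$, because $Z_{t-1}$ only accumulates played arms. If $h^t_{a_t^*}$ keeps a component outside the span of what was played, the term stays of constant order every round and the bound becomes linear in $T$. Exploration in \eqref{eq:selection} cannot repair this. It selects by sample age rather than to diversify the design. With $\epsilon_{\min}=0$, as in your example, it switches off after $O(\sqrt T)$ rounds, while every later round brings fresh contexts generated by the current policy. Standard $\epsilon$-greedy analyses without optimism or a diversity condition only reach $T^{2/3}$-type rates.

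Your fallback, an EE-Net-style online-to-batch conversion, needs arm--reward pairs drawn i.i.d.\ from a fixed distribution. That fails here, because the rollouts and their features are produced by a policy trained on your own earlier selections.

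The missing idea is the one the paper uses to avoid confidence sets altogether: compare against a single parameter fit on \emph{all} $TM$ arms, played or not.
\lemmaref{lemma:ntk_bound} provides $\bar\phi\in B_R(\phi_0)$ with $\mathcal Q_T=\sum_{t}\sum_{a}(s_{\bar\phi}(h_a^t)-r_a^t)^2\le\tilde O((\sqrt{\tilde d}+S)^2\tilde d)$.
Take $\eta_2=R^2/\sqrt m$ and $m\ge\Omega(T^4R^4L^2)$. Then the gradient bound of \lemmaref{lemma:uniform_local_boundedness} keeps every online iterate $\phi_t$ in the same ball of radius $R/m^{1/4}$. On that ball, \lemmaref{lemma:uniform_local_perturbation} gives $|s_\phi(h)-s_{\phi'}(h)|\le\Delta_m$ with $T\Delta_m=O(1)$.
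Hence at every arm, including the never-observed $a_t^*$, $|s_{\phi_{t-1}}(h)-r|\le|s_{\bar\phi}(h)-r|+\Delta_m$. Cauchy--Schwarz then gives $\sum_t|s_{\bar\phi}(h^t_{a_t^*})-r^t_{a_t^*}|\le\sqrt{T\mathcal Q_T}=\tilde O(\sqrt T(\tilde d+S\sqrt{\tilde d}))$, and likewise for the greedy arm. The paper's auxiliary trajectory $\phi_t^*$ only adds another $O(1)$ through the same perturbation bound.
This argument uses only that $\phi_{t-1}$ stays in the lazy-training ball, not what it has learned from feedback. That is exactly why it needs neither coverage of the direction of $a_t^*$ nor an elliptical potential argument.
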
 
\end{tcolorbox}
\theoremref{theorem:regret} provides a more fine-grained characterization of \model. First, it shows that the regret increases sublinearly with the time horizon $T$, achieving a data-dependent $\tilde{O}(\sqrt{T})$ regret bound. Second, it reveals that the regret is intrinsically related to the effective dimension $\tilde{d}$ of the neural tangent kernel (NTK) and the geometric structure of the reward captured by $S$, where simpler feature structures (small $\tilde{d}$) and better-aligned rewards (small $S$) will lead to a smaller regret. Notably, in favorable structured regimes, $\tilde{d}$ can remain controlled. For example, \cite{DBLP:conf/icml/Zhou0G20} (Remark 4.4) and \cite{DBLP:conf/iclr/ZhangZ0G21} (Remark 3.3) show that when the NTK matrix of arm representations has an underlying low-dimensional structure, $\tilde{d}$ can remain bounded.


\section{Experimental Results}

\begin{table*}[!t]
\centering
\caption{Performance of different RLVR methods when equipped with \model, where \model~indicates the global scheduler and the \model* indicates the intra-group scheduler. Relative improvement (Rel. Impr.) is computed w.r.t. the corresponding policy optimization method, based on the \textit{Average} score.}
\label{tab:main_results}
\resizebox{0.88\linewidth}{!}{%
\begin{tabular}{ccccccccc}
\toprule
\textbf{Method} & \textbf{AIME24} & \textbf{AIME25} & \textbf{AMC23} & \textbf{MATH500} & \textbf{Minerva} & \textbf{Olympiad} & \textbf{Average} & \textbf{Rel. Impr.} \\
\midrule

\multicolumn{9}{c}{\textit{Qwen3-1.7B-Base}} \\
\cmidrule(lr){1-9}
Base Model & 1.77 & 0.73 & 15.31 & 29.10 & 9.83 & 11.09 & 11.31 & -- \\
GRPO & 9.58 & 4.38 & 42.97 & 60.95 & 26.29 & 29.30 & 28.91 & -- \\
\rowcolor{gray!10}GRPO + \model & 8.12 & 3.12 & 45.78 & 63.90 & 28.95 & 31.16 & 30.17 & +4.4\% \\
\rowcolor{gray!10}GRPO + \model * & 11.98 & 7.81 & 45.47 & 64.80 & 26.10 & 30.30 & \textbf{31.08} & +7.5\% \\
DAPO & 6.35 & 4.69 & 37.97 & 58.70 & 20.59 & 27.23 & 25.92 & -- \\
\rowcolor{gray!10}DAPO + \model & 8.96 & 4.90 & 46.48 & 63.65 & 26.01 & 29.97 & \textbf{30.00} & +15.7\% \\
\rowcolor{gray!10}DAPO + \model * & 10.52 & 6.04 & 45.16 & 62.45 & 24.82 & 31.01 & 30.00 & +15.7\% \\
GSPO & 10.73 & 5.52 & 43.52 & 65.95 & 27.94 & 31.53 & 30.87 & -- \\
\rowcolor{gray!10}GSPO + \model & 11.35 & 5.83 & 46.72 & 65.70 & 28.22 & 33.09 & 31.82 & +3.1\% \\
\rowcolor{gray!10}GSPO + \model * & 12.50 & 7.08 & 43.36 & 67.50 & 30.79 & 33.23 & \textbf{32.41} & +5.0\% \\

\cmidrule(lr){1-9}
\multicolumn{9}{c}{\textit{Qwen3-4B-Base}} \\
\cmidrule(lr){1-9}
Base Model & 5.10 & 3.65 & 21.33 & 31.90 & 10.85 & 17.54 & 15.06 & -- \\
GRPO & 21.04 & 15.21 & 61.56 & 73.70 & 33.55 & 42.32 & 41.23 & -- \\
\rowcolor{gray!10}GRPO + \model & 20.31 & 18.44 & 63.05 & 74.45 & 34.47 & 43.73 & \textbf{42.41} & +2.9\% \\
\rowcolor{gray!10}GRPO + \model * & 20.52 & 16.56 & 62.19 & 75.55 & 34.93 & 43.43 & 42.20 & +2.4\% \\
DAPO & 16.88 & 17.40 & 62.97 & 75.05 & 31.71 & 43.95 & 41.33 & -- \\
\rowcolor{gray!10}DAPO + \model & 20.62 & 20.94 & 62.89 & 75.85 & 36.58 & 46.70 & \textbf{43.93} & +6.3\% \\
\rowcolor{gray!10}DAPO + \model * & 18.96 & 17.81 & 63.83 & 76.30 & 34.83 & 44.88 & 42.77 & +3.5\% \\
GSPO & 18.44 & 15.21 & 67.34 & 77.55 & 41.18 & 44.92 & 44.11 & -- \\
\rowcolor{gray!10}GSPO + \model & 21.46 & 18.33 & 67.03 & 76.05 & 36.49 & 44.18 & 43.92 & -0.4\% \\
\rowcolor{gray!10}GSPO + \model * & 21.25 & 19.17 & 66.02 & 77.85 & 34.83 & 46.55 & \textbf{44.28} & +0.4\% \\

\cmidrule(lr){1-9}
\multicolumn{9}{c}{\textit{Qwen3-8B-Base}} \\
\cmidrule(lr){1-9}
Base Model & 8.33 & 6.56 & 35.78 & 53.25 & 19.67 & 26.22 & 24.97 & -- \\
GRPO & 20.52 & 17.29 & 67.34 & 76.15 & 37.50 & 47.14 & 44.32 & -- \\
\rowcolor{gray!10}GRPO + \model & 23.65 & 20.83 & 69.45 & 77.45 & 38.79 & 48.59 & 46.46 & +4.8\% \\
\rowcolor{gray!10}GRPO + \model * & 26.46 & 19.48 & 69.92 & 77.45 & 38.88 & 47.59 & \textbf{46.63} & +5.2\% \\
DAPO & 22.29 & 19.27 & 73.20 & 77.65 & 37.78 & 48.29 & 46.41 & -- \\
\rowcolor{gray!10}DAPO + \model & 23.54 & 21.88 & 75.55 & 78.25 & 39.71 & 49.41 & \textbf{48.06} & +3.6\% \\
\rowcolor{gray!10}DAPO + \model * & 24.38 & 19.58 & 68.52 & 76.85 & 37.50 & 46.96 & 45.63 & -1.7\% \\
GSPO & 27.92 & 19.38 & 73.20 & 79.20 & 40.53 & 48.37 & 48.10 & -- \\
\rowcolor{gray!10}GSPO + \model & 23.02 & 20.73 & 73.83 & 83.10 & 45.13 & 49.93 & \textbf{49.29} & +2.5\% \\
\rowcolor{gray!10}GSPO + \model * & 25.10 & 20.52 & 72.34 & 79.65 & 42.00 & 51.08 & 48.45 & +0.7\% \\

\bottomrule
\end{tabular}%
}
\end{table*}

\subsection{Experimental Settings}
\textbf{Models and datasets.} We conduct experiments using Qwen3 Base models \cite{DBLP:journals/corr/abs-2505-09388} of various sizes, namely Qwen3-1.7B-Base, Qwen3-4B-Base, and Qwen3-8B-Base. Following \cite{DBLP:journals/corr/abs-2509-02333}, the training set is constructed by combining the DAPO dataset \cite{DBLP:dapo} and a subset of the Math dataset \cite{DBLP:conf/nips/HendrycksBKABTS21} containing level 3-5 problems, yielding a total of 21723 mathematical problems. For evaluation, we adopt a suite of benchmark datasets, including AIME24, AIME25, AMC23, MATH500 \cite{DBLP:conf/nips/HendrycksBKABTS21}, Minerva \cite{DBLP:conf/nips/LewkowyczADDMRS22}, and Olympiad \cite{DBLP:conf/acl/HeLBHTSHHHZLQL024}. 

\textbf{Implementation details.} We implement \model~based on verl \cite{DBLP:conf/eurosys/ShengZYWZZPL025}. Maximum response length is set to 4096 for all experiments. Following \cite{DBLP:journals/corr/abs-2510-02245,DBLP:journals/corr/abs-2509-02333}, we report Avg@4 for datasets with larger problems (i.e., MATH500, Minerva, Olympiad) and Avg@32 for datasets with smaller problems (i.e., AIME24, AIME25 and AMC23). More details regarding the experimental setup are provided in the \appendixref{appdx:experimental_settings}.  The code is available at \url{https://github.com/lxd99/CBS_public}.

\subsection{Main Results} \label{sec:main_results}
\textbf{Intra-group scheduling improves both performance and computational efficiency.} 
As shown in \tabref{tab:main_results}, integrating \model*~into representative policy optimization methods consistently yields substantial performance gains across different model sizes, demonstrating the effectiveness and robustness of our adaptive selection strategy. In particular, \model*~achieves average relative improvements of $5.0\%$, $5.8\%$, and $2.0\%$ for GRPO, DAPO, and GSPO, respectively. Moreover, \figref{fig:train_dynamics}(b) shows that \model*~reduces the average policy optimization time by 50.4\% across the three methods, as fewer training trajectories are required per actor update. These results indicate that prioritizing a subset of high-value rollouts provides a more effective and efficient paradigm for RLVR training compared to using the full rollout set.

\textbf{Global scheduling further improves training efficiency.} 
As shown in \tabref{tab:main_results}, \model~ also consistently enhances different policy optimization methods, yielding relative improvements of $4.0\%$, $8.5\%$, and $1.7\%$ for GRPO, DAPO, and GSPO, respectively. Both \model~and \model*~continue to improve performance even when a rule-based data filter is applied (e.g., DAPO with dynamic sampling), suggesting that heuristic selection rules are insufficient to capture the complex and dynamically evolving utility of rollout data during training. Furthermore, \figref{fig:train_dynamics} (a) shows that \model~achieves higher training rewards than both \model*~and the original policy optimization baseline, indicating that appropriately reusing historical rollouts can further improve training efficiency. Importantly, the computational overhead of the scheduler is negligible: as shown in \tabref{tab:buffer_cost}, buffer-related operations (e.g., scheduler updates and arm representation construction) account for less than $4\%$ of total training time and decrease further as model size increases. Due to space limitations, more training dynamics curves are provided in \appendixref{appdx:training_dynamics}.

\begin{figure}[htbp]
    \centering
        \vspace{0pt}
        \centering
        \begin{subfigure}{\linewidth}
            \centering
            \includegraphics[width=0.32\linewidth]{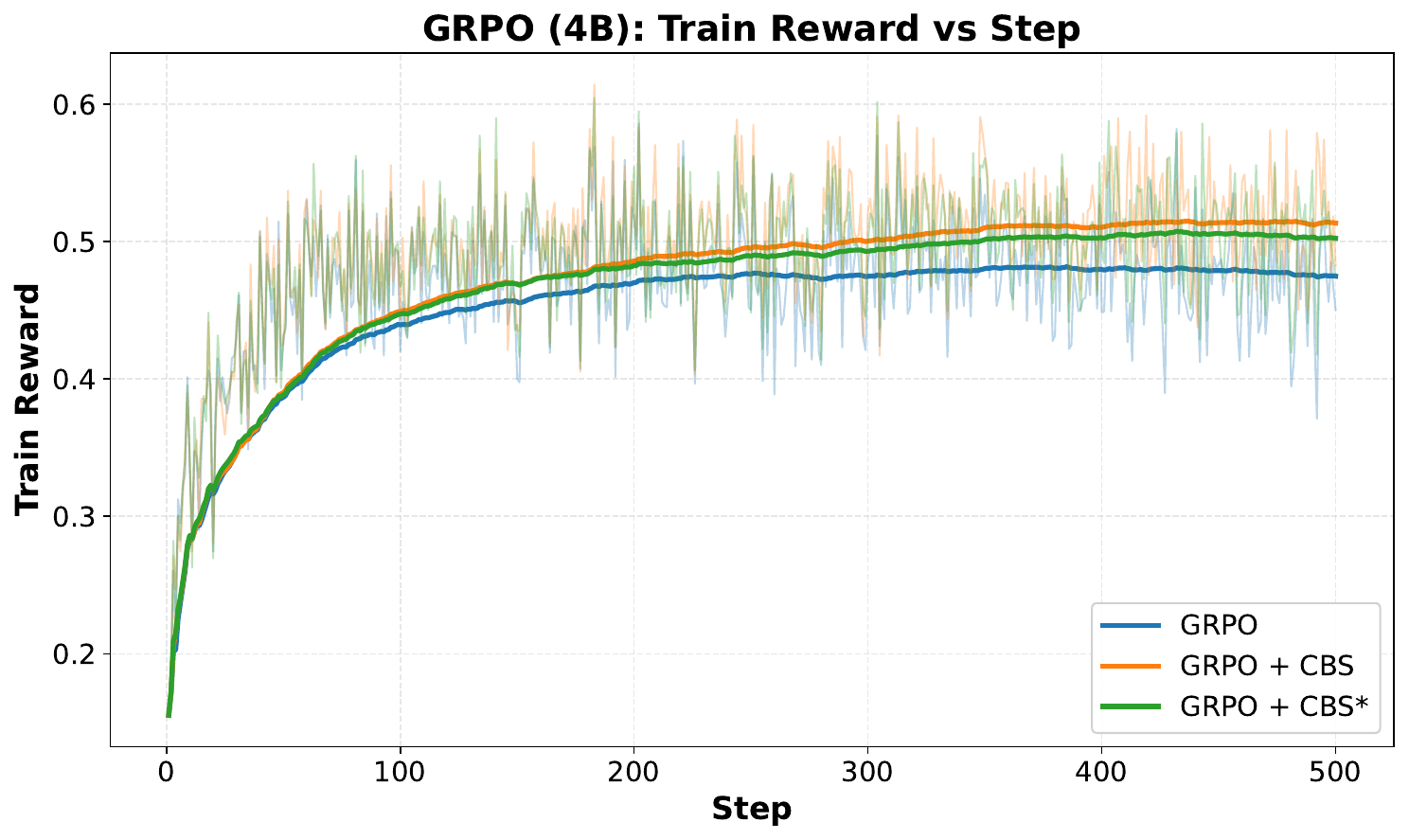}\hfill
            \includegraphics[width=0.32\linewidth]{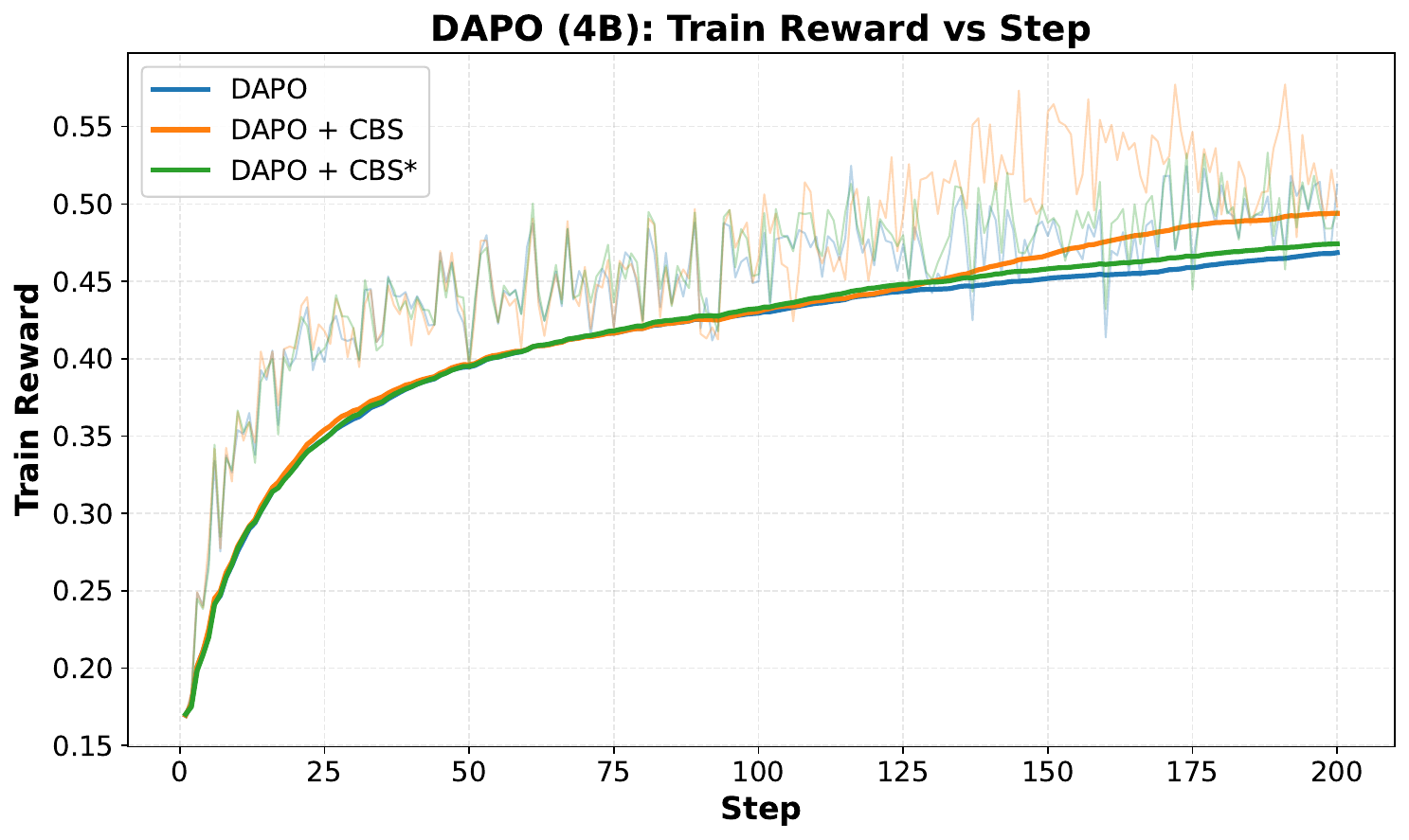}\hfill
            \includegraphics[width=0.32\linewidth]{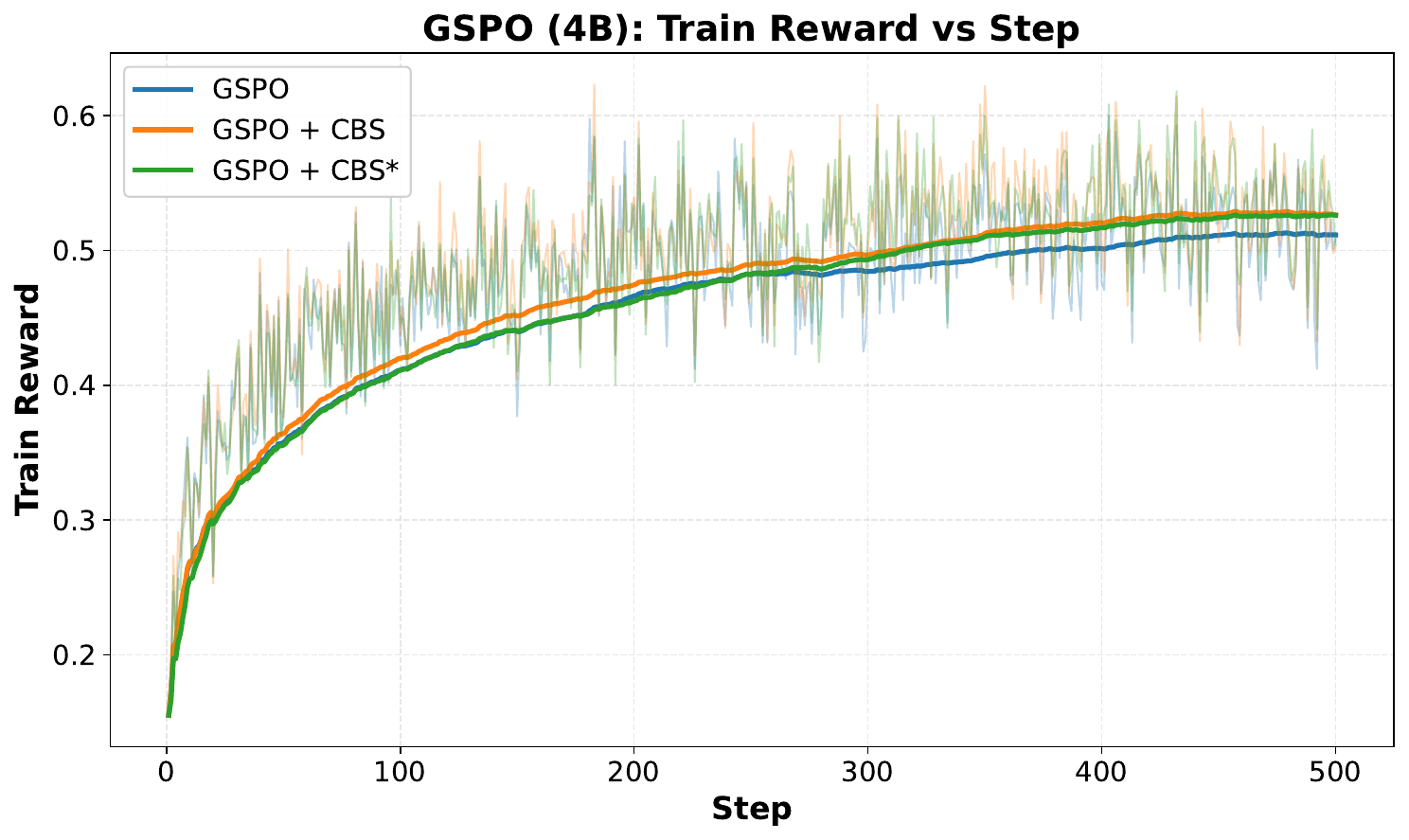}
            \vspace{-2mm}
            \caption{Training reward at each step.}
        \end{subfigure}

        \begin{subfigure}{\linewidth}
            \centering
            \includegraphics[width=0.32\linewidth]{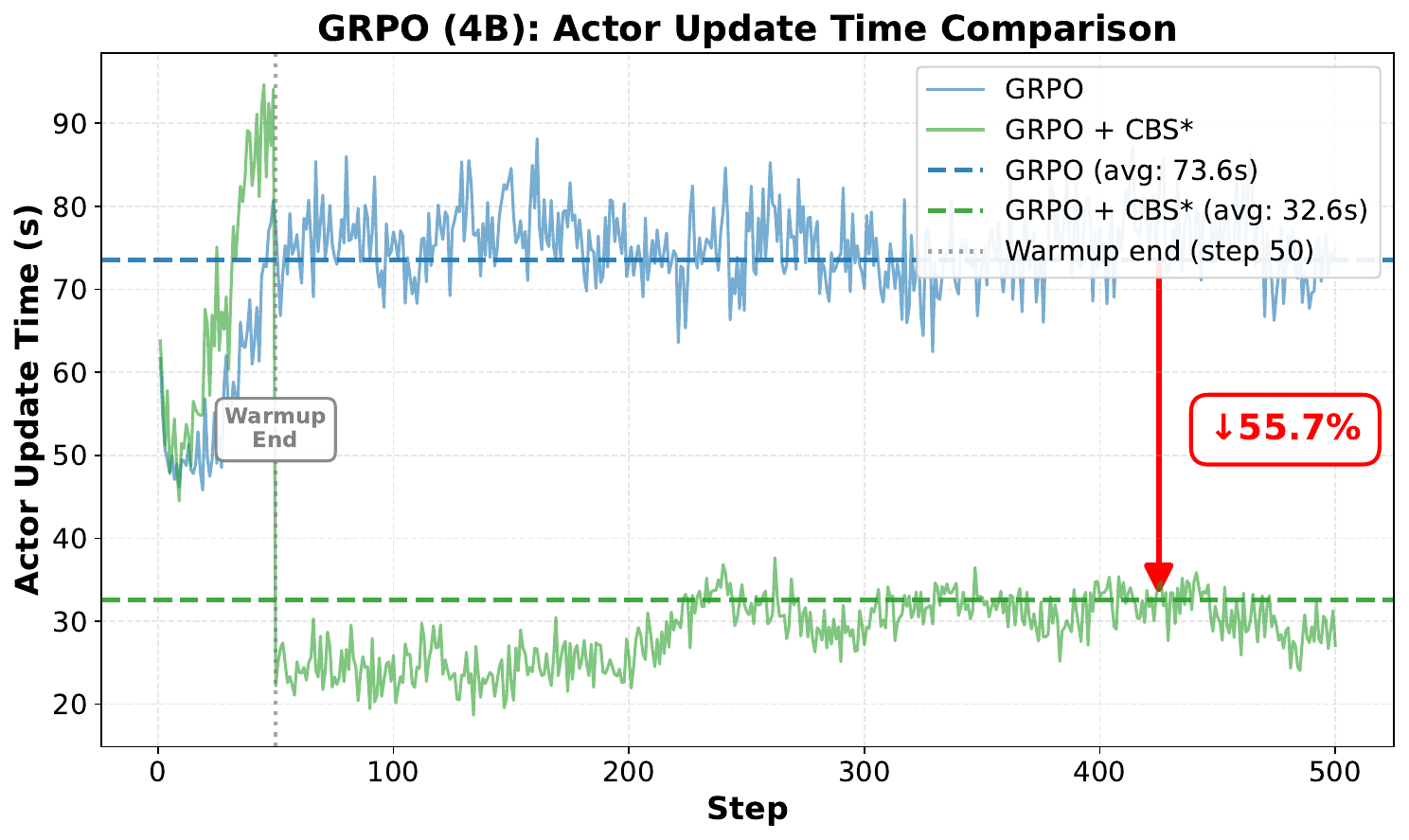}\hfill
            \includegraphics[width=0.32\linewidth]{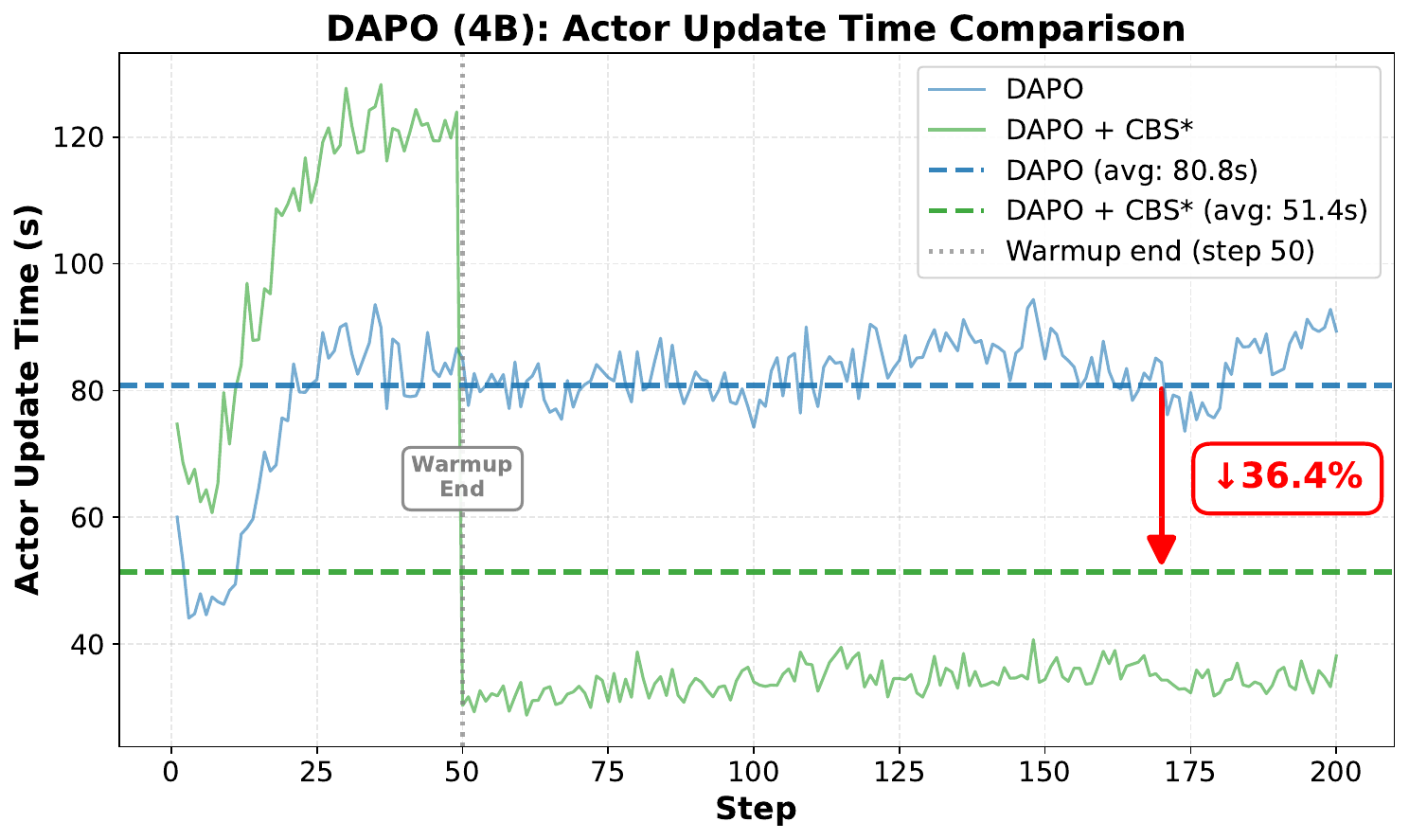}\hfill
            \includegraphics[width=0.32\linewidth]{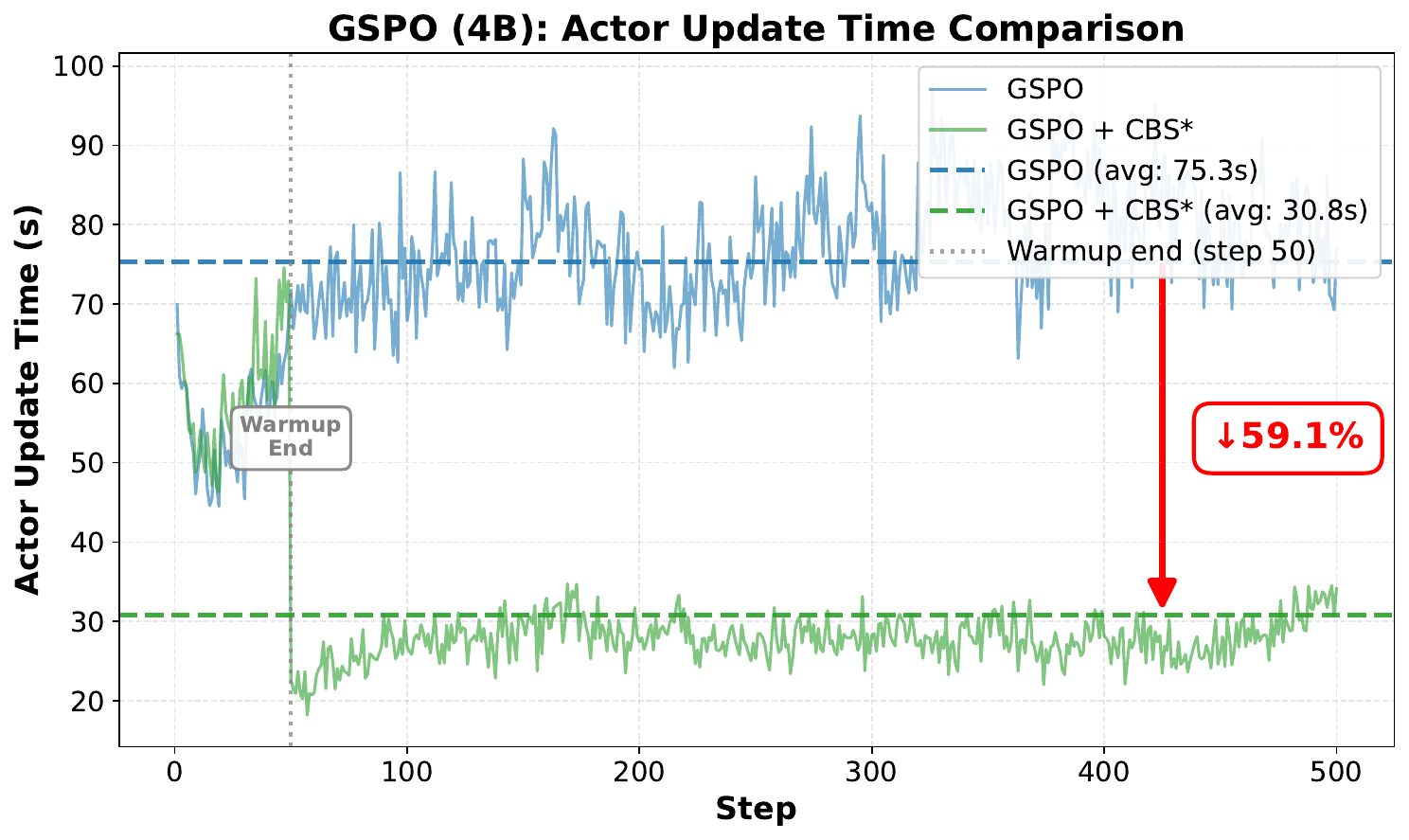}
            \vspace{-1mm}
            \caption{Policy optimization at each step, where the horizontal dashed line denotes the mean optimization time.}
        \end{subfigure}
        \caption{Train dynamics of different RLVR methods.}
        \label{fig:train_dynamics}
\end{figure}

\begin{table}[h]
    \centering
    \caption{Buffer-related runtime overhead (\% of total time)}
    \label{tab:buffer_cost}
    \resizebox{0.8\columnwidth}{!}{
    \begin{tabular}{ccccc}
        \toprule
        Scheduler & Method & 1.7B & 4B & 8B \\
        \midrule
        \model & GRPO & 3.83 & 1.45 & 1.15 \\
               & DAPO & 2.88 & 1.37 & 1.00 \\
               & GSPO & 1.97 & 1.79 & 1.56 \\ 
        \midrule
        \model* & GRPO & 1.70 & 0.92 & 0.84 \\
                & DAPO & 1.95 & 0.83 & 0.69 \\
                & GSPO & 1.98 & 0.90 & 0.79 \\
        \bottomrule
    \end{tabular}
}
\end{table}

\begin{figure}[htbp]
    \centering
    \includegraphics[width=0.7\linewidth]{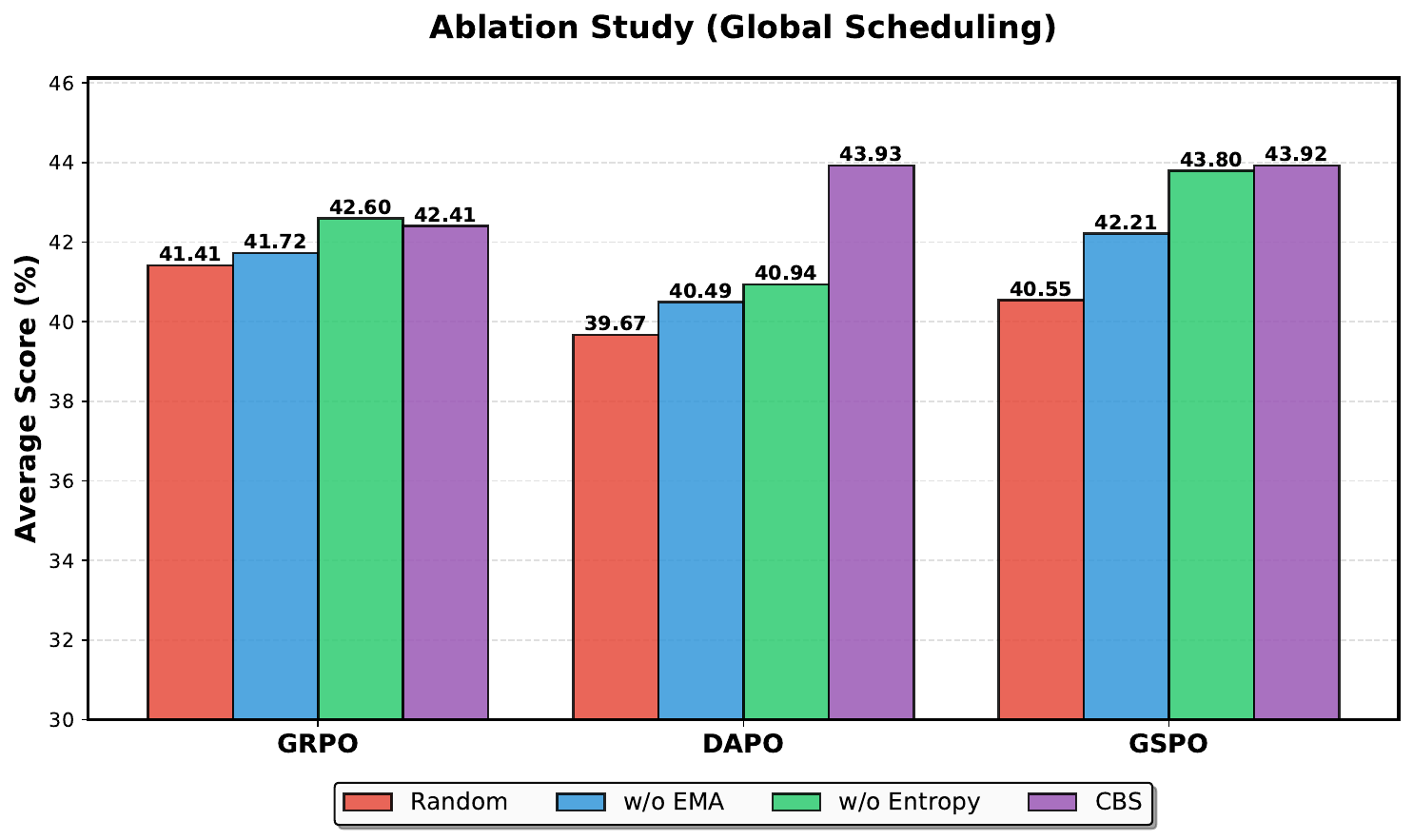}
    \caption{Ablation study results on Qwen3-4B-Base.}
    \label{fig:ablation_study}
\end{figure}

\subsection{Ablation Study} \label{sec:ablation_study}
To validate the effectiveness of the proposed submodules, we conduct an ablation study by comparing \model~with the following variants: (1) replacing the neural scheduler with random selection, denoted as \textit{Random}; (2) replacing the EMA reward gain in \equref{eq:ema_reward_gain} with the raw reward gain (i.e., $V_{t+1} - V_{t}$), denoted as \textit{w/o EMA}; and (3) removing the entropy penalty in \equref{eq:group_reward}, denoted as \textit{w/o Entropy}. As shown in \figref{fig:ablation_study}, \model~generally achieves superior performance compared to other variants, demonstrating the effectiveness of the proposed submodules. Notably, \textit{Random} degrades the performance of all policy optimization methods, suggesting that data selection is non-trivial, as naive random sampling may introduce unstable or conflicting training signals. In \figref{fig:dapo_entropy_comparison}, we further compare the entropy dynamics of \textit{w/o Entropy} and \model. The results show that, without the entropy penalty, the entropy of \textit{w/o Entropy} keeps increasing and the performance deteriorates accordingly, highlighting the necessity of the entropy penalty term. More ablation study results can be found in \appendixref{appdx:ablation_study}.

\begin{figure}
    \centering
    \includegraphics[width=0.9\linewidth]{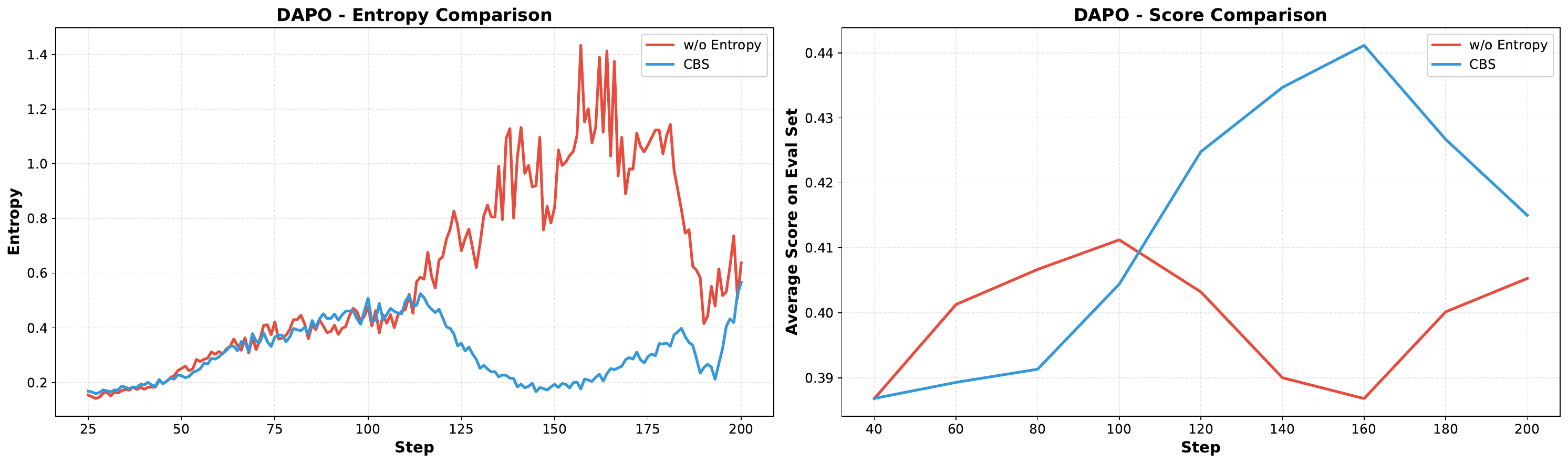}
    \caption{Comparison of Entropy and average score on the evaluation set for \model~and w/o Entropy.}
    \label{fig:dapo_entropy_comparison}
\end{figure}

\textbf{Other experimental results.} Due to space limitations, we put other experimental results in the Appendix, including hyperparameter sensitivity analysis (\appendixref{appdx:hyperparamter_sensitivity}) and scheduler pattern analysis (\appendixref{appdx:case_study}).

\section{Conclusion}
In this paper, we present \model, a contextual bandit-based scheduler that supports adaptive selection of high-value rollouts for both intra-group and global rollout scheduling problems. By encoding each rollout as a 10-dimensional training dynamics vector and using an MLP-based scheduler that learns adaptively from online policy feedback, \model~effectively prioritizes trajectories with high optimization potential, thereby significantly improving both final performance and training efficiency across three popular RLVR methods. Currently, \model~relies on a lightweight 10-dimensional representation of training dynamics for computational efficiency. Future work could explore integrating deep semantic representations from the LLM policy to uncover more nuanced selection patterns, further unlocking the potential of adaptive rollout scheduling.

\section*{Impact Statement}
This paper presents an adaptive rollout scheduler whose goal is to advance the field of Reinforcement Learning with Verifiable Rewards. There are many potential societal consequences of our work, none of which we feel must be specifically highlighted here.

\section*{Acknowledgements}
This research was supported by NSFC (No. 62276015),  NSFC (No. 62506024) and GW2025-09.

\nocite{langley00}

\bibliography{reference}
\bibliographystyle{icml2026}

\newpage
\appendix
\onecolumn
\section{Introduction of Group Relative Policy Optimization Algorithms} \label{appdx:detailed_po_introduction}

\begin{figure}[htbp]
    \centering
    \includegraphics[width=0.9\linewidth]{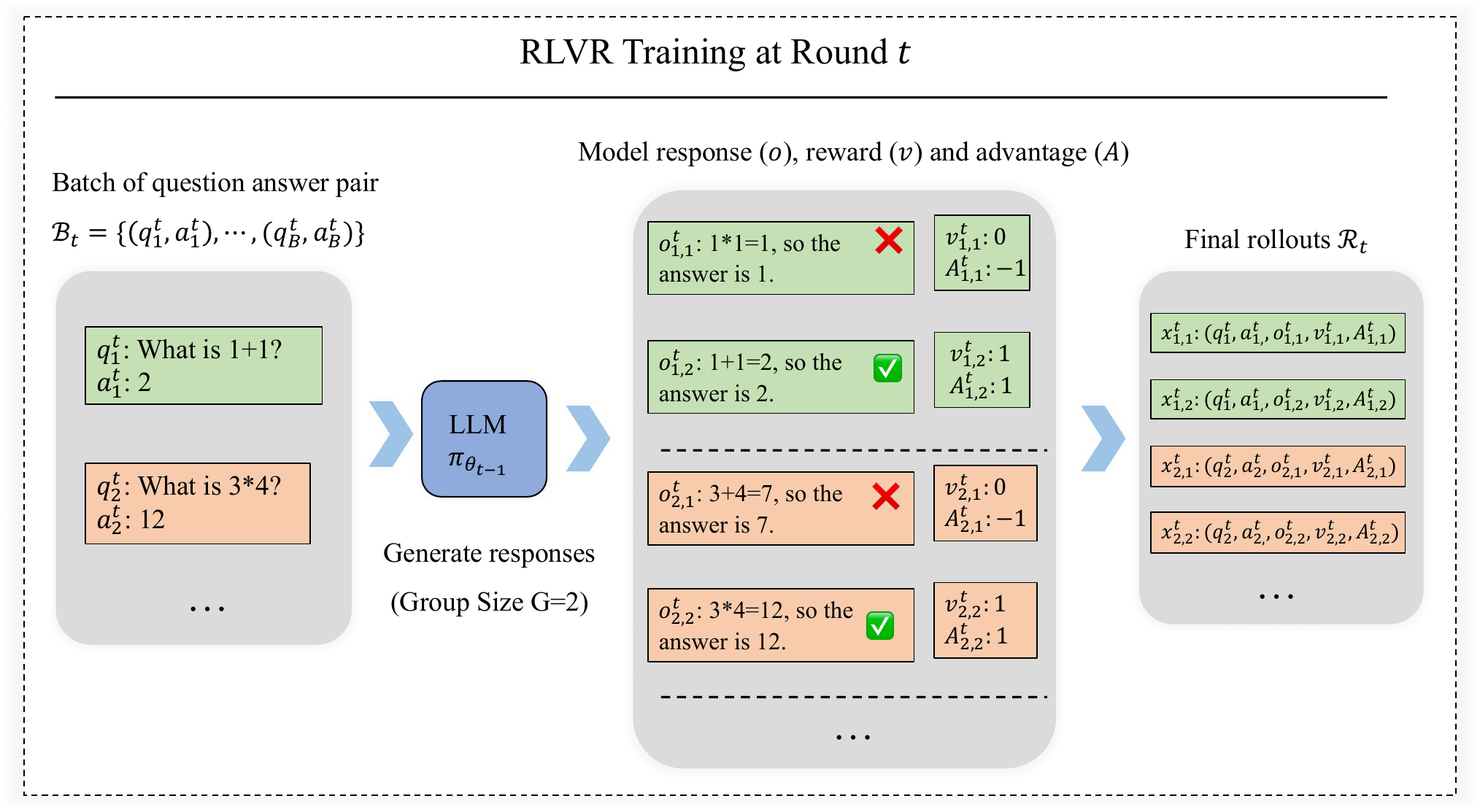}
    \caption{Pipeline of RLVR training. At round $t$, a batch of prompt-answer pairs $\mathcal{B}_t = \{(q_i^t, a_i^t)\}_{i \in [B]}$ is first sampled. The current policy $\pi_{\theta_{t-1}}$ then generates a group of responses $\{o_{i,j}^t\}_{j \in [G]}$ for each question $q_i^t$. The corresponding rewards $\{v_{i,j}^t\}_{j \in [G]}$ and advantages $\{A_{i,j}^t\}_{j \in [G]}$ are computed according to responses $\{o_{i,j}^t\}_{j \in [G]}$ and answers $a_i^t$, and the resulting rollout instances together form $\mathcal{R}_t = \{x_{i,j}^t\}_{i \in [B],\, j \in [G]}$.}
    \label{fig:RLVR_pipeline}
\end{figure}

\subsection{Target of Different Policy Optimization Methods}

Reinforcement Learning is a technique that has been widely applied across many domains \cite{chen2024fairgap,chen2025fairdgcl,chen2026learning,chen2026learning1,yuan2025hyperbolic}. Recently, RLVR has become a critical method for improving the reasoning ability of LLMs  \cite{lin2026resrlboostingllmreasoning,lin2026awpoenhancingtooluselarge,lin2026rest,DBLP:journals/corr/abs-2604-00860,DBLP:journals/corr/abs-2603-21574,DBLP:journals/corr/abs-2603-21563,DBLP:journals/corr/abs-2603-02604,DBLP:journals/corr/abs-2603-08035}. In this section, we introduce three representative policy optimization methods.

\textbf{GRPO (Group Relative Policy Optimization)} is a popular method for policy optimization in language model post-training. It optimizes the policy $\pi_{\theta}$ by maximizing
\begin{equation}
    \begin{aligned}
          \mathcal J_{\theta}  & = \mathbb E_{(q,a) \sim D, \{o_i\}_{i=1}^G \sim \pi_{{old}}(\cdot|q)} \Bigg[ \frac{1}{G} \sum_{i=1}^{G} \frac{1}{|o_i|} \sum_{t=1}^{|o_i|} \min (A_i r_{i,t}(\theta), A_i \text{clip}(r_{i,t}(\theta), 1-\epsilon_{\text{low}}, 1+\epsilon_{\text{high}}) ) \Bigg], 
    \end{aligned}
\end{equation}
where the $\text{clip}(r,p_{\text{low}},p_{\text{high}})=  \max\!\big(p_{\text{low}},\, \min(r,\, p_{\text{high}})\big)$ is a hard clip function. The GRPO can be considered as a variant of \equref{eq:general_po} by setting $\Phi$ to
\begin{equation}
    \Phi(A,r_{i,t}(\theta),\epsilon_{\text{low}},\epsilon_{\text{high})} =  \min (A_i r_{i,t}(\theta), A_i \text{clip}(r_{i,t}(\theta), 1-\epsilon_{\text{low}}, 1+\epsilon_{\text{high}}) ).
\end{equation}
 
\textbf{Dynamic Sampling Policy Optimization (DAPO)}  introduces four key improvements over GRPO: 1) \textit{Dynamic Sampling}, which filters out zero-advantage responses; 2) \textit{Clip Higher}, which increases the clip upper bound $\epsilon_{\text{high}}$; 3) \textit{Overlong Filter}, which shapes the reward by penalizing overly long responses; and 4) \textit{Token-Level Loss}, which normalizes the loss by the total token length across the group of responses. By combining these components, the policy $\pi_{\bm\theta}$ is optimized by maximizing
\begin{equation}
    \begin{aligned}
        \mathcal{J}_{\theta} &= \mathbb{E}_{(q,a)\sim \mathcal{D}, \{o_i\}_{i=1}^G \sim \pi_{\text{old}} (\cdot|q)} \Bigg[ \frac{1}{\sum_{i=1}^G \left| o_i \right|} \sum_{i=1}^G \sum_{t=1}^{|o_i|} \min \Big( A_i r_{i,t}(\theta), \text{clip}(  r_{i,t}(\theta),  1-\epsilon_{\text{low}}, 1+\epsilon_{\text{high}})  A_i \Big) \Bigg] \\   & \text{s.t.} \quad 0 < \left| \{ o_i | \text{is\_equivalent}(a, o_i) \} \right| < G
    \end{aligned}
\end{equation}
The policy optimization function of DAPO can be considered as the same \equref{eq:general_po} variant of GRPO. 

\textbf{Group Sequence Policy Optimization (GSPO)} adopts the sequence-level clip rather than the token-level clip, specifically, it optimizes the policy by maximizing
\begin{equation}
    \begin{aligned}
    \mathcal{J}_\theta &= \mathbb{E}_{(q,a) \sim \mathcal{D}, \{o_i\}_{i=1}^G \sim \pi_{\text{old}} (\cdot | q)} \Bigg[ \frac{1}{G} \sum_{i=1}^G \min \Big( A_i s_i(\theta), A_i\text{clip}(s_i(\theta), 1-\epsilon_{\text{low}}, 1+\epsilon_{\text{high}}) \Big) \Bigg],
    \end{aligned} \label{eq:gspo_sequence}
\end{equation}
where $s_i(\theta) = \exp \left( \frac{1}{|o_i|} \sum_{t=1}^{|o_i|} \log \frac{\pi_{\theta}(o_{i,t} | q, o_{i,< t})}{\pi_{\theta_{old}}(o_{i,t} | q, o_{i < t})} \right)
$ is the normalized sequence-level importance sampling weight. \equref{eq:gspo_sequence} can also be written into a token form \cite{DBLP:journals/corr/abs-2507-18071} of 
\begin{equation}
    \begin{aligned}
    \mathcal{J}_\theta &= \mathbb{E}_{(q,a) \sim \mathcal{D}, \{o_i\}_{i=1}^G \sim \pi_{\text{old}} (\cdot | q)} \Bigg[ \frac{1}{G} \sum_{i=1}^G  \frac{1}{|o_i^t|} \sum_{t=1}^{|o_i^t|} \min(A_{i}s_{i,t}(\theta),A_i \text{clip}(s_{i,t}(\theta),1-\epsilon_{\min},1+\epsilon_{\max})) \Bigg],
    \end{aligned}
\end{equation}
where $s_{i,t}(\theta)=\frac{\text{sg}[s_i(\theta)]*\pi_{old}(y_{i,t}|x_i,y_{i,<t})}{\text{sg}(\pi_{\theta(y_{i,t}|x,y_{i,<t})})}\cdot r_{i,t}(\theta)$ and $\text{sg}$ indicates the stop gradient operation. Let $w_{i,t}$ indicates $\frac{\text{sg}[s_i(\theta)]*\pi_{old}(y_{i,t}|x_i,y_{i,<t})}{\text{sg}(\pi_{\theta(y_{i,t}|x,y_{i,<t})})}$. The GSPO can be considered as a variant of \equref{eq:general_po} by setting $\Phi$ to
\begin{equation}
    \Phi(A,r_{i,t}(\theta),\epsilon_{\text{low}},\epsilon_{\text{high})} =  \min (A_i\cdot w_{i,t} \cdot r_{i,t}(\theta), A_i \text{clip}(w_{i,t}\cdot r_{i,t}(\theta), 1-\epsilon_{\text{low}}, 1+\epsilon_{\text{high}}) ).
\end{equation}

\subsection{Training Pipeline} \label{appdx:RLVR_pipeline}
We provide a visual illustration for the RLVR training pipeline in \figref{fig:RLVR_pipeline}.

\begin{algorithm}[t]
\caption{Policy Training with Data Scheduler}
\label{alg:overall_workflow}
\begin{algorithmic}[1]
\STATE \textbf{Input:} policy $\pi_{\theta_0}$, scheduler $s_{\phi_0}$
\FOR{$t = 1,2,\ldots, T$}
    \STATE \vspace{0.15em} \colorbox{gray!20}{$\triangledown$ Rollout Construction} \vspace{0.15em}
    \STATE Randomly select a batch $\mathcal B_t$ from training data $\mathcal D$
    \STATE Generate rollout $\mathcal R_t$ by current policy $\pi_{\theta_{t-1}}$ and construct corresponding representations in \tabref{tab:arm-features}
    \STATE Construct $\mathcal C_{t}$ using $\mathcal R_t$ and $\mathcal C_{t-1}$ 
    
    \STATE \vspace{0.15em} \colorbox{gray!20}{$\triangledown$ Scheduler Training} \vspace{0.15em}
    \STATE Construct sample reward  $\hat R(x,\theta_{t-2},\theta_{t-1})$ for $x \in \bar{\mathcal C}_{t-1}$ according to \equref{eq:ema_reward_gain} and \equref{eq:true_sample_reward}
    \STATE Update the scheduler from $s_{\phi_{t-2}}$ to $s_{\phi_{t-1}}$ according to \equref{eq:scheduler_learning}
    \STATE \vspace{0.15em} \colorbox{gray!20}{$\triangledown$ Data Scheduling} \vspace{0.15em}
    \STATE Select $\bar{\mathcal C_t}$ from $\mathcal C_t$ according to \equref{eq:selection}
    
    \STATE Update $\pi_{\theta_{t-1}}$ to $\pi_{\theta_{t}}$ according to \equref{eq:policy_optimization}
    \STATE \vspace{0.15em} \colorbox{gray!20}{$\triangledown$ Arm Representation Update} \vspace{0.15em}
    \STATE Update representations associated to the data in $\mathcal C_t$ if needed
\ENDFOR
\end{algorithmic}
\end{algorithm}

\section{Proof of Theorems} \label{appdex:proof}
\subsection{Definitions} \label{appdx:definition}
We instantiate the selector network $s_{\phi}(\cdot)$ as a fully connected network with depth $L \ge 2$ and width $m$, where $s_{\phi}(h)=\mathbf W_L\sigma(\mathbf W_{L-1}\sigma(\mathbf W_{L-2}\cdots\sigma(\mathbf W_1 h)))$ with $\sigma(\cdot)$ being ReLU activation, $\mathbf W_1 \in \mathbb R^{m\times 10}$, $\mathbf W_i \in \mathbb R^{m\times m}$ for $1 \le i \le L-1$, $\mathbf W_L \in \mathbb R^{1\times m}$, and $\boldsymbol{\phi}=[\operatorname{vec}(\mathbf W_L),\ldots,\operatorname{vec}(\mathbf W_1)]$. For initialization, entries of $\mathbf W_L$ are drawn independently from $\mathcal N(0,1/m)$, and for $1 \le i \le L-1$, entries of each $\mathbf W_i$ are drawn independently from $\mathcal N(0,2/m)$. We slightly abuse the notations and rewrite the arm representations $\{h_{i}^t\}_{i \in [M],t\in [T]}$ and rewards $\{r^t_i\}_{t\in [T],i\in [M]}$ into $\{h_{i}\}_{i \in [TM]}$ and $\{r_i\}_{i \in [TM]}$ respectively. The NTK matrix is defined as below, 
\begin{definition}[NTK \cite{DBLP:conf/nips/JacotHG18,DBLP:conf/nips/WangADSG21}]
Let $ \mathcal{N} $ denote the normal distribution. Given the data instances $ \{h_t\}_{t=1}^{TM} $, for all $ i,j \in [TM] $, define:
\begin{align}
H^0_{i,j} &= \Sigma^0_{i,j} = \langle h_i,  h_j \rangle , \quad A^l_{i,j} = \begin{pmatrix} \Sigma^l_{i,i} & \Sigma^l_{i,j} \\ \Sigma^l_{j,i} & \Sigma^l_{j,j} \end{pmatrix}, \notag \\
\Sigma^l_{i,j} &= 2 \mathbb{E}_{a,b \sim \mathcal{N}(0, A^{l-1}_{i,j})} [\sigma(a) \sigma(b)], \notag \\
H^l_{i,j} &= 2 H^{l-1}_{i,j} \mathbb{E}_{a,b  \sim \mathcal{N}(0, A^{l-1}_{i,j})} [\sigma'(a) \sigma'(b)] + \Sigma^l_{i,j}. \notag
\end{align}
Then the NTK matrix is defined as $H=(H^L+\Sigma^L)/2$
\label{def:ntk}
\end{definition}
\subsection{Lemma on Bounded Deviation} 
Given parameters $\theta_{t-1}$ and $\theta_t$, the terms $V_t$ and $V_{t+1}$ can be viewed as stochastic estimates of the expected performance of $\theta_{t-1}$ and $\theta_t$ on the training set $\mathcal D$ (and similarly for $E_t$ and $E_{t+1}$). The following lemma shows that the deviation between the stochastic estimate and its corresponding expectation is bounded. 

\begin{lemma}\label{lemma:bounded_reward}
Let $R(\overline{\mathcal C}_t,\theta_{t-1},\theta_t)$ denote the reward in \eqref{eq:group_reward}. Assume the per-sample reward satisfies $v_i^t\in\{0,1\}$ for all $t,i$, and let $N=|\mathcal V|$ be the vocabulary size. Then
\begin{equation}
\left|\mathbb E\left[R(\overline{\mathcal C}_t,\theta_{t-1},\theta_t)\right]
-R(\overline{\mathcal C}_t,\theta_{t-1},\theta_t)\right|
\le 2\left(1+w_e\log N\right),
\end{equation}
\end{lemma}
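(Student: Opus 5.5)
The argument is a deterministic range bound. I will show that the reward $R(\overline{\mathcal C}_t,\theta_{t-1},\theta_t)$ lies almost surely in an interval of length at most $2(1+w_e\log N)$. Its expectation lies in the same interval, so the deviation $|\mathbb E[R]-R|$ is at most the interval length. No concentration argument is needed.

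\textbf{Bounding the reward term.} Since $v_i^t\in\{0,1\}$, each $V_t$ is an average of values in $\{0,1\}$, so $V_t\in[0,1]$, and likewise $V_{t+1}\in[0,1]$. Hence
\begin{equation*}
V_{t+1}-V_t\in[-1,1].
\end{equation*}

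\textbf{Bounding the entropy term.} Each $e_i^t$ is a length-average of token-level Shannon entropies of distributions $\pi_{\theta_{t-1}}(\cdot\mid q_i^t,o^t_{i,<k})$ over the vocabulary $\mathcal V$. Every such entropy lies in $[0,\log N]$, with the upper bound attained by the uniform distribution on $N=|\mathcal V|$ symbols. Therefore $e_i^t\in[0,\log N]$, and so $E_t,E_{t+1}\in[0,\log N]$. This gives
\begin{equation*}
E_{t+1}-E_t\in[-\log N,\log N].
\end{equation*}
The indicator $\mathbb I[E_t>e_{\min}]$ lies in $\{0,1\}$, so the full penalty term is bounded in absolute value by $w_e\log N$, assuming $w_e\ge 0$.

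\textbf{Combining.} Putting the two bounds together gives $R\in[-A,A]$ with $A=1+w_e\log N$, for every realization of the random batches and rollouts. Taking expectations, $\mathbb E[R]\in[-A,A]$ as well. For two points in the same interval, $|\mathbb E[R]-R|\le 2A=2(1+w_e\log N)$, which is the claimed bound.

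\textbf{Main subtlety.} The only step needing care is the justification of the entropy bound $\log N$. I would invoke the standard fact that entropy is maximized by the uniform distribution, via Jensen's inequality applied to the concave $\log$. I would also note that the byproduct $R\in[-A,A]$ is exactly what the normalization in Section~\ref{sec:theoretical_motivation} relies on.
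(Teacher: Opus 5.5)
Your proposal is correct and follows the paper's proof essentially step for step. Both arguments bound $V_{t+1}-V_t\in[-1,1]$ and the entropy penalty by $w_e\log N$, conclude $R\in[-A,A]$ with $A=1+w_e\log N$, and then use that a bounded variable and its mean lie in the same interval, so their gap is at most $2A$. Your explicit remark that $w_e\ge 0$ is needed (otherwise $w_e$ should be replaced by $|w_e|$) makes precise an assumption the paper leaves implicit.
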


\begin{proof}
Since $v_i^t\in\{0,1\}$, we have $0\le V_t=\mathrm{mean}(\{v_i^t\})\le 1$, hence
\begin{equation}
V_{t+1}-V_t \in [-1,1].
\end{equation}
For the entropy term, for any context the Shannon entropy of a distribution over a vocabulary of size $N$ is bounded by
\begin{equation}
0 \le H(\pi_{\theta_{t-1}}(\cdot\mid\cdot)) \le \log N.
\end{equation}
By definition, $e_i^t$ is the average of token-level entropies, so $e_i^t\in[0,\log N]$, and thus
\begin{equation}
    E_t=\mathrm{mean}(\{e_i^t\})\in[0,\log N], \qquad
    E_{t+1}-E_t \in [-\log N,\log N].
\end{equation}
Moreover, $\mathbb I[E_t>e_{\min}]\in\{0,1\}$, so
\begin{equation}
    -w_e\,\mathbb I[E_t>e_{\min}]\,(E_{t+1}-E_t)\in[-w_e\log N,\; w_e\log N].
\end{equation}
Therefore,
\begin{equation}
    R(\overline{\mathcal C}_t,\theta_{t-1},\theta_t)
    \in \Big[-(1+w_e\log N),\; 1+w_e\log N\Big].
\end{equation}
For any random variable $X\in[a,b]$, we have $|\mathbb E[X]-X|\le b-a$.
Applying this with $a=-(1+w_e\log N)$ and $b=1+w_e\log N$ yields
\begin{equation}
    \left|\mathbb E[R]- R\right|\le 2(1+w_e\log N),
\end{equation}
which proves the claim.
\end{proof}

\subsection{Lemma for Neural Regression}

\begin{lemma}[Theorem 3.1 of \cite{DBLP:conf/nips/YunSJ19}] \label{lemma:memorization}
Given any dataset $\{(x_i,y_i)\}_{i=1}^N$ that all $x_i$ are different and all $y_i \in [-1,1].$  If $L >=3$ and $4 \left\lfloor \frac{m}{2} \right\rfloor \left\lfloor \frac{m}{2} \right\rfloor \geq N$, there exist a $\phi'$ such that $s_{\phi'}(x_i) = y_i$ for all $i \in [N]$.
\end{lemma}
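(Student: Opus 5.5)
The plan is to reproduce the constructive argument of \cite{DBLP:conf/nips/YunSJ19} in our architecture. First collapse the inputs to one dimension. Since the $x_i$ are distinct, a direction $u$ drawn from any continuous distribution gives pairwise distinct projections $z_i=u^\top x_i$ almost surely. Relabel so that $z_1<z_2<\cdots<z_N$. The whole construction then only has to realize a one-dimensional map $\psi$ with $\psi(z_i)=y_i$, as a composition of affine maps and ReLUs whose widths are dictated by $m$.

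\textbf{Two hidden layers.} Write $d_1=d_2=m$ and $k_1=\lfloor m/2\rfloor$, $k_2=\lfloor m/2\rfloor$. Split the sorted points into $k_1$ consecutive blocks of at most $4k_2$ points each; the hypothesis $4k_1k_2\ge N$ is exactly what makes this possible.
\begin{enumerate}
\item \emph{First hidden layer.} Use pairs of ReLU units $\sigma(z-c)$ and $\sigma(z-c')$, with thresholds placed in the gaps between blocks. Each pair yields a bounded ramp that is flat on every block except one, where it is strictly monotone. This layer therefore encodes the block index together with a within-block position.
\item \emph{Second hidden layer.} Combine the $k_1$ ramp outputs with suitably chosen large and small weights, in the style of a ``sawtooth'' construction. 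Then each of its $k_2$ unit pairs, evaluated on every block, contributes a piecewise-linear function whose breakpoints lie between consecutive points of that block. One pair can thus fix about $4$ labels per block, and summing over the $k_2$ pairs at the output layer $\mathbf W_L$ fits all $y_i$ by solving a triangular linear system.
\end{enumerate}
Boundedness $y_i\in[-1,1]$ keeps all weights finite. The output layer has no ReLU, so negative labels are allowed.

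\textbf{Extension to $L>3$.} For the extra depth, I would insert identity-like layers. After a ReLU layer every activation is nonnegative, so $\sigma(I\cdot a)=a$ and an identity weight matrix of size $m\times m$ propagates the representation unchanged. The construction therefore holds for every $L\ge 3$.

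\textbf{Main obstacle: biases.} Our $s_\phi$ in \appendixref{appdx:definition} has no bias terms, whereas the Yun et al.\ construction uses thresholds $\sigma(z-c)$. A bias-free ReLU network is positively homogeneous, so a constant offset must be produced from the input itself. I would try the following. Because $\|h\|_2=1$ under \assumpref{asump:arm_reward}, choose $u$ and a second direction $w$ such that $w^\top x_i>0$ for all $i$. I expect this to be possible in our setting, since the normalized features cluster on one side; otherwise a constant coordinate is appended to the representation. Then use $w^\top x_i$ as a surrogate ``1'' channel, replacing each threshold unit by $\sigma(u^\top x - c\,w^\top x)$, and sort the points by the ratio $u^\top x_i/w^\top x_i$ instead of by $z_i$. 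Each such unit is homogeneous of degree one in $x$, so the analysis above carries through, but with $w^\top x_i$ multiplying every output. This factor is absorbed by choosing target values $y_i/(w^\top x_i)$ and rescaling the output weights. Verifying that this rescaling preserves the feasibility of the triangular label-fitting system is the step I expect to require the most care.
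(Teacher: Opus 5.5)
\textbf{There is a genuine gap: your construction cannot meet the stated width condition for ReLU units.} The paper gives no proof of this lemma; it imports Theorem 3.1 of Yun et al. That theorem is stated for 3-layer \emph{hard-tanh} networks \emph{with biases}. Writing each hard-tanh as two ReLUs, which is how that paper passes to ReLU networks, only gives $4\lfloor m/4\rfloor\lfloor m/4\rfloor\ge N$.

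Your reconstruction keeps the hard-tanh count, and it breaks exactly at ``one pair can fix about 4 labels per block.'' All your first-layer thresholds lie in the gaps between blocks, so every first-layer output is affine in $z$ on a block. Hence every second-layer pre-activation is affine there, and each second-layer ReLU has at most one breakpoint inside a block. On each block the output is therefore a continuous piecewise-linear function with at most $m$ breakpoints. Labels of alternating sign on $n$ consecutive points force at least $n-2$ breakpoints, because consecutive secant slopes alternate in sign. When $N=4k_1k_2$, every block has $n=4\lfloor m/2\rfloor$ points, and $n-2>m$ for all $m\ge 4$.

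The factor $4$ per pair is what a hard-tanh unit buys with its two saturation breakpoints per block; a ReLU has only one. To repair this, you can weaken the hypothesis, e.g.\ to the ReLU constant above. That is essentially harmless for the lemma's only use in the paper, where $m=2C$. Otherwise you need a genuinely different ReLU construction in which second-layer units break many times per block.

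\textbf{Biases.} For the bias-free $s_\phi$, the lemma as literally stated is false. We have $s_\phi(0)=0$ and $s_\phi(cx)=c\,s_\phi(x)$ for $c\ge 0$, so for instance $\{(x,1),(2x,1)\}$ cannot be fitted. An extra hypothesis is therefore unavoidable. Your open-half-space condition is the right kind, and it holds in the paper's setting: the normalized-length coordinate is strictly positive, since every response has at least one token. Appending a coordinate is not allowed, because it changes the fixed input layer $\mathbf W_1\in\mathbb R^{m\times 10}$.

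The step you flag as delicate is actually routine. Set $\rho_i=w^\top x_i>0$ and $t_i=u^\top x_i/\rho_i$. Then the network computes $\rho_i\,\psi(t_i)$ for a biased one-dimensional network $\psi$. You only need $\psi(t_i)=y_i/\rho_i$, which is the same memorization problem with bounded labels after rescaling $\mathbf W_L$. The $t_i$ are distinct for generic $u$ provided no two $x_i$ are positive multiples of each other, which is automatic for unit-norm points.

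What the proposal does not budget for is width. The former biases must be carried by hidden units that are proportional to $\rho$ on all data points. On the first block only one of your first-layer units is active, so no combination of them is proportional to $\rho$ there, as long as that block has at least two points. You therefore need at least one extra first-layer unit such as $\sigma(w^\top x)$, and possibly an extra second-layer unit for the output offset. The count $2\lfloor m/2\rfloor$ leaves no room for these when $m$ is even. The identity-layer extension to $L>3$ is fine.
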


\begin{lemma}[Lemma B.1 of \cite{DBLP:journals/corr/jmlr24}]
\label{lemma:uniform_local_boundedness}
Let $\mathcal H=\{(h_j,r_j)\}_{j=1}^N$ be a finite feature-label set satisfying $\|h_i\|_2=1$ and $r_i \in [0,1]$ for any $i \in [N]$. For any $R>0$ and $\delta\in(0,1)$, if $m \ge \Omega\left(\operatorname{poly}\left(R,L,N,\log(1/\delta)\right)\right)$, then with probability at least $1-\delta$ over the initialization $\phi_0$, the following bounds hold uniformly for all $(h,r)\in\mathcal H$ and all $\phi\in B_R(\phi_0)=\{\phi:\|\phi-\phi_0\| \leq R/m^{1/4}\}$:
\begin{equation}
    |s_\phi(h)|\le O(1),
    \qquad
    \|\nabla_\phi s_\phi(h)\|_2\le O(\sqrt L),
    \qquad
    \|\nabla_\phi \ell(\phi;h,r)\|_2\le O(\sqrt L) \\
\end{equation}
\end{lemma}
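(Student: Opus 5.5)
The plan is to follow the standard over-parameterization perturbation analysis (in the style of Allen-Zhu, Li and Song, and Cao and Gu), which is the route behind Lemma B.1 of \cite{DBLP:journals/corr/jmlr24}. We first control the forward and backward signals at the random initialization $\phi_0$, then show they move only slightly when $\phi$ ranges over $B_R(\phi_0)$. Finally we read off the three bounds: the output bound is the forward bound at the last layer, the gradient bound comes from the layer-wise outer-product form of the gradient, and the loss-gradient bound follows from the chain rule. Throughout, a union bound over the $N$ inputs in $\mathcal H$ (and over the $L$ layers) costs only a $\log(NL/\delta)$ factor, which is absorbed by taking $m \ge \operatorname{poly}(R,L,N,\log(1/\delta))$.

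\textbf{Step 1 (initialization).} Write $x^{l}=\sigma(\mathbf W_l x^{l-1})$ with $x^0=h$. With $\mathcal N(0,2/m)$ entries and $\|h\|_2=1$, a chi-square concentration argument applied layer by layer (the He-initialization factor $2$ compensates for ReLU halving the second moment) gives $\|x^{l}\|_2\in[1/2,2]$ for all $l\le L-1$ and all $h\in\mathcal H$ with probability $1-\delta/3$. The same kind of argument, applied to the backward vectors $b_l=\mathbf W_L D_{L-1}\mathbf W_{L-1}\cdots D_{l}$ (where $D_k$ is the diagonal ReLU activation pattern), gives $\|b_l\|_2=O(1)$. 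It also gives $\|\mathbf W_k^0\|_2=O(1)$ from Gaussian spectral norm bounds. For the output, $s_{\phi_0}(h)=\mathbf W_L^0x^{L-1}$ is Gaussian with variance $\|x^{L-1}\|^2/m$, so $|s_{\phi_0}(h)|=O(\sqrt{\log(N/\delta)/m})=O(1)$.

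\textbf{Step 2 (perturbation over the ball).} For $\|\phi-\phi_0\|\le \omega:=R/m^{1/4}$, we bound $\|x^{l}(\phi)-x^{l}(\phi_0)\|_2$ by induction on $l$, using the Lipschitzness of ReLU and the $O(1)$ spectral norms. Products of perturbed matrices are handled by the forward-stability bound, giving a deviation of $O(\omega L^{5/2}\sqrt{\log m})$, which is $o(1)$ once $m$ is polynomially large. The backward vectors are the harder part because the activation patterns $D_k$ change. We follow Allen-Zhu et al.: only $O(m\omega^{2/3}L)$ coordinates can flip sign. We then bound products of sparse-diagonal-perturbed matrices using the fact that Gaussian matrices restricted to sparse supports have small norm. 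This yields $\|b_l(\phi)-b_l(\phi_0)\|_2=O(\omega^{1/3}L^2\sqrt{\log m})=o(1)$. This uniform backward stability (uniform over the whole ball, not just at a point) is the main obstacle. To make the bound hold simultaneously for all $\phi$ in the ball, we would use the deterministic-given-initialization form of these sparse-change lemmas rather than a net argument.

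\textbf{Step 3 (conclusion).} Because $\nabla_{\mathbf W_l}s_\phi(h)=(D_l b_{l+1}^\top)\,(x^{l-1})^\top$ is an outer product, its Frobenius norm is $\|D_l b_{l+1}\|_2\|x^{l-1}\|_2=O(1)$. Summing the squares over the $L$ layers gives $\|\nabla_\phi s_\phi(h)\|_2=O(\sqrt L)$. Similarly, $|s_\phi(h)|\le |s_{\phi_0}(h)|+|s_\phi(h)-s_{\phi_0}(h)|=O(1)$, where the difference is controlled by Step 2 together with $\|\mathbf W_L-\mathbf W_L^0\|\le\omega$. Finally, for the squared loss $\ell=\tfrac12(s_\phi(h)-r)^2$ with $r\in[0,1]$, we have $\nabla_\phi\ell=(s_\phi(h)-r)\nabla_\phi s_\phi(h)$, and $|s_\phi(h)-r|=O(1)$. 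This gives $\|\nabla_\phi\ell\|_2=O(\sqrt L)$ uniformly over $\mathcal H$ and $B_R(\phi_0)$.
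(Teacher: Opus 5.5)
Your proposal is correct and follows essentially the same route as the source: the paper gives no proof and simply imports Lemma~B.1, which rests on the same Allen-Zhu--Li--Song / Cao--Gu analysis you reconstruct (initialization bounds, then uniform forward and backward perturbation over the ball, then the output bound, the outer-product gradient bound summed over the $L$ layers, and the chain rule for $\ell$). One cosmetic fix: with your definition of $b_l$, the layer-$l$ gradient is $b_l^\top (x^{l-1})^\top = D_l\mathbf W_{l+1}^\top b_{l+1}^\top (x^{l-1})^\top$, not $(D_l b_{l+1}^\top)(x^{l-1})^\top$; this leaves the $O(1)$ per-layer bound unchanged, since you control $\|b_l\|_2$ for every $l$.
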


\begin{lemma}[Uniform local perturbation bound]
\label{lemma:uniform_local_perturbation}
Let $\mathcal H=\{(h_j,r_j)\}_{j=1}^N$ be a finite feature-label set satisfying $\|h_i\|_2=1$ and $r_i \in [0,1]$ for any $i \in [N]$, and let $\ell(\phi;h,r)=\frac{1}{2}(s_{\phi}(h)-r)^2$. For any $R>0$ and $\delta\in(0,1)$, if $m \ge \Omega(\operatorname{poly}(R,L,N,\log(1/\delta)))$, then with probability at least $1-\delta$ over the initialization $\phi_0$, the following bounds hold uniformly for all $(h,r)\in\mathcal H$ and all $\phi,\phi'\in B_R(\phi_0)=\{\phi:\|\phi-\phi_0\| \leq R/m^{1/4}\}$:
\begin{equation}
    |s_\phi(h)-s_{\phi'}(h)|\le O\left(\frac{R\sqrt L}{m^{1/4}}\right).
\end{equation}
\end{lemma}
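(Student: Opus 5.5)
The plan is to derive this bound directly from \lemmaref{lemma:uniform_local_boundedness} via a mean-value (path-integral) argument along the segment joining $\phi$ and $\phi'$. First, I will fix $R>0$, $\delta\in(0,1)$, and take $m$ large enough that the conclusion of \lemmaref{lemma:uniform_local_boundedness} holds with probability at least $1-\delta$ over $\phi_0$. The statement's requirement $m \ge \Omega(\operatorname{poly}(R,L,N,\log(1/\delta)))$ is exactly that lemma's requirement. I will work on this high-probability event throughout, so no new failure probability is introduced. On this event, $\|\nabla_\phi s_\phi(h)\|_2\le O(\sqrt L)$ uniformly for all $(h,r)\in\mathcal H$ and all $\phi\in B_R(\phi_0)$.

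Second, I will fix $(h,r)\in\mathcal H$ and $\phi,\phi'\in B_R(\phi_0)$, and consider the path $\gamma(\tau)=\phi'+\tau(\phi-\phi')$ for $\tau\in[0,1]$. Since $B_R(\phi_0)$ is a Euclidean ball, it is convex, so $\gamma(\tau)\in B_R(\phi_0)$ for every $\tau$ and the uniform gradient bound applies along the entire path. Define $u(\tau)=s_{\gamma(\tau)}(h)$. Then
\begin{equation*}
|s_\phi(h)-s_{\phi'}(h)| = |u(1)-u(0)| = \left|\int_0^1 \langle \nabla_\phi s_{\gamma(\tau)}(h),\, \phi-\phi'\rangle\, d\tau\right| \le \sup_{\tau}\|\nabla_\phi s_{\gamma(\tau)}(h)\|_2\,\|\phi-\phi'\|_2 .
\end{equation*}
By the triangle inequality, $\|\phi-\phi'\|_2\le 2R/m^{1/4}$. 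Combining this with the gradient bound gives $O(\sqrt L)\cdot 2R/m^{1/4}=O(R\sqrt L/m^{1/4})$, which is the claim. Because the bound depends on $(h,r)$ only through the uniform constants, it holds uniformly over $\mathcal H$.

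The only delicate step is justifying the integral identity, since ReLU makes $s_\phi(h)$ non-differentiable on a measure-zero set of parameters. I will handle this by noting that, for fixed $h$, the map $\phi\mapsto s_\phi(h)$ is continuous and piecewise polynomial in $\phi$. Along the segment, $u$ therefore has finitely many kink points, so it is absolutely continuous, and $u'(\tau)=\langle \nabla_\phi s_{\gamma(\tau)}(h),\phi-\phi'\rangle$ holds at all but finitely many $\tau$. The fundamental theorem of calculus then applies with the almost-everywhere gradient (interpreting $\nabla$ as the a.e. or one-sided gradient, which is the same object bounded in \lemmaref{lemma:uniform_local_boundedness}). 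Alternatively, one can split the segment at the kink points and apply the classical mean value theorem on each smooth piece, then sum; the piece lengths add up to $\|\phi-\phi'\|_2$, so the bound is unchanged.
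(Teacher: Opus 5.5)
Your argument is correct, but it takes a different route from the paper.

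\textbf{The paper's route.} The paper does not integrate along the segment. It cites the first-order linearization bound of Lemma~B.2 in \cite{DBLP:journals/corr/jmlr24}, namely $|s_\phi(h)-s_{\phi'}(h)-\langle\nabla_\phi s_{\phi'}(h),\phi-\phi'\rangle|\le O(\omega_\star^{1/3}L^2\sqrt{\log m})\,\|\phi-\phi'\|_2$. To use it, the paper picks an auxiliary radius $\omega_\star=c_0L^{-9/2}(\log m)^{-3}$, large enough to contain $B_R(\phi_0)$ and small enough that the error constant is $O(\sqrt L)$. It then bounds the linear term by the gradient norm at the single point $\phi'$ from \lemmaref{lemma:uniform_local_boundedness}, and union-bounds the two events at level $\delta/2$ each.

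\textbf{Your route.} You use the fact that \lemmaref{lemma:uniform_local_boundedness} already controls $\|\nabla_\phi s_\phi(h)\|_2$ uniformly over the convex ball. A fundamental-theorem-of-calculus argument along the segment then gives the Lipschitz estimate directly. This needs no linearization lemma, no auxiliary radius and no union bound, and it inherits exactly the width requirement and failure probability of \lemmaref{lemma:uniform_local_boundedness}.

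\textbf{ReLU non-smoothness.} The price is that you must handle the ReLU kinks yourself, which the paper absorbs into the cited lemma. Your piecewise-polynomial argument does this correctly. The justification would be cleaner if phrased via the backpropagation gradient (with a fixed convention for $\sigma'(0)$) rather than an ``a.e.'' gradient. The reason is that the segment may lie entirely in the null set where some pre-activation vanishes identically. There the chain rule still agrees with the backprop formula, because the inner derivative is zero.

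\textbf{What each approach buys.} The paper's route needs the gradient bound at only one point. It also yields the stronger near-linearity statement that is the standard tool of NTK-based bandit analyses. Yours relies on the gradient bound holding on the whole continuum of the segment. For the crude $O(R\sqrt L/m^{1/4})$ bound needed here, yours is the more elementary and self-contained proof.
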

\begin{proof}
Let $\rho=R/m^{1/4}$ and choose $\omega_\star=c_0L^{-9/2}(\log m)^{-3}$, where $c_0>0$ is a sufficiently small universal constant. For $m\ge \Omega(\operatorname{poly}(R,L,N,\log(1/\delta)))$, we have $\rho\le \omega_\star$ and $O(NL^2)\exp(-\Omega(m\omega_\star^{2/3}L))\le \delta/2$. Hence, $B_R(\phi_0)$ is contained in the local ball of radius $\omega_\star$. By Lemma~B.2 of \cite{DBLP:journals/corr/jmlr24}, uniformly for all $(h,r)\in\mathcal H$ and all $\phi,\phi'\in B_R(\phi_0)$,
\begin{equation*}
\left|s_\phi(h)-s_{\phi'}(h)
-\left\langle \nabla_\phi s_{\phi'}(h),\phi-\phi'\right\rangle\right|
\le O(\omega_\star^{1/3}L^2\sqrt{\log m})\|\phi-\phi'\|_2 .
\end{equation*}
Since $\omega_\star^{1/3}L^2\sqrt{\log m}\le O(\sqrt L)$, this gives
\begin{equation*}
\left|s_\phi(h)-s_{\phi'}(h)
-\left\langle \nabla_\phi s_{\phi'}(h),\phi-\phi'\right\rangle\right|
\le O(\sqrt L)\|\phi-\phi'\|_2 .
\end{equation*}
By Lemma~\ref{lemma:uniform_local_boundedness}, $\|\nabla_\phi s_{\phi'}(h)\|_2\le O(\sqrt L)$ with probability at least $1-\delta/2$. Taking a union bound over the two events, with probability at least $1-\delta$,
\begin{align*}
|s_\phi(h)-s_{\phi'}(h)|
&\le 
\left|\left\langle \nabla_\phi s_{\phi'}(h),\phi-\phi'\right\rangle\right|
+ O(\sqrt L)\|\phi-\phi'\|_2 \\
&\le O(\sqrt L)\|\phi-\phi'\|_2
\le O\!\left(\frac{R\sqrt L}{m^{1/4}}\right),
\end{align*}
where the last step uses $\|\phi-\phi'\|_2\le 2R/m^{1/4}$.
\end{proof}

\begin{lemma}[Lemma 5.2 of \cite{DBLP:journals/corr/jmlr24}] \label{lemma:ntk_bound}
Let $\mathcal H=\{(h_j,r_j)\}_{j=1}^N$ be a finite feature-label set satisfying $\|h_i\|_2=1$ and $r_i \in [0,1]$ for any $i \in [N]$. For any arm representations $\{h_i\}_{i \in [TM]}$ and rewards $\{r_i\}_{i \in [TM]}$ sampled from $\mathcal H$ and satisfying \assumpref{asump:ntk}, if $ m \geq \Omega\left(\text{poly}(T,R,L,N,\log(\delta), \frac{1}{\lambda_0})\right)$, with probability at least $1-\delta$ over initialization of $\phi_0$, there exists $\phi' \in B(\phi_0,\tilde{\Omega}(T^{3/2}))$ such that
\begin{equation}
\sum_{t=1}^{TM} \bigl(s_{\phi'}(h_t) - r_t \bigr)^2 \leq  
\tilde{\mathcal{O}}\left(\sqrt{\tilde{d}} + S\right)^2 * \tilde{d},  
\end{equation}
where $S=\sqrt{r^TH^{-1}r}$ and $\tilde{d}=\frac{\log \det(I+H)}{\log(1+TM)}$.
\end{lemma}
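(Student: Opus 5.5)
The approach is the usual neural-tangent-kernel linearization argument: construct $\phi'$ explicitly as a kernel-ridge-regression solution in the linearized model around $\phi_0$, then pay for the linearization error using the local perturbation bounds already established. Write $g_i=\nabla_\phi s_{\phi_0}(h_i)$ and $\mathbf G=[g_1,\dots,g_{TM}]/\sqrt m$.

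\textbf{Step 1 (Gram matrix concentrates to the NTK).} Using the concentration results underlying Lemma~\ref{lemma:uniform_local_boundedness}, I show that for $m\ge\Omega(\operatorname{poly}(T,L,N,\log(1/\delta),1/\lambda_0))$ we have $\|\mathbf G^\top\mathbf G-H\|_F\le\lambda_0/2$ with probability $1-\delta/3$. By Assumption~\ref{asump:ntk} this gives $\mathbf G^\top\mathbf G\succeq(\lambda_0/2)I$. It also gives
\[
\log\det(I+\mathbf G^\top\mathbf G/\lambda)\le\log\det(I+H/\lambda)+1,
\]
so every log-determinant below can be replaced by $\tilde d\log(1+TM)$.

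\textbf{Step 2 (ridge solution in the linearized model).} Let $y=r-f_0$ with $f_0=(s_{\phi_0}(h_i))_i$. By Lemma~\ref{lemma:uniform_local_boundedness}, $|s_{\phi_0}(h)|=O(1)$. Define the ridge-regularized point
\[
\phi'=\phi_0+\tfrac{1}{\sqrt m}\,\mathbf G(\mathbf G^\top\mathbf G+\lambda I)^{-1}y,
\]
with $\lambda$ chosen below.

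\emph{Predictions.} The linear predictions are $\mathbf G^\top\mathbf G(\mathbf G^\top\mathbf G+\lambda I)^{-1}y$, so the linearized residual is $\lambda(\mathbf G^\top\mathbf G+\lambda I)^{-1}y$. Its squared norm is at most $\lambda\, y^\top(\mathbf G^\top\mathbf G+\lambda I)^{-1}y\lesssim\lambda(S^2+\text{init term})$.

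\emph{Radius.} We have $\|\phi'-\phi_0\|\le\sqrt{y^\top(\mathbf G^\top\mathbf G+\lambda I)^{-1}y}/\sqrt m$, which is at most $\tilde O(S+\sqrt{TM/\lambda_0})/\sqrt m$. This lies inside $B(\phi_0,\tilde\Omega(T^{3/2}))$ and also inside the radius $R/m^{1/4}$ required by Lemma~\ref{lemma:uniform_local_perturbation}.

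\emph{Choice of $\lambda$.} I take $\lambda\asymp\tilde d$ up to logs. The residual term then becomes $\tilde d(\sqrt{\tilde d}+S)^2$. The $\sqrt{\tilde d}$ contribution is what absorbs the $f_0$/initialization part via the self-normalized bound $y^\top(\mathbf G^\top\mathbf G+\lambda I)^{-1}y\le \|y\|^2_{\cdots}$, controlled through $\log\det$.

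\textbf{Step 3 (linearization error).} Using the first-order expansion from Lemma~B.2 of \cite{DBLP:journals/corr/jmlr24} (as in the proof of Lemma~\ref{lemma:uniform_local_perturbation}),
\[
|s_{\phi'}(h_i)-f_0(h_i)-\langle g_i,\phi'-\phi_0\rangle|\le O(\omega^{4/3}L^3\sqrt{m\log m}),
\]
where $\omega=\|\phi'-\phi_0\|$. Summing the squares over $TM$ points gives a term that vanishes polynomially in $m$, so it is $\le1$ for $m$ large. Combining with $(a+b)^2\le2a^2+2b^2$ yields the stated bound.

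\textbf{Main obstacle.} The hard step is Step 2: making the bound genuinely of the form $\tilde O(\sqrt{\tilde d}+S)^2\tilde d$, rather than merely $\lambda S^2$ plus an initialization term that scales with $TM$. The nonzero initial output $f_0$ is not controlled by $S$. I would first try the symmetric (doubling) initialization trick, which makes $f_0\equiv0$ and removes the issue. Failing that, I would bound $f_0^\top(\mathbf G^\top\mathbf G+\lambda I)^{-1}f_0$ via a self-normalized concentration argument producing the $\log\det$, and hence $\tilde d$, factor.
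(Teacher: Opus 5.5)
The paper gives no proof of this lemma; it imports it from \cite{DBLP:journals/corr/jmlr24}. Your argument therefore has to work for the network of Appendix~\ref{appdx:definition} together with \lemmaref{lemma:uniform_local_boundedness} and \lemmaref{lemma:uniform_local_perturbation}. There, Step~1 is false.

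Your normalization $\mathbf G=[g_1,\dots,g_{TM}]/\sqrt m$ with $\mathbf G^\top\mathbf G\approx H$ belongs to the NeuralUCB parametrization $s_\phi(h)=\sqrt m\,\mathbf W_L\sigma(\cdots)$. So does the linearization error $O(\omega^{4/3}L^3\sqrt{m\log m})$ you quote in Step~3. In that parametrization $\|\nabla_\phi s_\phi(h)\|_2=\Theta(\sqrt m)$. The paper's network has no $\sqrt m$ prefactor, and \lemmaref{lemma:uniform_local_boundedness} gives $\|g_i\|_2\le O(\sqrt L)$. Hence $(\mathbf G^\top\mathbf G)_{ii}=\|g_i\|_2^2/m\le O(L/m)$, whereas $H_{ii}\ge\lambda_0$ by \assumpref{asump:ntk}. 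For $m\ge\operatorname{poly}(L,1/\lambda_0)$ the bound $\|\mathbf G^\top\mathbf G-H\|_F\le\lambda_0/2$ cannot hold. (The paper itself cites Lemma~B.2 of the source as the bound $O(\omega^{1/3}L^2\sqrt{\log m})\|\phi-\phi'\|_2$, not the one you use.)

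This is not a constant you can rescale away. Suppose you drop the $1/\sqrt m$, so that the Gram matrix $G^\top G$ of the raw gradient matrix $G=[g_1,\dots,g_{TM}]$ tracks $H$. Then your ridge point $\phi_0+G(G^\top G+\lambda I)^{-1}y$ sits at distance $\|G(G^\top G+\lambda I)^{-1}y\|_2$ from $\phi_0$, which does not shrink with $m$. It therefore leaves $B_R(\phi_0)$ (radius $R/m^{1/4}$) and the region where any of the linearization bounds apply.

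Worse, no construction can succeed inside that ball. \lemmaref{lemma:uniform_local_perturbation} gives $|s_\phi(h)-s_{\phi_0}(h)|\le O(R\sqrt L/m^{1/4})$ for every $\phi\in B_R(\phi_0)$. So once $m$ is polynomially large, $\sum_t(s_{\phi'}(h_t)-r_t)^2$ is essentially $\sum_t(s_{\phi_0}(h_t)-r_t)^2$ for every $\phi'$ in the ball. Your two claims in Step~2 are thus incompatible with the paper's own lemmas: that the predictions are $\mathbf G^\top\mathbf G(\mathbf G^\top\mathbf G+\lambda I)^{-1}y$, and that $\phi'\in B_R(\phi_0)$. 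The argument is valid only for a $\sqrt m$-scaled network, which is not the setting in which the statement is posed and used.

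Separately, even in that setting your Step~2 does not close. $f_0$ is a function of the initialization, not an adapted noise sequence, so the ``self-normalized'' bound does not apply, and the doubling trick changes the network. You do not need either. Once $\mathbf G^\top\mathbf G\succeq(\lambda_0/2)I$, take $\lambda\to0$ and interpolate exactly, so the linear residual is zero. Then $f_0$ enters only through the radius $\sqrt{y^\top(\mathbf G^\top\mathbf G)^{-1}y/m}=\tilde O(\sqrt{TM/\lambda_0})/\sqrt m$. Step~3 then makes the total squared error $O(1)$, which is stronger than the claimed bound.
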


\subsection{Proof of \theoremref{theorem:buffer_size}}
Let $\phi^*$ denote the optimal policy parameter for buffer size $jBG$. To prove \theoremref{theorem:buffer_size}, we can show that there exists a policy parameter $\phi'$ such that, for any given rollout sequence $\mathcal{R} = \{\mathcal{R}_1, \dots, \mathcal{R}_T\}$, the selected data sequence $x_1(\phi', \mathcal{R}, iBG), \dots, x_T(\phi', \mathcal{R}, iBG)$ is identical to the sequence $x_1(\phi^*, \mathcal{R}, jBG), \dots, x_T(\phi^*, \mathcal{R}, jBG)$, where $x_t(\phi, \mathcal{R}, M)$ represents the data chosen by scheduler $s_{\phi}$ when the buffer size is $M$ and the rollout data is $\mathcal{R}$. Therefore, we can write the following inequality:
\begin{equation*}
  \varphi(iBG) = \max_{\phi}\mathbb E_{M=iBG} [\max_{t\in T}g(\theta_t)] \geq \mathbb{E}_{\phi = \phi', M = iBG}[\max_{t \in T} g(\theta_t)] = \varphi(jBG),
\end{equation*}
where $\mathbb{E}_{\phi = \phi', M = iBG}[\max_{t \in T} g(\theta_t)]$ represents the optimal performance when the buffer size is $iBG$ and the scheduler parameter is $\phi'$.

Next, we describe how to construct $\phi'$ based on $\phi^*$. Since the vocabulary size $|V|$, the response length $L_{\max}$, and the number of training rounds $T$ are finite, all possible rollout data and arm representations come from a finite set. Let $H \subset \mathbb{R}^{10}$ denote the set of all possible arm representations, and let $U = \max_{h \in H} |s_{\phi^*}(h)|$ be the upper bound of the absolute value of scores, with $C = |H|$ representing the size of the arm set. We assign a label $y_h$ for each $h \in H$ as follows:
\begin{equation*}
    y_h = \frac{2U \cdot \mathbb{I}[h_s < j] + s_{\phi^*}(h)}{3U},
\end{equation*}
where $h_s$ denotes the delta step feature for the arm representations in \tabref{tab:arm-features}. By setting $L = 3$, $m = 2C$, and applying \lemmaref{lemma:memorization}, we obtain $s_{\phi'}(h) = y_h$ for all $h \in H$.

Now, consider the scores $s_{\phi'}(h), s_{\phi'}(h')$ for any two data points $x, x' \in \mathcal{F}_t$, where $h, h'$ are the representations for $x$ and $x'$, respectively. The buffer $\mathcal{F}_t = \cup_{i=t-i+1}^t \mathcal{R}_t$ includes the data from the latest $i$ steps. For data within the latest $j$ steps, the delta step $h_s < j$, so $y_h$ will be larger than the labels for data older than the latest $j$ steps according to the definition of $y_h$. For two data points both within the latest $j$ steps, we have:
\begin{equation*}
    s_{\phi'}(h) - s_{\phi'}(h') = \frac{s_{\phi^*}(h) - s_{\phi^*}(h')}{3U}.
\end{equation*}
Thus, the relative difference in scores is the same as for $s_{\phi^*}$. Combining the two situations together, the strategy $s_{\phi'}$ behaves identically to $s_{\phi^*}$. As a result, for any rollout sequence $\mathcal{R} = \{\mathcal{R}_1, \dots, \mathcal{R}_T\}$, the data selection sequence of $s_{\phi'}$ will be the same as that of $s_{\phi^*}$. This completes the proof.

\subsection{Proof of \theoremref{theorem:regret}}
\begin{proof}
For notation simplicity, write $g_a^t=g(\mathcal A(x_a^t,\theta_{t-1}))$. Let $a_t^*=\arg\max_{a\in[M]} g_a^t$, $a_t^g=\arg\max_{a\in[M]} s_{\phi_{t-1}}(h_a^t)$, and let $a_t^e$ denote the exploration arm. Let $e_t\sim \operatorname{Bernoulli}(\epsilon_t)$ indicate whether exploration is performed, so $a_t=a_t^e$ if $e_t=1$ and $a_t=a_t^g$ otherwise. Then
\begin{align*}
    \mathbb E[R_T]
    &= \sum_{t=1}^T \mathbb E\left[g_{a_t^*}^t-g_{a_t}^t\right] \\
    &= \sum_{t=1}^T \mathbb E\left[g_{a_t^*}^t-g_{a_t^g}^t\right]
    + \sum_{t=1}^T \mathbb E\left[e_t\left(g_{a_t^g}^t-g_{a_t^e}^t\right) \right] \\
    &\overset{(a)}{\le} \sum_{t=1}^T \mathbb E\left[g_{a_t^*}^t-g_{a_t^g}^t\right]
    + \sum_{t=1}^T\mathbb E \left[e_t\right] \\
    & \le \sum_{t=1}^T \mathbb E\left[g_{a_t^*}^t-g_{a_t^g}^t\right] + \sum_{i=1}^T \epsilon_t,
\end{align*}
where (a) holds because the reward range is $[0,1]$. By setting $T_w=1$, $\epsilon_0=1$, $\Delta_\epsilon=T^{-1/2}$, and $\epsilon_{\min}=0$, the exploration probability in \equref{eq:exploration_prob} becomes $\epsilon_1=1$ and, for $t\ge 2$, $\epsilon_t=\max(1-(t-1)/\sqrt T,0)$. Therefore,
\begin{align*}
    \sum_{t=1}^T \epsilon_t
    &= 1 + \sum_{t=2}^T \max\left(1-\frac{t-1}{\sqrt T},0\right) \\
    &= 1 + \sum_{k=1}^{T-1} \max\left(1-\frac{k}{\sqrt T},0\right),
\end{align*}
where $k=t-1$. The summand is positive only when $k<\sqrt T$, so there are at most $\lceil \sqrt T\rceil-1$ nonzero terms in the second summation. Since each summand is at most $1$,
\begin{align*}
    \sum_{t=1}^T \epsilon_t \le 1 + (\lceil \sqrt T\rceil-1) \le 1+\sqrt T \le 2\sqrt T .
\end{align*}
Using $s_{\phi_{t-1}}(h_{a_t^g}^t)\ge s_{\phi_{t-1}}(h_{a_t^*}^t)$, for any auxiliary parameter sequence $\{\phi_t^*\}_{t=0}^T$, we have
\begin{align*}
    \sum_{t=1}^T \mathbb E\left[g_{a_t^*}^t-g_{a_t^g}^t\right] 
    & \overset{(a)}{=} 
        \sum_{t=1}^T\mathbb E\left[r^t_{a_t^*} - r^t_{a_t^g}\right]\\
    &\le 
        \underbrace{\sum_{t=1}^T\mathbb E\left[\left|r_{a_t^*}^t-s_{\phi_{t-1}}^*(h_{a_t^*}^t)\right|\right]}_{I_1}
        +
        \underbrace{\sum_{t=1}^T \mathbb E\left[\left|s_{\phi_{t-1}^*}(h_{a_t^*}^t)-s_{\phi_{t-1}}(h_{a_t^*}^t)\right|\right]}_{I_2} 
        +
        \underbrace{\sum_{t=1}^T\mathbb E\left[\left|s_{\phi_{t-1}}(h_{a_t^g}^t)-g_{a_t^g}^t\right|\right]}_{I_3},
\end{align*}
where $(a)$ holds because $g_a^t=\mathbb E[r_a^t\mid \mathcal G_t]$, where $\mathcal G_t$ denotes the information available before observing the round-$t$ rewards. By the law of iterated expectation and the fact that $a_t^*$ and $a_t^g$ are determined by $\mathcal G_t$, we have $\mathbb E[r_{a_t^*}^t-r_{a_t^g}^t]=\mathbb E[g_{a_t^*}^t-g_{a_t^g}^t]$.

We now define the auxiliary trajectory used in $I_1$ and $I_2$. Let $\phi_0^*=\phi_0$ and $\ell(\phi,h,r) = \frac{1}{2}(s_{\phi}(h)-r)^2$, we define $\phi_t^* = \phi_{t-1}^* - \eta_2 \nabla_\phi \ell(\phi_{t-1}^*,h^t_{a_t^*},r^t_{a_t^*}).$ The actual scheduler trajectory can be written as $\phi_t=\phi_{t-1}-\eta_2\nabla_\phi \ell(\phi_{t-1};h_t^{\mathrm{obs}},r_t^{\mathrm{obs}})$ for some observed feature-label pair $(h_t^{\mathrm{obs}},r_t^{\mathrm{obs}})$. Since the initial checkpoint, training set, vocabulary, maximum response length, and training horizon are fixed, and the rollout randomness has finite support over $T$ rounds, all possible scheduler feature-label pairs are sampled from a finite set $\mathcal H_{\mathrm{all}}=\{(h_j,r_j)\}_{j=1}^{N_{\mathcal H}}$ with $\|h_j\|_2=1$ and $r_j\in[0,1]$. Let $\mathcal E_1$, $\mathcal E_2$, and $\mathcal E_3$ be the events in \lemmaref{lemma:uniform_local_boundedness}, \lemmaref{lemma:uniform_local_perturbation}, and \lemmaref{lemma:ntk_bound} with failure probabilities $\delta_1$, $\delta_2$, and $\delta_3$, respectively. Set $\delta_1=\delta_2=\delta_3=\delta/3$, $R=\widetilde\Omega(T^{3/2})$, and $\eta_2=R^2/\sqrt m$. By a union bound, $\mathcal E=\mathcal E_1\cap\mathcal E_2\cap\mathcal E_3$ holds with probability at least $1-\delta$ over $\phi_0$. In the following, we condition on $\mathcal E$.

On $\mathcal E_1$, both $\{\phi_t\}_{t=0}^T$ and $\{\phi_t^*\}_{t=0}^T$ stay in $B_R(\phi_0)$. Indeed, by \lemmaref{lemma:uniform_local_boundedness}, $\|\nabla_\phi \ell(\phi;h,r)\|_2\le O(\sqrt L)$ uniformly over $(h,r)\in\mathcal H_{\mathrm{all}}$ and $\phi\in B_R(\phi_0)$. Therefore, for either trajectory,
\begin{equation*}
    \|\phi_t-\phi_0\|_2
    \le
    \sum_{\tau=1}^T \eta_2 O(\sqrt L)
    \le
    O\!\left(\frac{TR^2\sqrt L}{\sqrt m}\right)
    \le
    \frac{R}{m^{1/4}},
\end{equation*}
provided that $m\ge \Omega(T^4R^4L^2)$. The same argument applies to $\phi_t^*$. On $\mathcal E_2$, by \lemmaref{lemma:ntk_bound} and the choice $R=\widetilde\Omega(T^{3/2})$, there exists $\bar\phi\in B_R(\phi_0)$ such that
\begin{equation*}
    \mathcal Q_T 
    :=
    \sum_{t=1}^T\sum_{a=1}^M \left(s_{\bar\phi}(h_a^t)-r_a^t\right)^2
    \le
    \widetilde O\!\left(\sqrt{\widetilde d}+S\right)^2\widetilde d .
\end{equation*}
Finally, on $\mathcal E_2$, for any $\phi,\phi'\in B_R(\phi_0)$ and any $(h,r)\in\mathcal H_{\mathrm{all}}$,
\begin{equation*}
    |s_\phi(h)-s_{\phi'}(h)|
    \le
    \Delta_m,
    \qquad
    \Delta_m=O\!\left(\frac{R\sqrt L}{m^{1/4}}\right).
\end{equation*}
The same lower bound $m\ge \Omega(T^4R^4L^2)$ also gives $T\Delta_m=O(1)$.

We now bound the three terms. For $I_3$,
\begin{align*}
    I_3 &= \sum_{t=1}^T \mathbb E\left[\left|s_{\phi_{t-1}}(h_{a_t^g}^t)-s_{\bar\phi}(h_{a_t^g}^t) + s_{\bar\phi}(h_{a_t^g}^t)-r_{a_t^g}^t\right|\right] \\
    &\le
        \sum_{t=1}^T \mathbb E\left[\left|s_{\bar\phi}(h_{a_t^g}^t)-r_{a_t^g}^t\right|\right]
        +
        T\Delta_m \\
    & \overset{(a)}{\le}
        \sqrt{T} \mathbb E\left[ \sqrt{\sum_{t=1}^T\left(s_{\bar\phi}(h_{a_t^g}^t)-r_{a_t^g}^t\right)^2}\right]
        +
        O(1) \\
    & \le
        \sqrt{T} \mathbb E\!\left[\sqrt{\mathcal Q_T}\right]
        +
        O(1) \\
    &\le
        \mathbb E\left[\widetilde O\left(\sqrt T\left(\widetilde d+S\sqrt{\widetilde d}\right)\right)\right]
        +
        O(1),
\end{align*}
where $(a)$ comes by applying the Cauchy–Schwarz inequality. The same argument gives the bound for $I_1$,
\begin{align*}
    I_1
    &\le
        \sum_{t=1}^T\mathbb E\left[\left|s_{\bar\phi}(h_{a_t^*}^t)-r_{a_t^*}^t\right|\right]
        +
        T\Delta_m \\
    &\le
        \sqrt{T} \mathbb E\left[\sqrt{\mathcal Q_T}\right]
        +
        O(1) \\
    &\le
        \mathbb E\left[\widetilde O\left(\sqrt T\left(\widetilde d+S\sqrt{\widetilde d}\right)\right)\right]
        +
        O(1).
\end{align*}
For $I_2$, since both $\phi_{t-1}^*$ and $\phi_{t-1}$ lie in $B_R(\phi_0)$, \lemmaref{lemma:uniform_local_perturbation} directly yields $I_2 \le \sum_{t=1}^T \Delta_m = O(1)$. 

Combining the bounds above, if $m \geq \Omega(\operatorname{poly}(N_{\mathcal H},T,R,L,\log(\frac{1}{\delta}),\frac{1}{\lambda_0}))$, with probability at least $1-\delta$ over the scheduler initialization, we have
\begin{align*}
    \mathbb E[R_T]
    &\le
        I_1+I_2+I_3+\sum_{t=1}^T\epsilon_t \\
    &\le
        \mathbb E\left[ \widetilde O\left(\sqrt T\left(\widetilde d+S\sqrt{\widetilde d}\right)
        +
        \sqrt T\right)
    \right].
\end{align*}
This completes the proof.
\end{proof}

\section{Discussion on Subset Selection} \label{appdx:subset_selection}
We first justify the subset selection as sample scoring using a first-order Taylor expansion, and then discuss the regret bound under subset selection.

\textbf{Justification of Sample Selection.} When $\theta_{t-1}$ is fixed, the expression
\begin{equation}
    R(\bar{\mathcal C}_t, \theta_{t-1}, \theta_t) = V_{t+1} - V_t - w_e \mathbb I[E_t > e_{\min}] (E_{t+1} - E_t)
\end{equation}
can be interpreted as a stochastic estimate of the function $f(\theta_t)$, which is defined as
\begin{equation}
    \begin{aligned}
        f(\theta_t) = \mathbb{E}_{(q, a) \sim D, o \sim \pi_{\theta_t}(\cdot|q)} \bigg[ v(a, o) - V_t - w_e \mathbb{I}[E_t > e_{\min}] \cdot \Big( \sum_{t=1}^{|o|} \frac{-\sum_{x \in \mathcal{V}} \pi_{\theta_t}(x | q, o_{<t}) \log \pi_{\theta_t}(x | q, o_{<t})}{|o|} - E_t \Big) \bigg],
    \end{aligned}
\end{equation}
where $v(\cdot)$ is the reward function of RLVR. Applying the first-order Taylor expansion around the reference point $\theta_{t-1}$, we will get
\begin{equation}
    f(\theta_t) \approx f(\theta_{t-1}) + \nabla_{\theta_{t-1}} f(\theta) \cdot (\theta_t - \theta_{t-1}).
\end{equation}
Since the parameter is updated by gradient ascent, i.e., $\theta_t = \theta_{t-1} + \eta \nabla_{\theta_{t-1}} J_{\theta}(\bar C_t)$, we obtain
\begin{equation}
    \begin{aligned}
        f(\theta_t) & \approx f(\theta_{t-1}) + \nabla_{\theta_{t-1}} f(\theta) \cdot \eta \nabla_{\theta_{t-1}} J_{\theta}(\bar C_t) \\
         &= f(\theta_{t-1}) + \nabla_{\theta_{t-1}} f(\theta) \cdot \eta \sum_{x \in \bar{C}_t} \nabla_{\theta_{t-1}} J_{\theta}(\bar C_t).
    \end{aligned}
\end{equation}
Therefore, the subset utility $f(\theta_t)$ becomes the sum of the utilities of the samples.

\textbf{Regret bound under subset selection.} Under the first-order approximation, the subset-level reward is additive. If we treat the advantage-dispatched per-sample reward as the (unbiased) true reward, the resulting problem can be modeled as a contextual semi-bandit. In each round, selecting a size-$K$ subset can be viewed as making $K$ single-arm choices (with distinct arms) and receiving feedback for all selected arms at the end of the round.
Therefore, any adversarial single-play regret guarantee extends to the subset setting with at most a factor-$K$ degradation, i.e., the cumulative regret is upper bounded by $K$ times the single-rollout selection bound (up to the same logarithmic factors).

\section{Implementation Details}
\label{appdx:experimental_settings}

\begin{table}[htbp]
\centering
\scriptsize
\resizebox{0.95\textwidth}{!}{%
\begin{tabular}{lccccccccc}
\toprule
\textbf{Experiment} & \textbf{$M$} & \textbf{$\epsilon_0$} & \textbf{$\eta_2$} & \textbf{$T_w$} & \textbf{$\alpha$} & \textbf{$\epsilon_{\min}$} & \textbf{$\delta$} & \textbf{$p\%$} & \textbf{$e_{\min}$} \\
\midrule
\multicolumn{10}{c}{\textbf{Qwen3-1.7B-Base}} \\
\midrule
GRPO + CBS & 2048 & 1 & 0.0001 & 50 & 0.9 & 0.2 & 0.008 & - & 0.1 \\
GRPO + CBS* & 1024 & 1 & 0.0001 & 50 & 0.9 & 0.2 & 0.008 & 30\% & 0.1 \\
DAPO + CBS & 2048 & 1 & 0.0001 & 50 & 0.9 & 0.2 & 0.008 & - & 0.5 \\
DAPO + CBS* & 1024 & 1 & 0.0001 & 50 & 0.9 & 0.2 & 0.008 & 30\% & 0.5 \\
GSPO + CBS & 2048 & 1 & 0.0001 & 50 & 0.9 & 0.005 & 0.008 & - & 0.5 \\
GSPO + CBS* & 1024 & 1 & 0.0001 & 50 & 0.9 & 0.005 & 0.008 & 30\% & 0.5 \\
\midrule
\multicolumn{10}{c}{\textbf{Qwen3-4B-Base}} \\
\midrule
GRPO + CBS & 2048 & 1 & 0.0001 & 50 & 0.9 & 0.2 & 0.008 & - & 0.1 \\
GRPO + CBS* & 1024 & 1 & 0.0001 & 50 & 0.9 & 0.2 & 0.008 & 30\% & 0.1 \\
DAPO + CBS & 2048 & 1 & 0.0001 & 50 & 0.9 & 0.2 & 0.008 & - & 0.5 \\
DAPO + CBS* & 1024 & 1 & 0.0001 & 50 & 0.9 & 0.2 & 0.008 & 30\% & 0.5 \\
GSPO + CBS & 2048 & 1 & 0.0001 & 0 & 0.9 & 0.005 & 0.008 & - & 0.5 \\
GSPO + CBS* & 1024 & 1 & 0.0001 & 50 & 0.9 & 0.005 & 0.008 & 30\% & 0.5 \\
\midrule
\multicolumn{10}{c}{\textbf{Qwen3-8B-Base}} \\
\midrule
GRPO + CBS & 2048 & 1 & 0.0001 & 50 & 0.9 & 0.2 & 0.008 & - & 0.1 \\
GRPO + CBS* & 1024 & 1 & 0.0001 & 50 & 0.9 & 0.2 & 0.008 & 30\% & 0.1 \\
DAPO + CBS & 2048 & 1 & 0.0001 & 50 & 0.9 & 0.2 & 0.008 & - & 0.8 \\
DAPO + CBS* & 1024 & 1 & 0.0001 & 50 & 0.9 & 0.2 & 0.008 & 30\% & 0.5 \\
GSPO + CBS & 2048 & 1 & 0.0001 & 50 & 0.9 & 0.005 & 0.008 & - & 0.5 \\
GSPO + CBS* & 1024 & 1 & 0.0001 & 50 & 0.9 & 0.005 & 0.008 & 30\% & 0.5 \\
\bottomrule
\end{tabular}}%
\caption{Scheduler hyperparameters (adjusted for different models and sizes).}
\label{tab:scheduler_hparams}
\end{table}

\textbf{Hyperparameters.} We set the learning rate to $1 \times 10^{-6}$, the batch size to 128, the mini-batch size to 16, and the rollout number per prompt to $8$.  For the scheduler, we set the buffer size to 2048 for global scheduling and $p$ to 30\% for intra-group scheduling. We keep the mini-batch size unchanged, and the policy optimization step decreases when rollouts are downsampled by the online scheduler. For example, if we generate 1024 rollouts and the scheduler downsamples them to $\lfloor 1024 \times 0.3 \rfloor = 307$, we then rearrange these rollouts into $\lfloor 307 / (16 \times 8) \rfloor = 2$ mini-batches with each having $128$ rollouts. The entropy penalty weight $w_e$ is set to $100$. In our implementation, we approximate intra-group selection by pooling rollouts from all groups generated at the same step and selecting the top-$p\%$ globally, which yields a comparable retained set while simplifying the scheduler. More method-specific hyperparameters are summarized in \tabref{tab:scheduler_hparams}. Following \cite{liu2025spec}, we use the prompt template below, 

\begin{tcolorbox}[
    colback=white,          
    colframe=black,         
    colbacktitle=darkgray,  
    coltitle=white,         
    title={Prompt Template},  
    width=1.05\textwidth
] 
\normalsize
\texttt{<|im\_start|>system}

\texttt{You are a helpful assistant.<|im\_end|>}

\texttt{<|im\_start|>user}

\texttt{\{Problem\}}

\texttt{Please reason step by step, and put your final answer within boxed\{\}.<|im\_end|>}

\texttt{<|im\_start|>assistant}

\end{tcolorbox}

\textbf{Train and evaluation.} We control the rollout volume during training to ensure a fair comparison across different policy optimization methods. Specifically, we train GRPO and GSPO for 500 steps. For DAPO, its dynamic sampling requires first upsampling the rollout by a factor of three, followed by downsampling the non-zero advantage ones. Therefore, we train DAPO for 200 steps. During training, we evaluate every 20 steps, reporting Avg@1 for larger datasets (i.e., MATH500, Minerva, and Olympiad) and Avg@16 for smaller datasets (i.e., AIME24 and AIME25). After training, we use the checkpoint with the highest average evaluation score to perform evaluation on the same dataset, computing Avg@4 for larger datasets and Avg@32 for smaller datasets. The temperature and maximum response length are set to 1.0 and 4096, respectively, for both training and evaluation. For the reward function, we use \textit{Math\_Verify}\footnote{https://github.com/huggingface/Math-Verify} to compare the response with the answer, assigning a value of 1 if they match, and 0 otherwise.

\section{More Experimental Results}
\begin{figure}[htbp]
    \centering
    \includegraphics[width=0.31\linewidth]{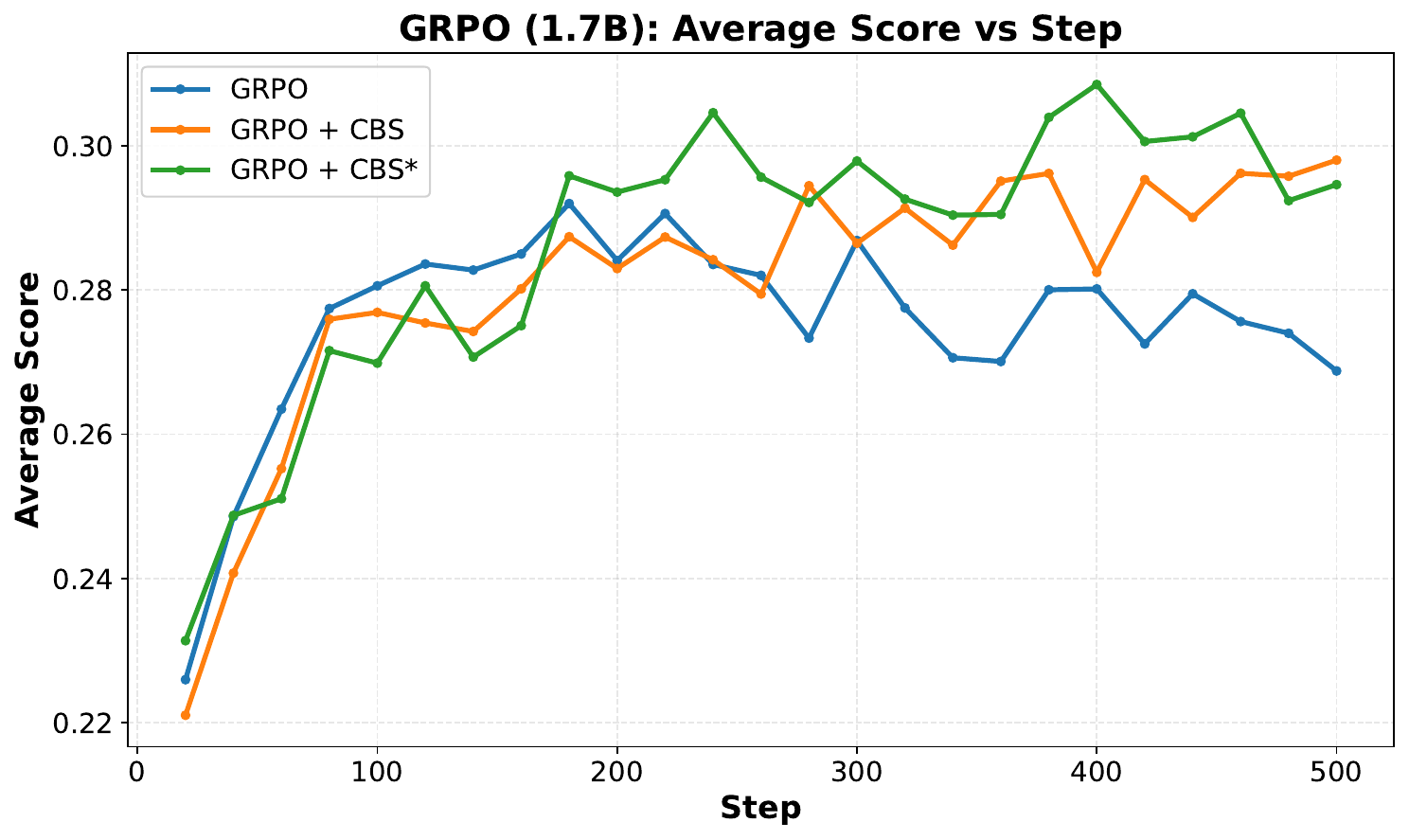}
    \includegraphics[width=0.31\linewidth]{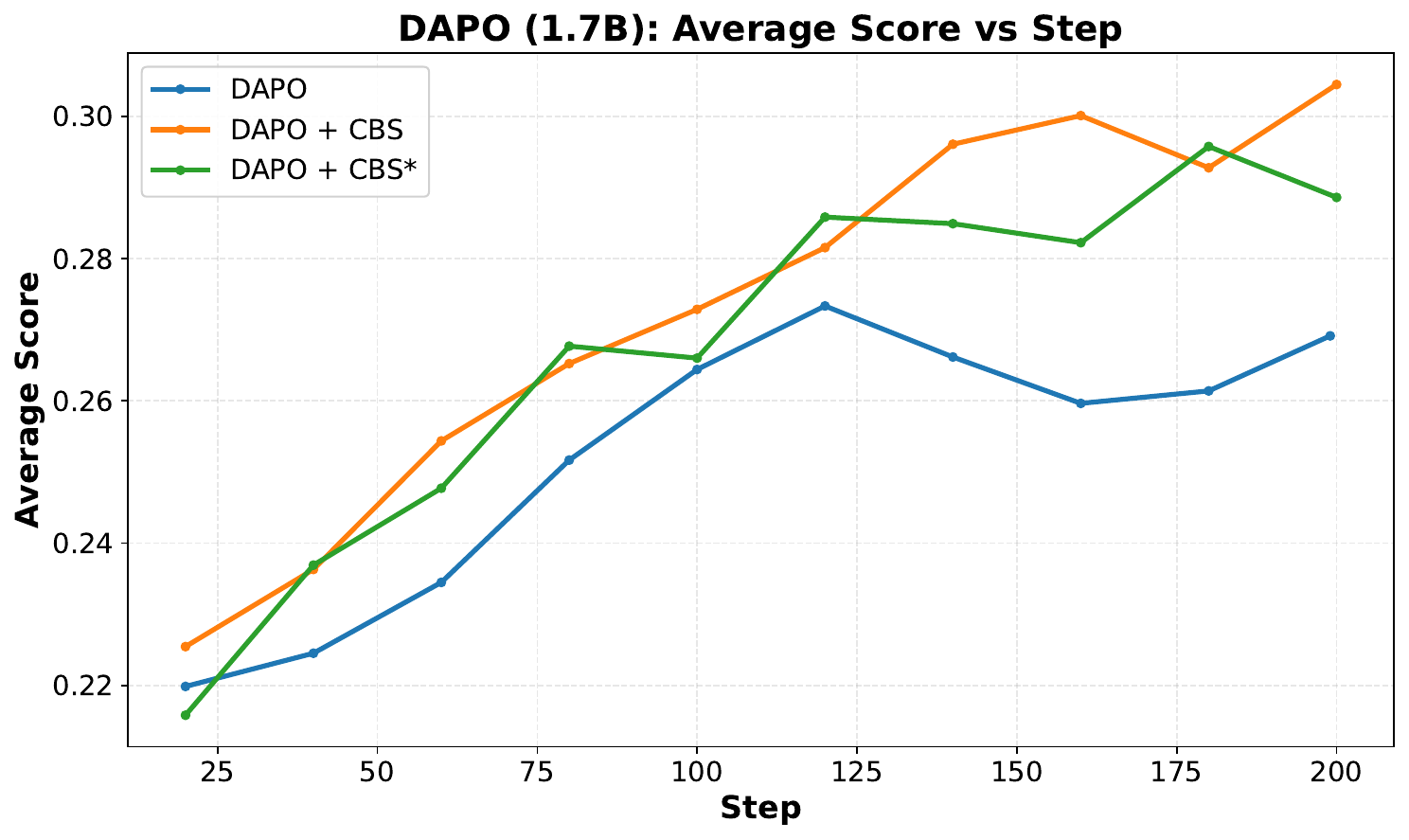}
    \includegraphics[width=0.31\linewidth]{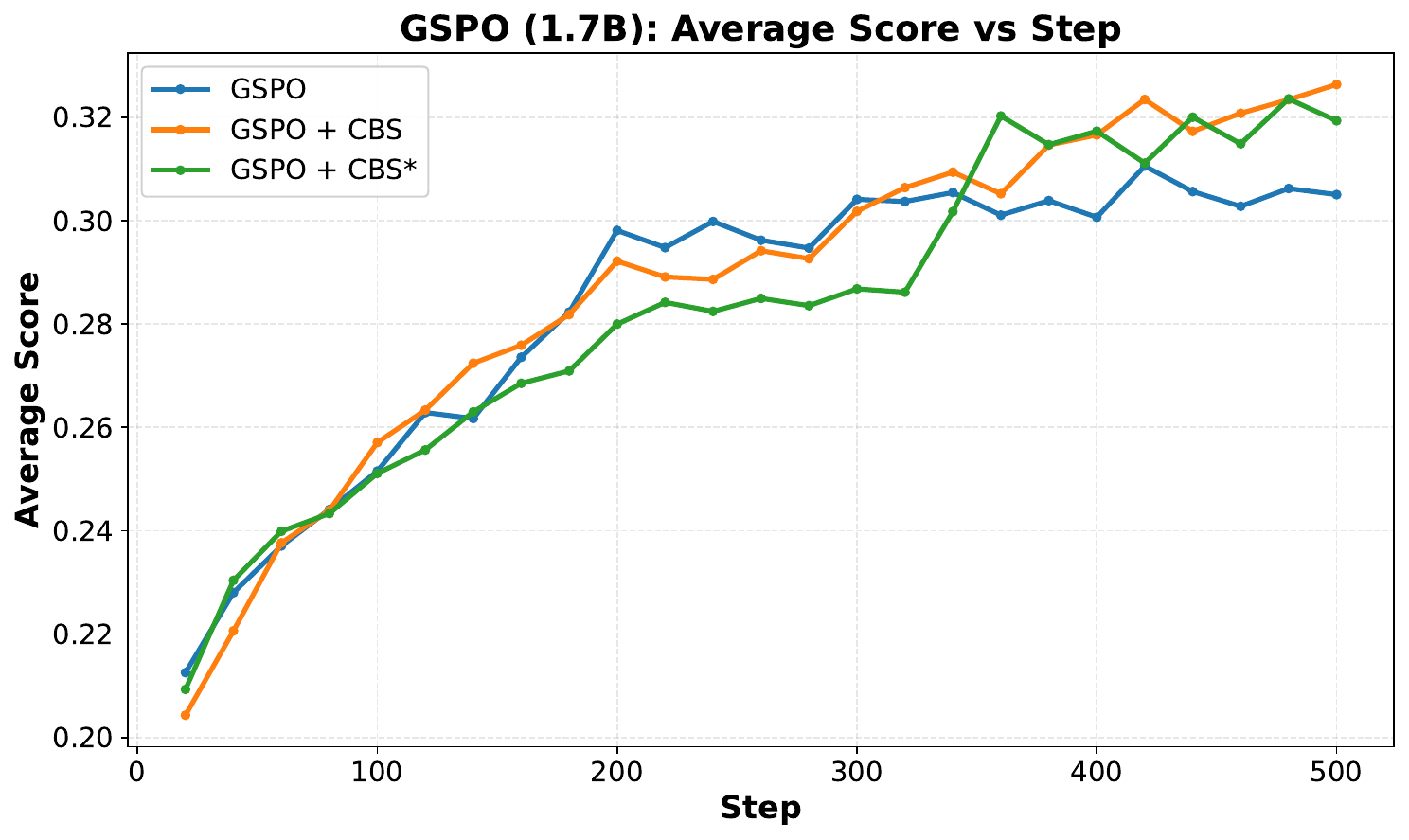}
    \includegraphics[width=0.31\linewidth]{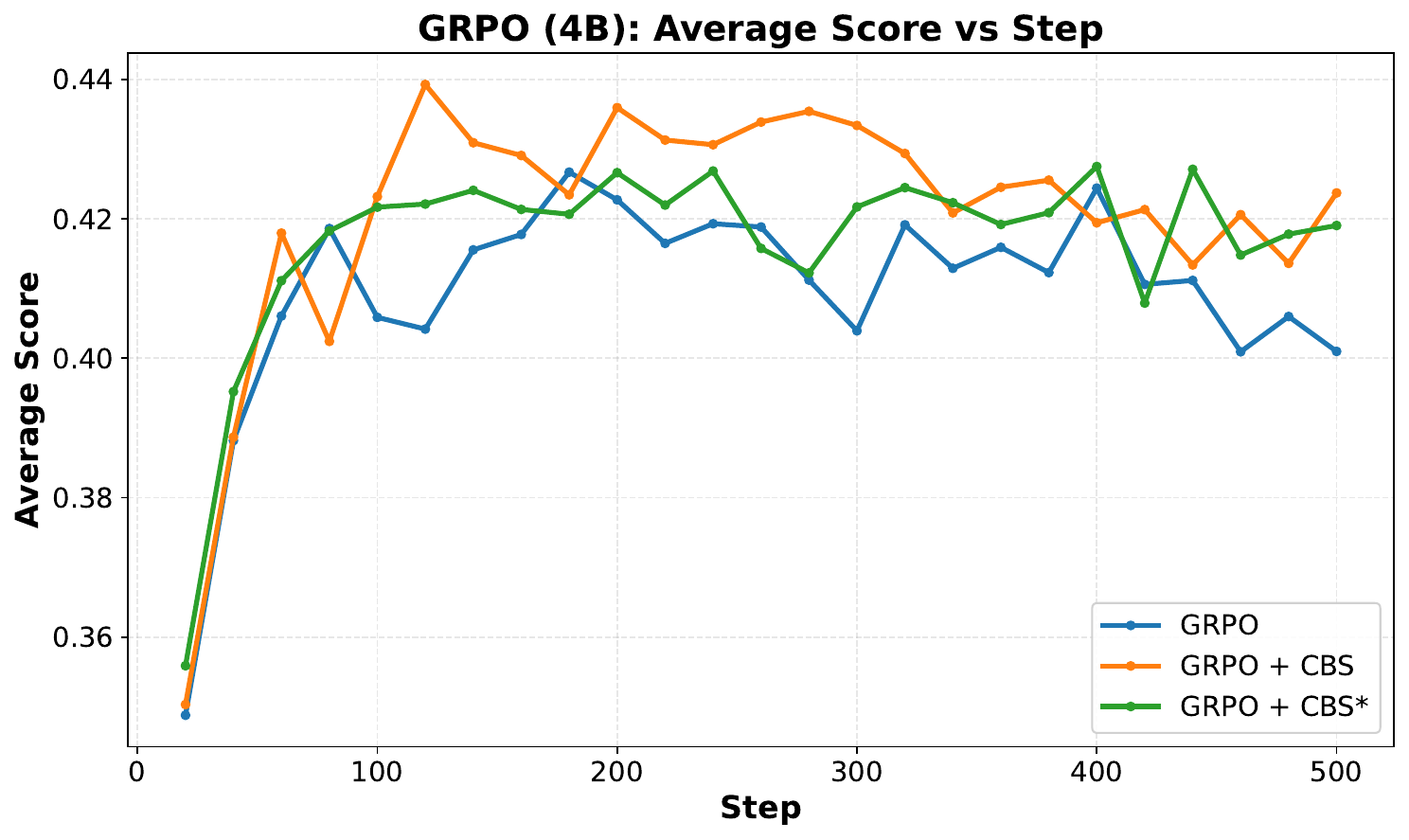}
    \includegraphics[width=0.31\linewidth]{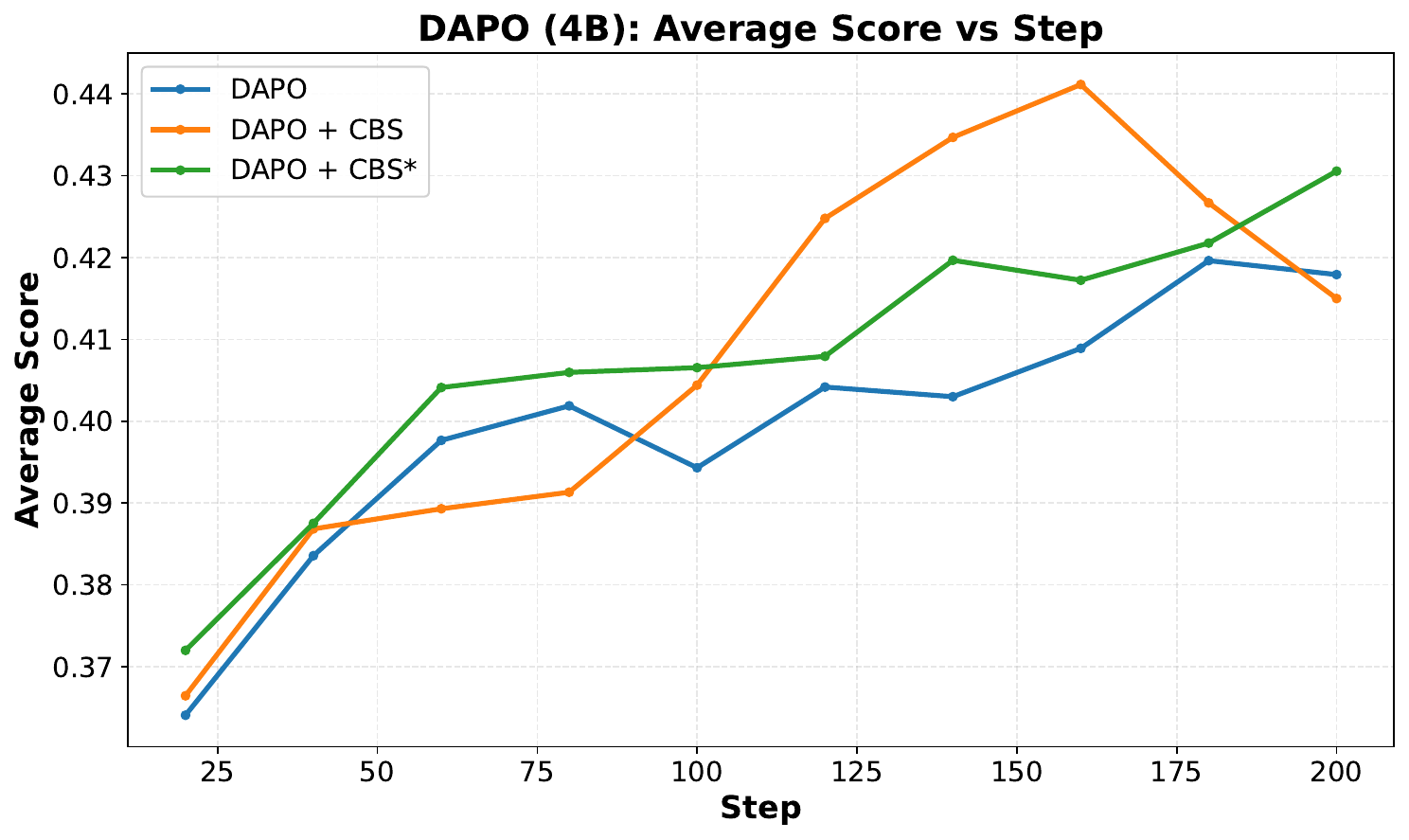}
    \includegraphics[width=0.31\linewidth]{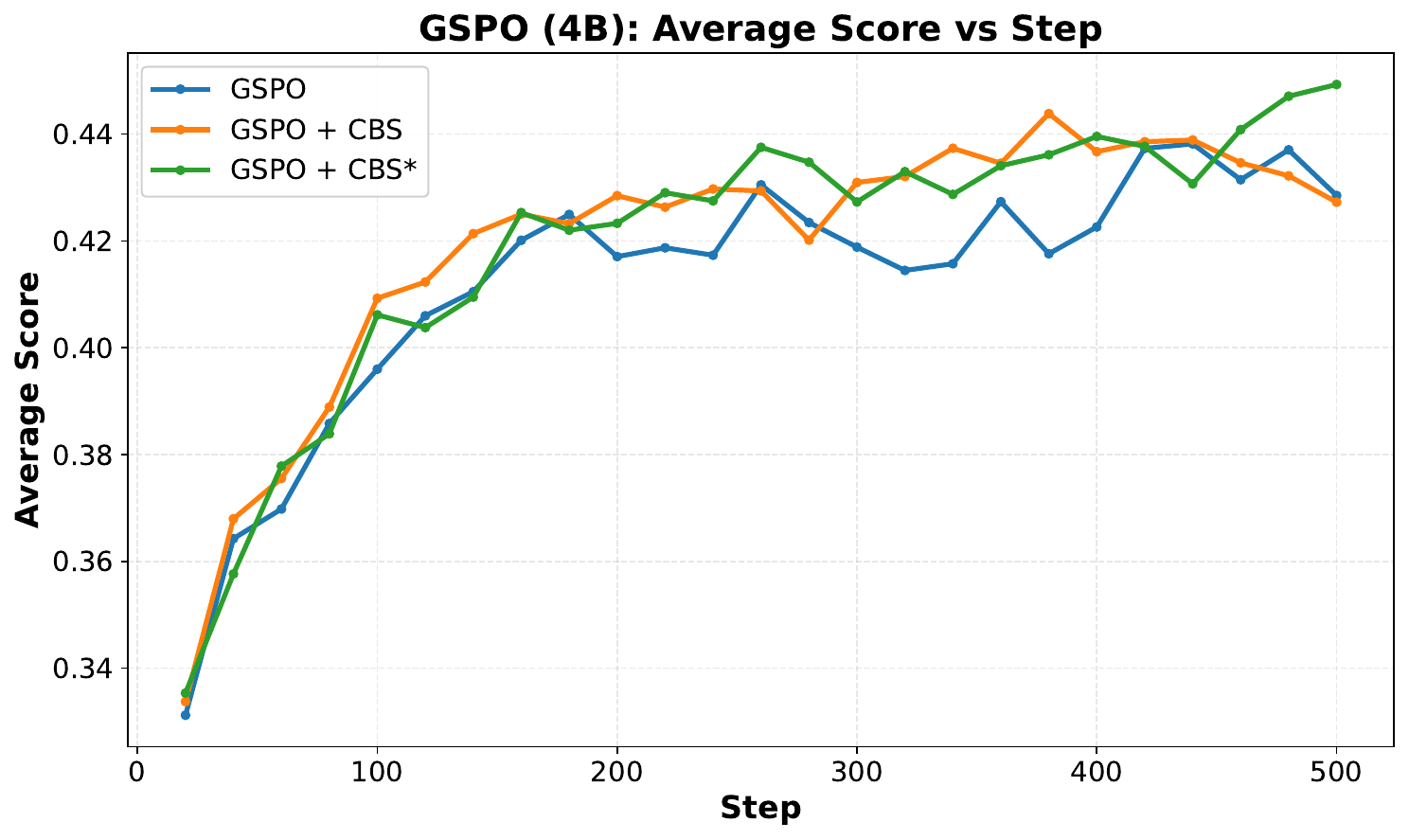}
    \includegraphics[width=0.31\linewidth]{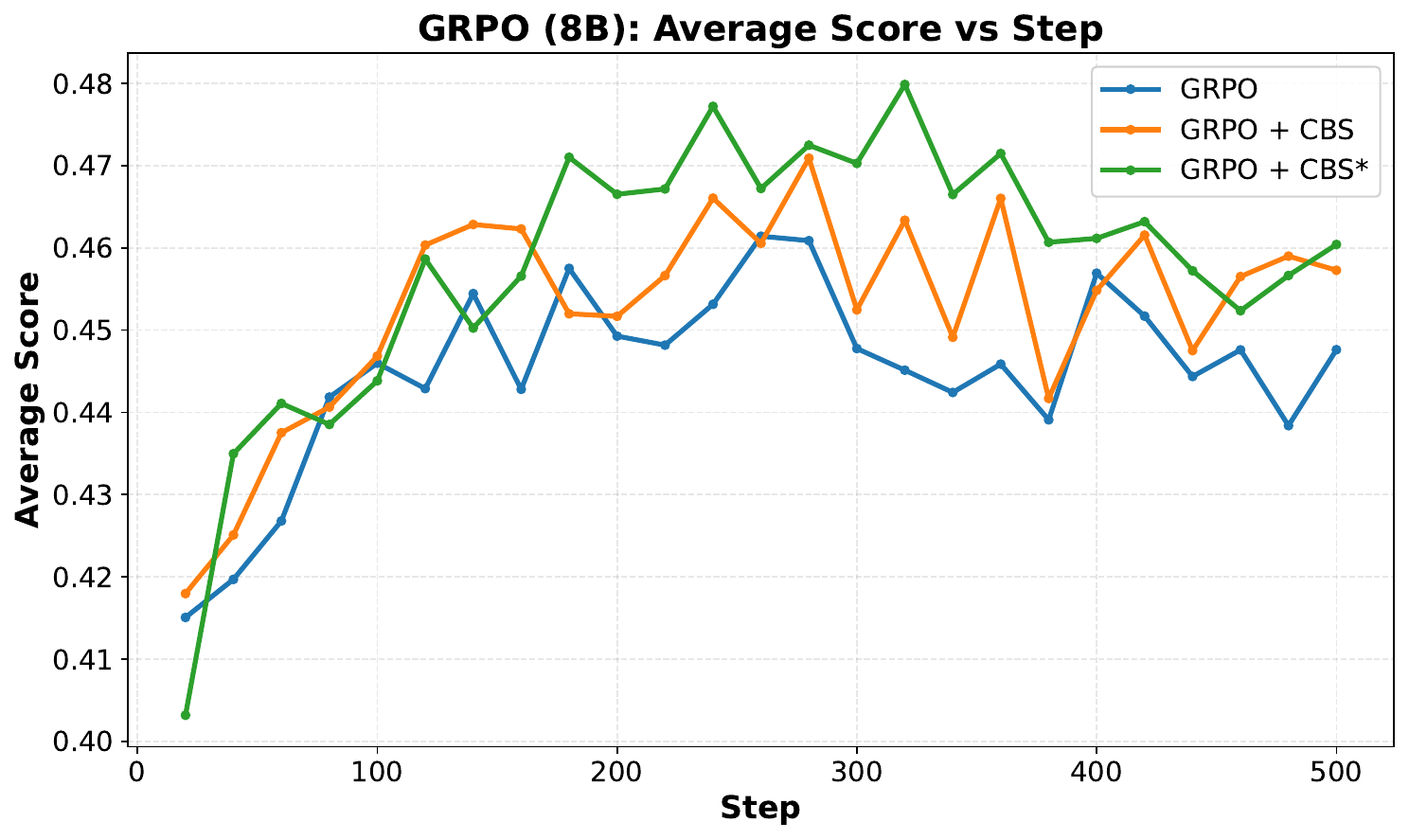}
    \includegraphics[width=0.31\linewidth]{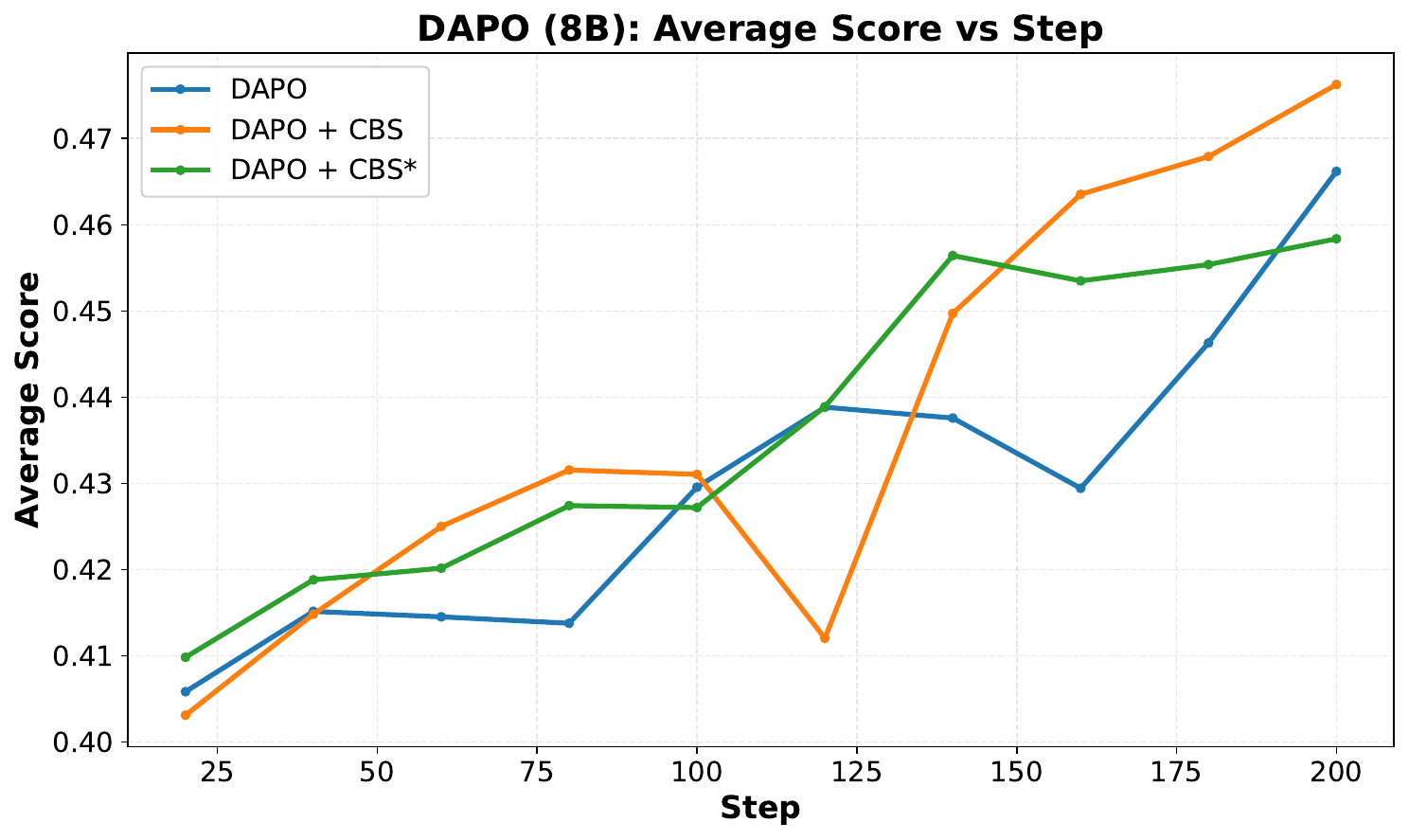}
    \includegraphics[width=0.31\linewidth]{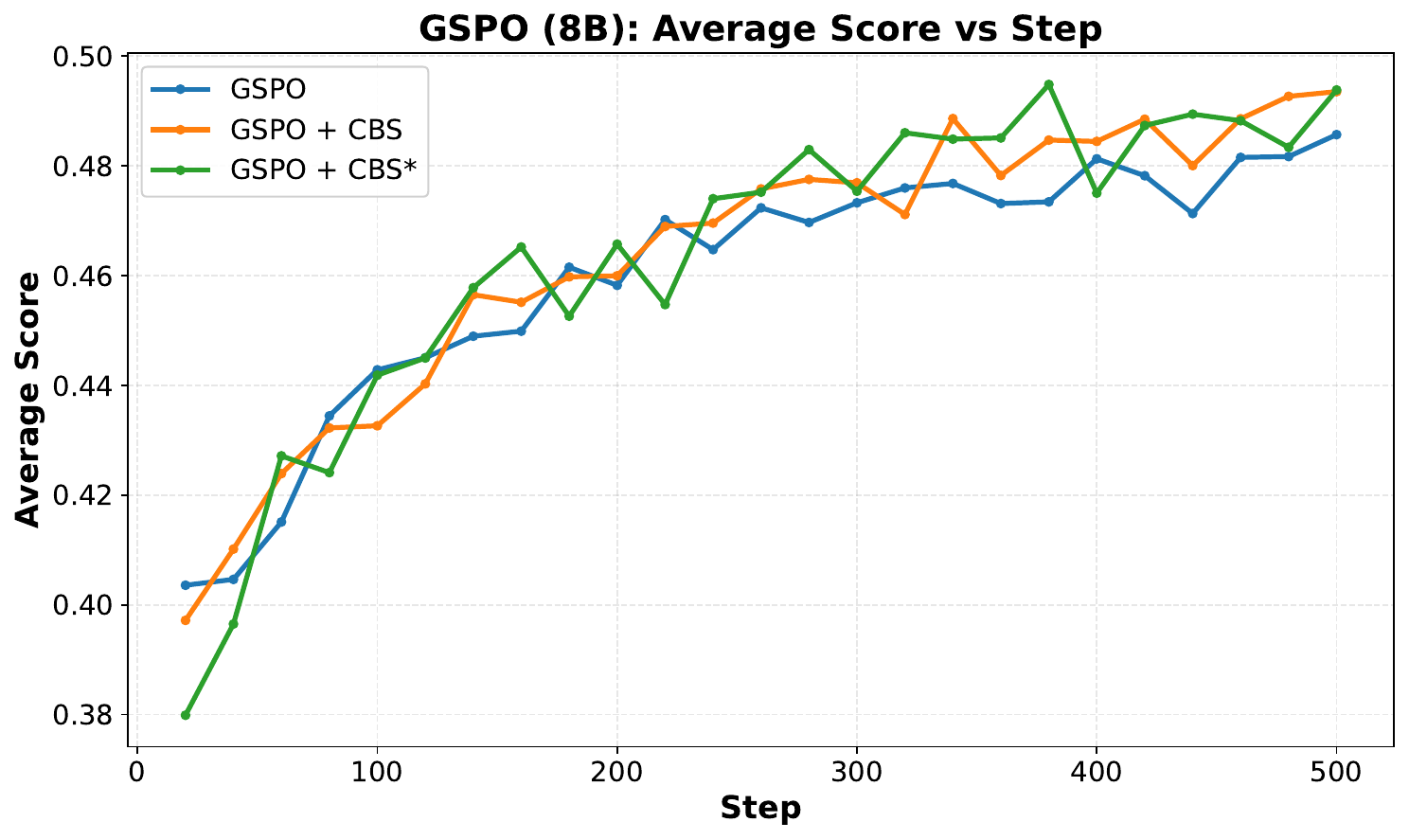}
    \caption{Training dynamics of the average score on the validation set}
    \label{fig:all_dynamics_average_score}
\end{figure}

\begin{figure}[htbp]
    \centering
    \includegraphics[width=0.31\linewidth]{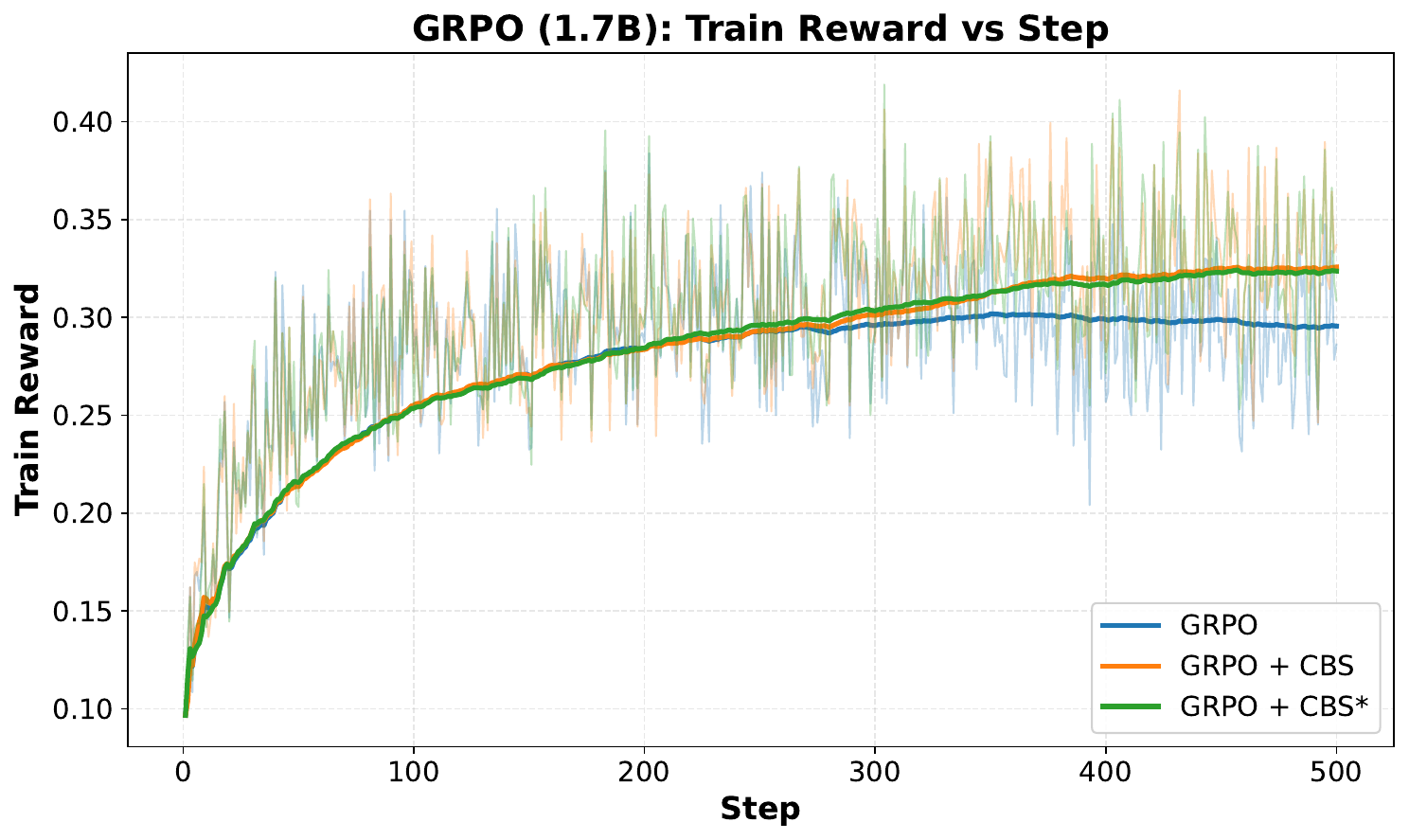}
    \includegraphics[width=0.31\linewidth]{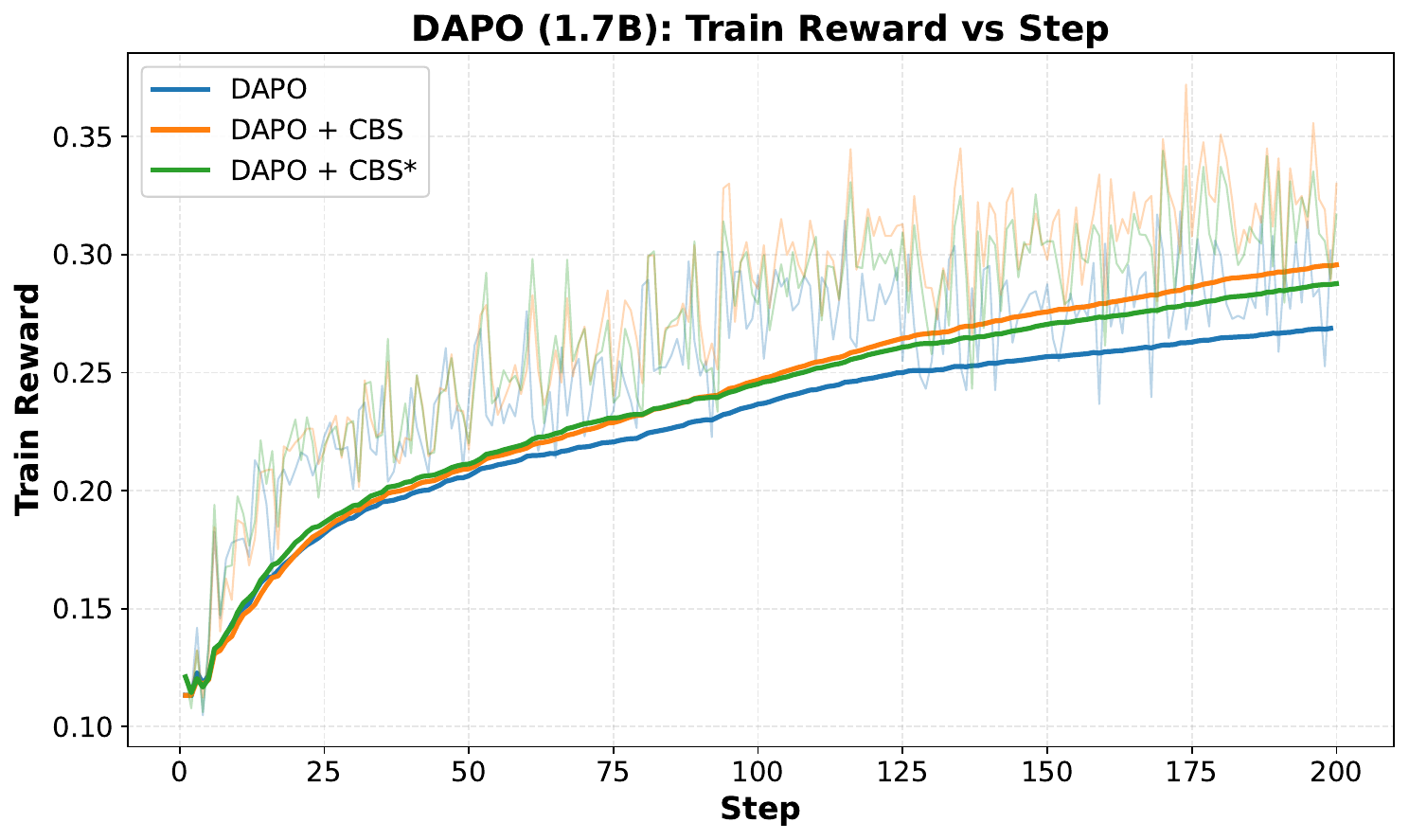}
    \includegraphics[width=0.31\linewidth]{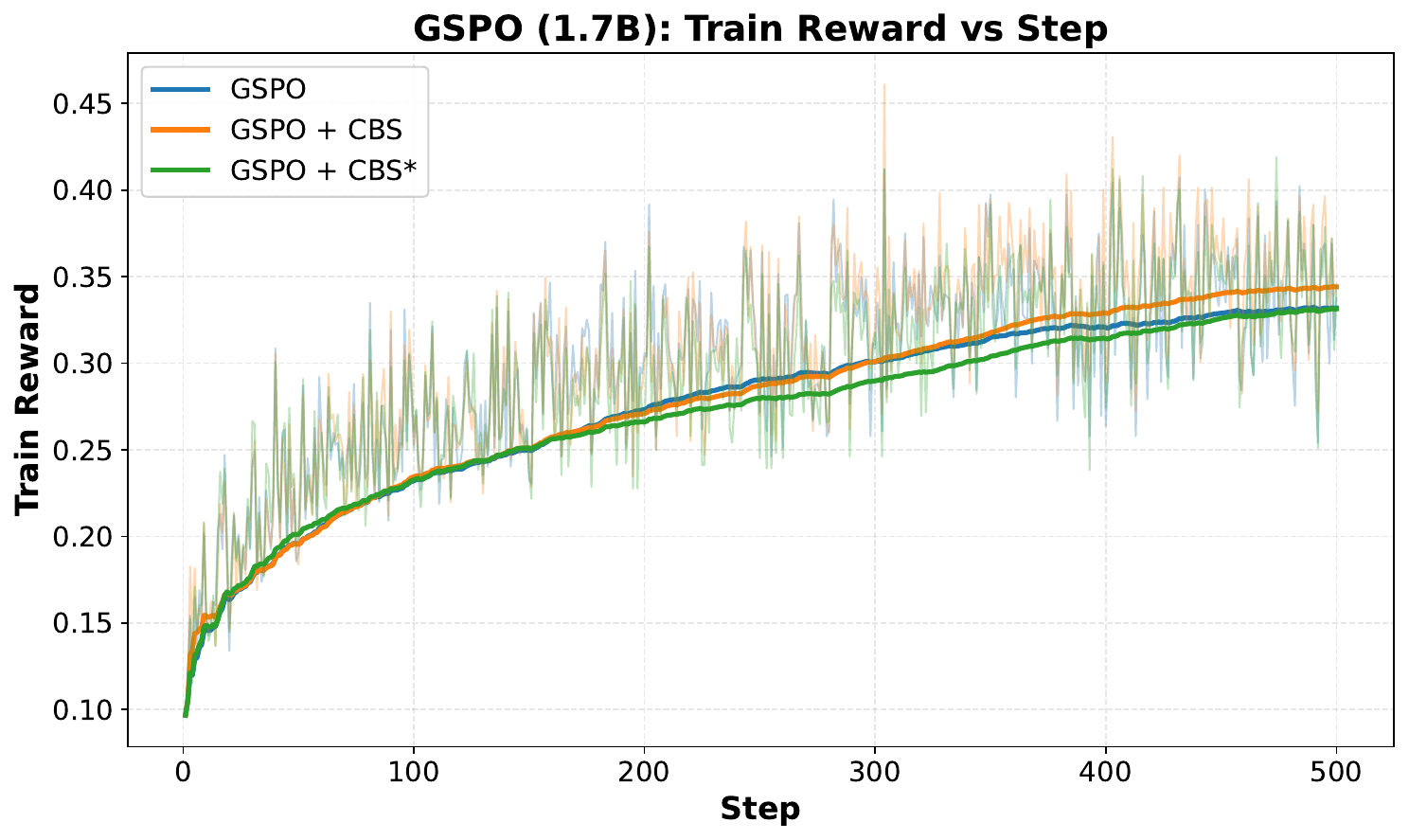}
    \includegraphics[width=0.31\linewidth]{figure/GRPO_4B_vs_step.pdf}
    \includegraphics[width=0.31\linewidth]{figure/DAPO_4B_vs_step.pdf}
    \includegraphics[width=0.31\linewidth]{figure/GSPO_4B_vs_step.pdf}
    \includegraphics[width=0.31\linewidth]{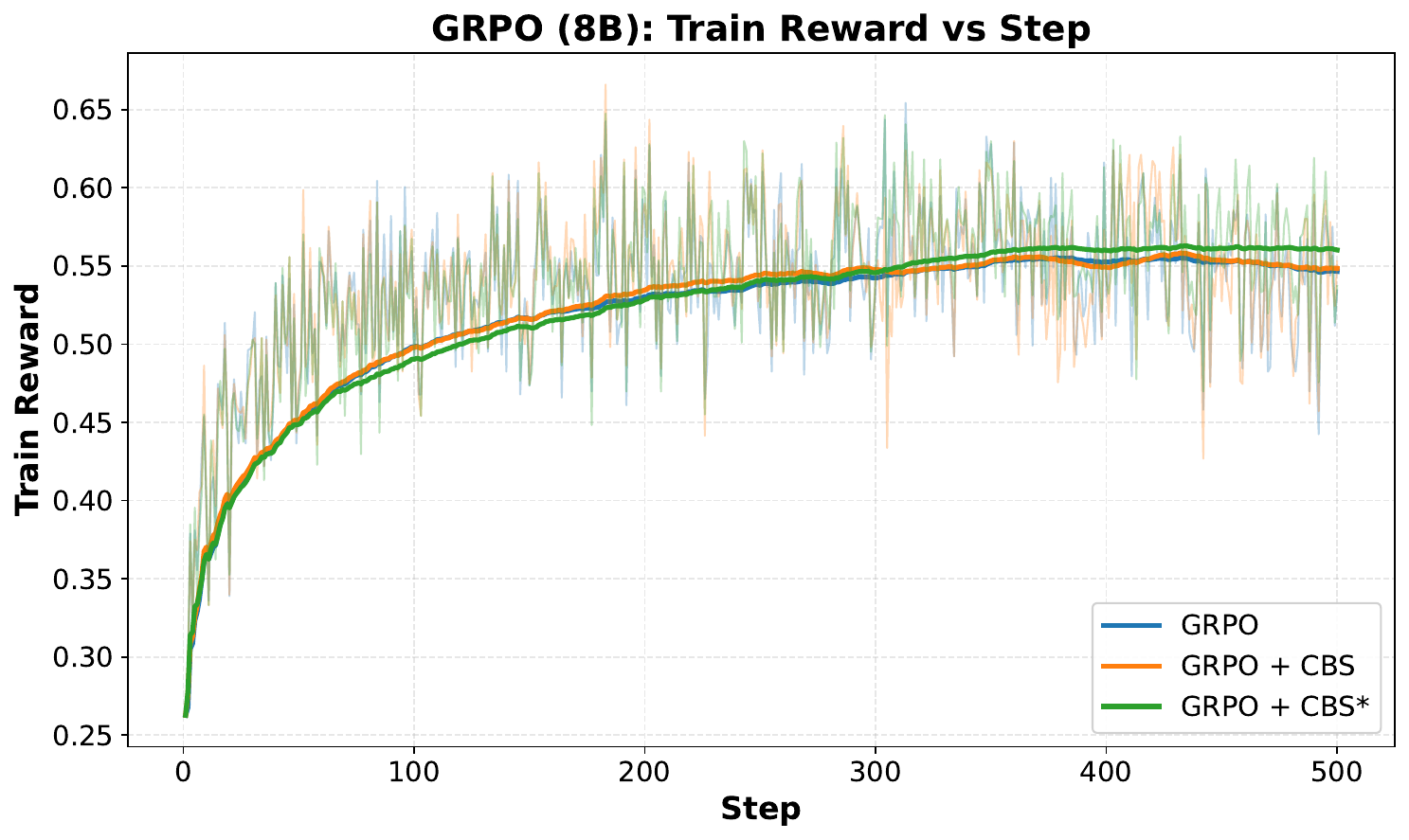}
    \includegraphics[width=0.31\linewidth]{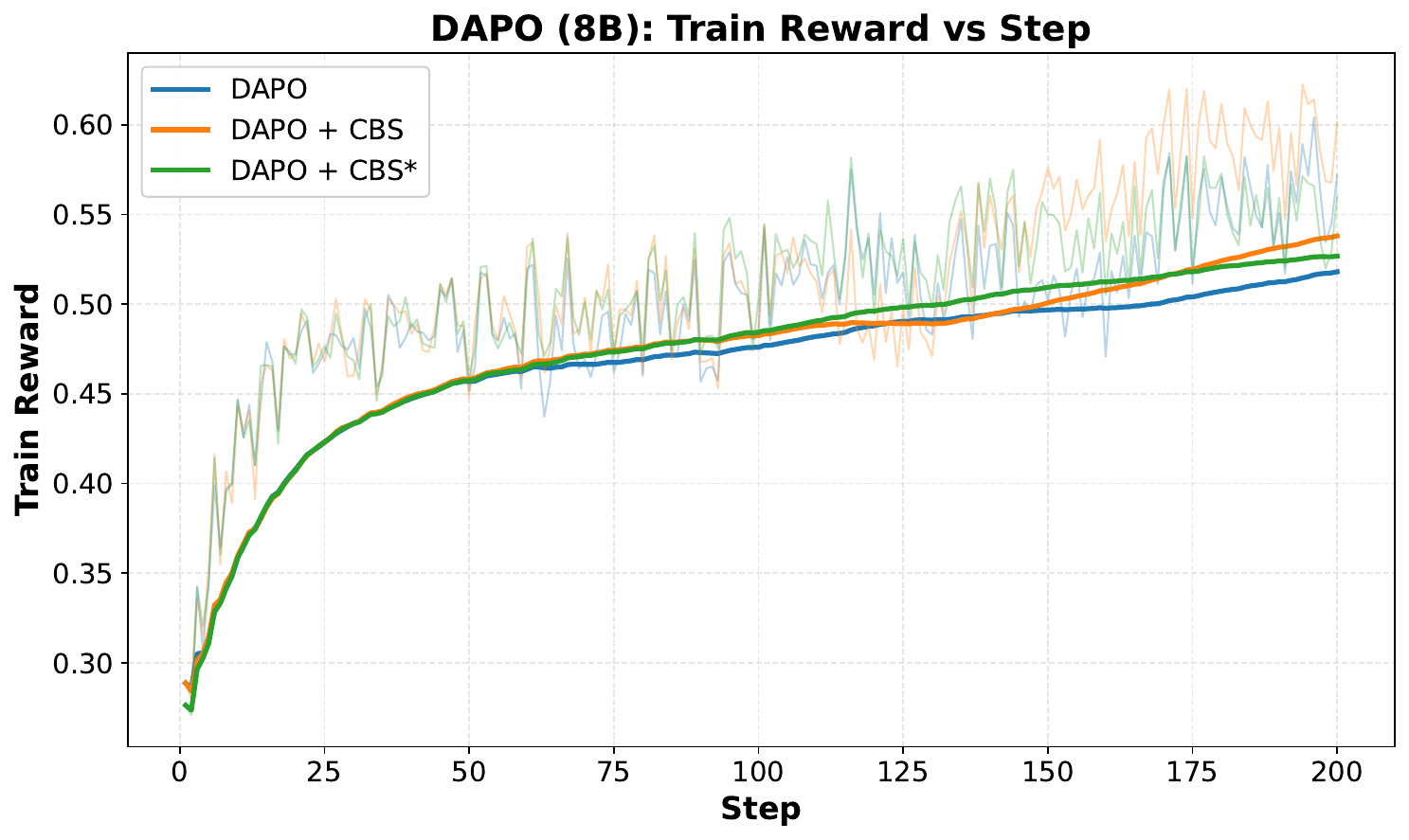}
    \includegraphics[width=0.31\linewidth]{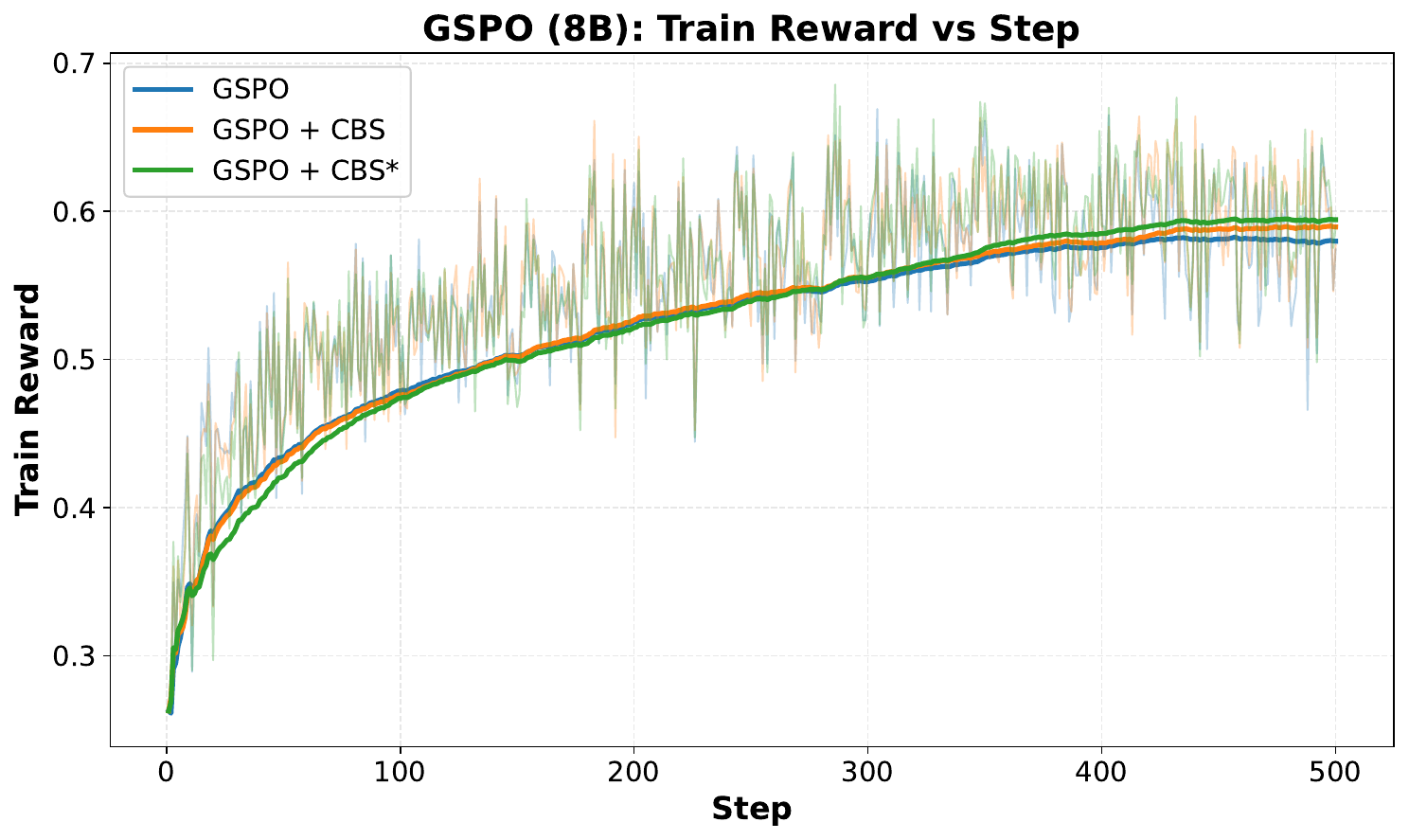}
    \caption{Training dynamics of the train reward.}
    \label{fig:all_dynamics_train_reward}
\end{figure}
\begin{figure}[htbp]
    \centering
    \includegraphics[width=0.31\linewidth]{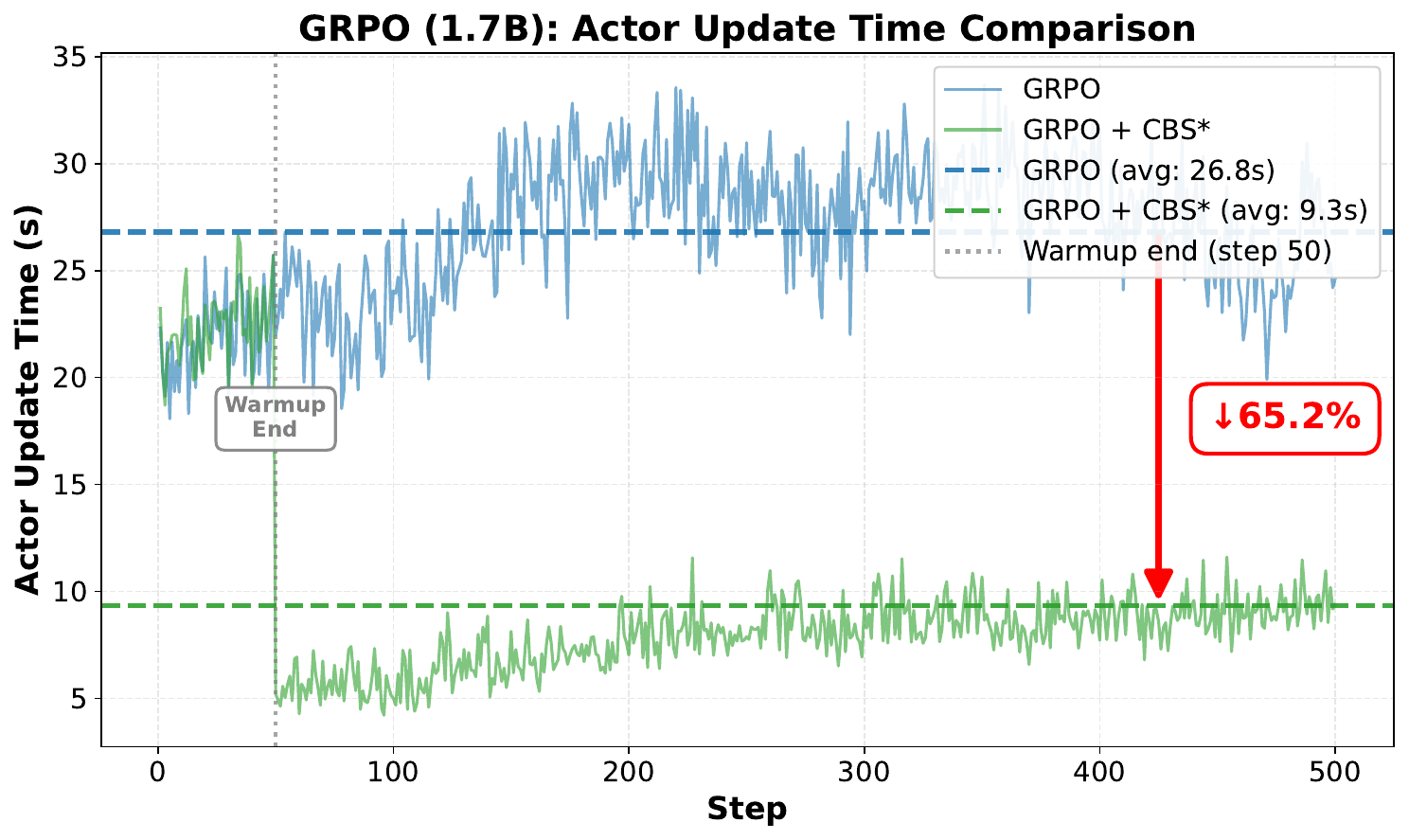}
    \includegraphics[width=0.31\linewidth]{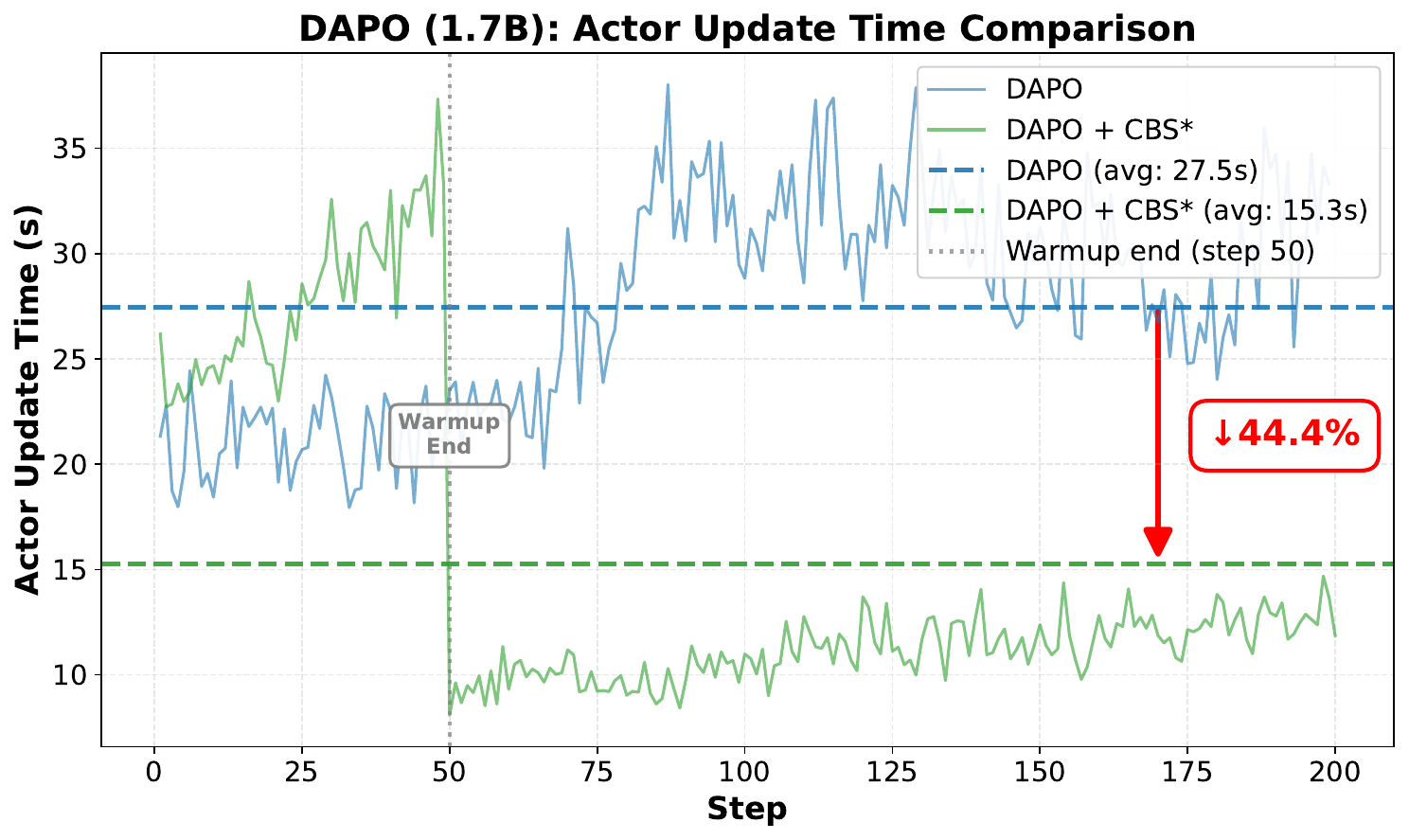}
    \includegraphics[width=0.31\linewidth]{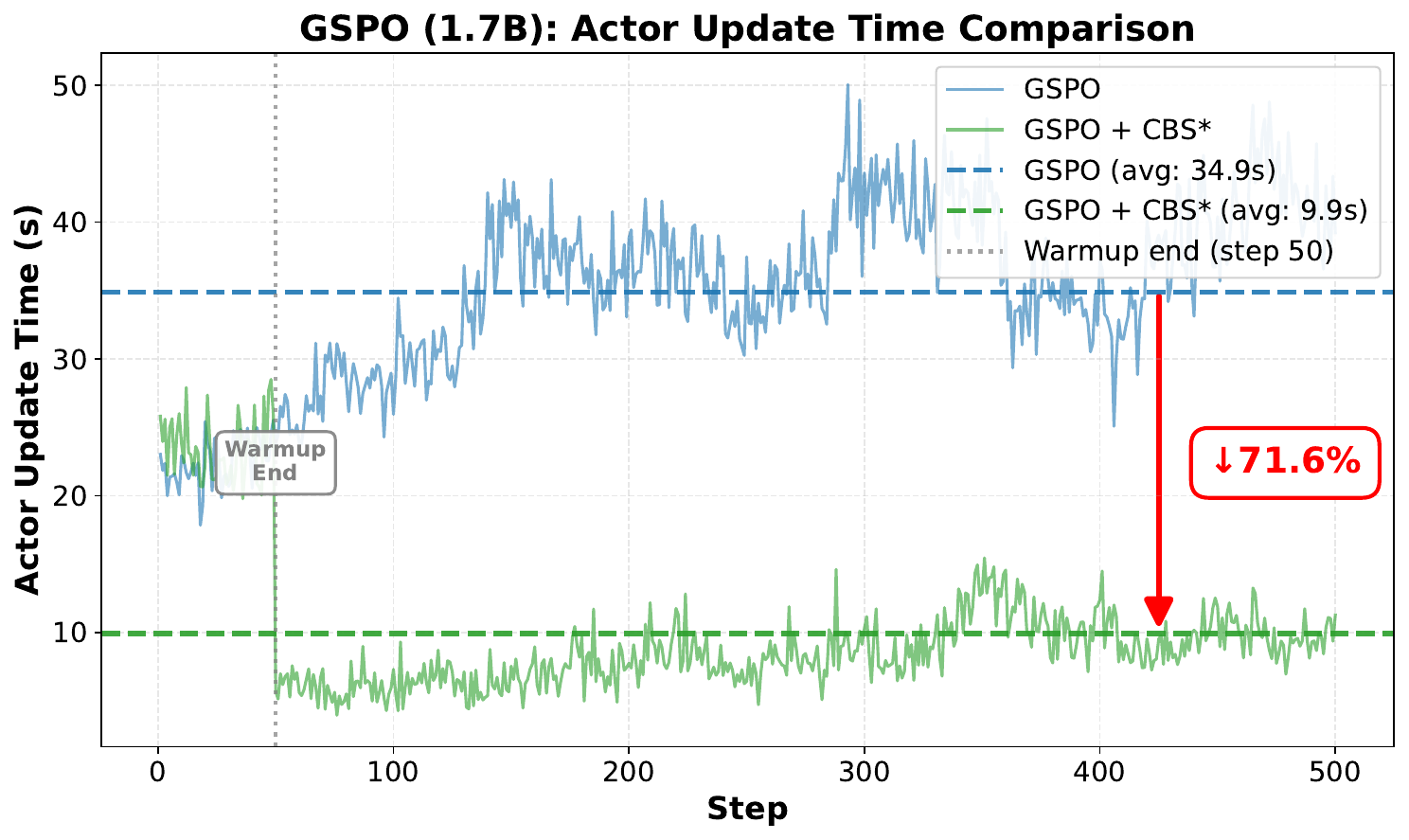}
    \includegraphics[width=0.31\linewidth]{figure/GRPO_4B_actor_update_time_comparison.pdf}
    \includegraphics[width=0.31\linewidth]{figure/DAPO_4B_actor_update_time_comparison.pdf}
    \includegraphics[width=0.31\linewidth]{figure/GSPO_4B_actor_update_time_comparison.pdf}
    \includegraphics[width=0.31\linewidth]{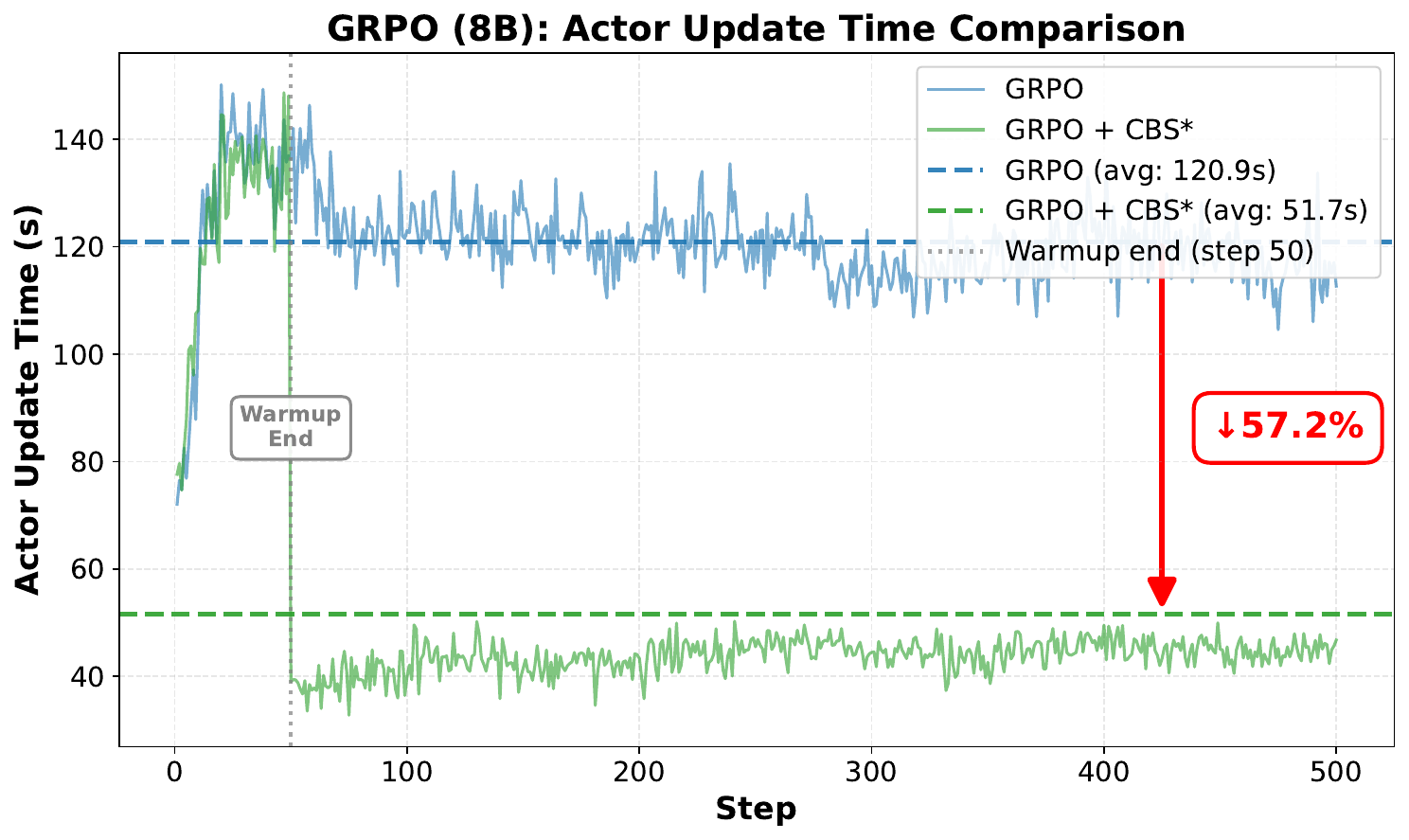}
    \includegraphics[width=0.31\linewidth]{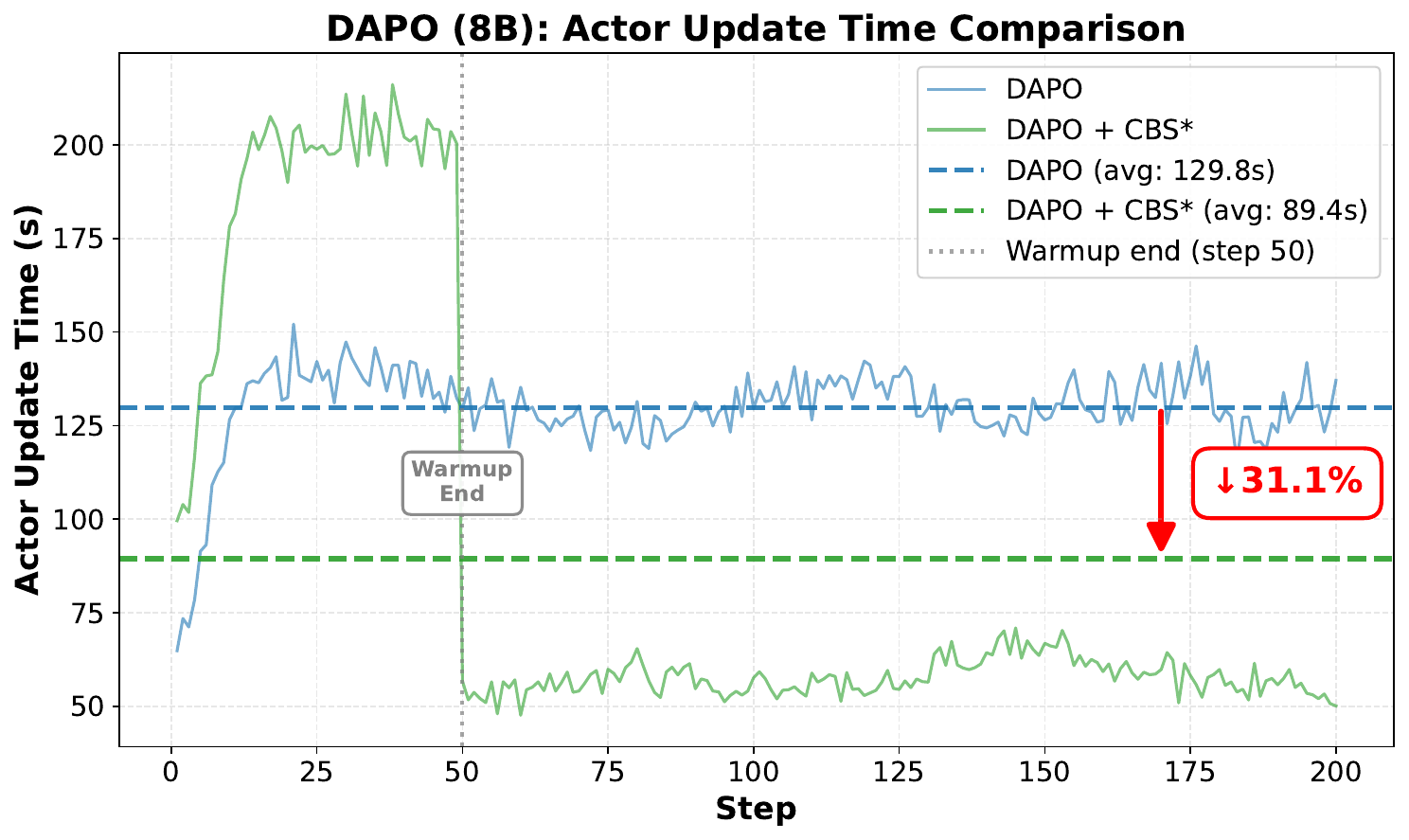}
    \includegraphics[width=0.31\linewidth]{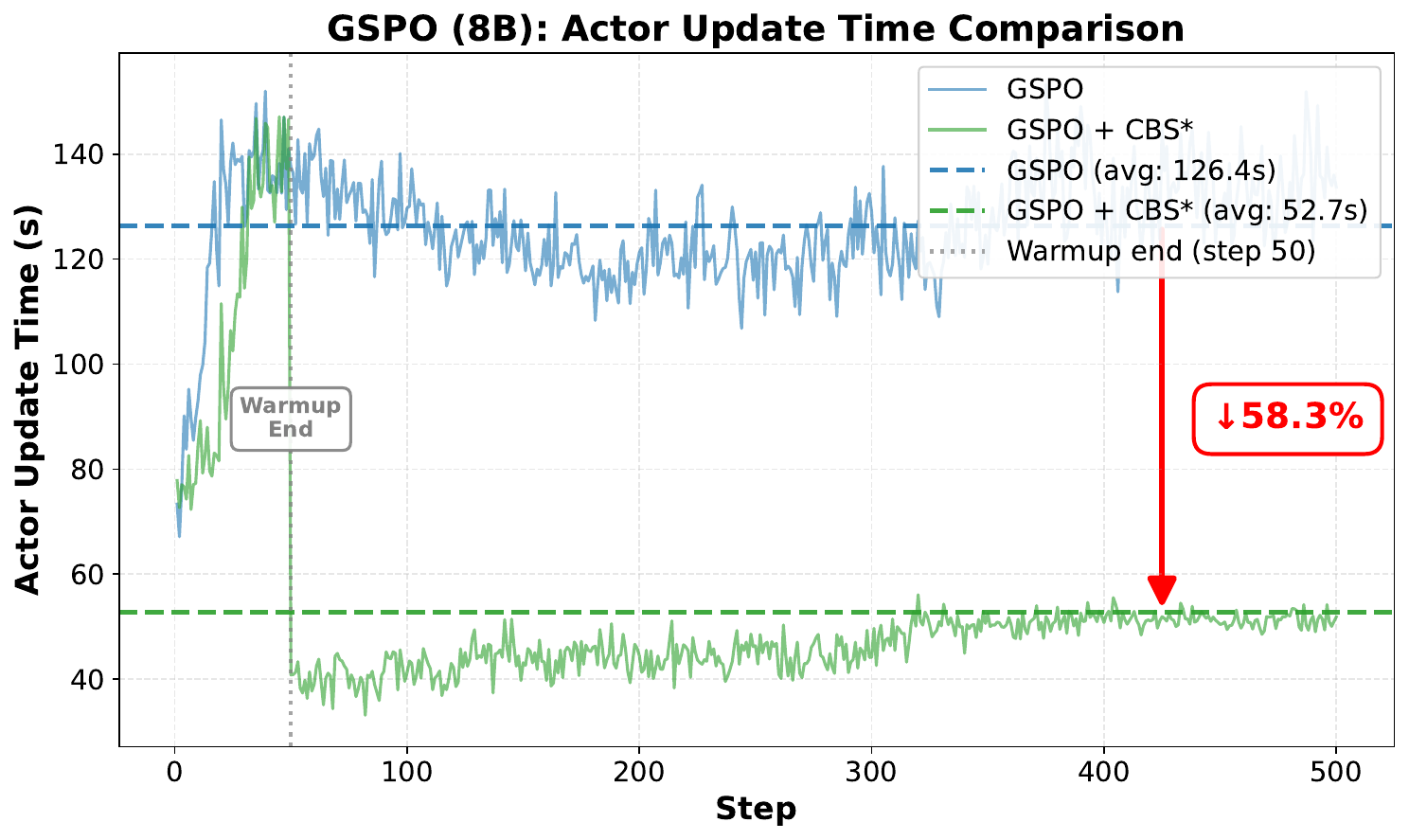}
    \caption{Training dynamics of the actor update time.}
    \label{fig:all_actor_update_time}
\end{figure}

\begin{figure}
    \centering
    \includegraphics[width=0.95\linewidth]{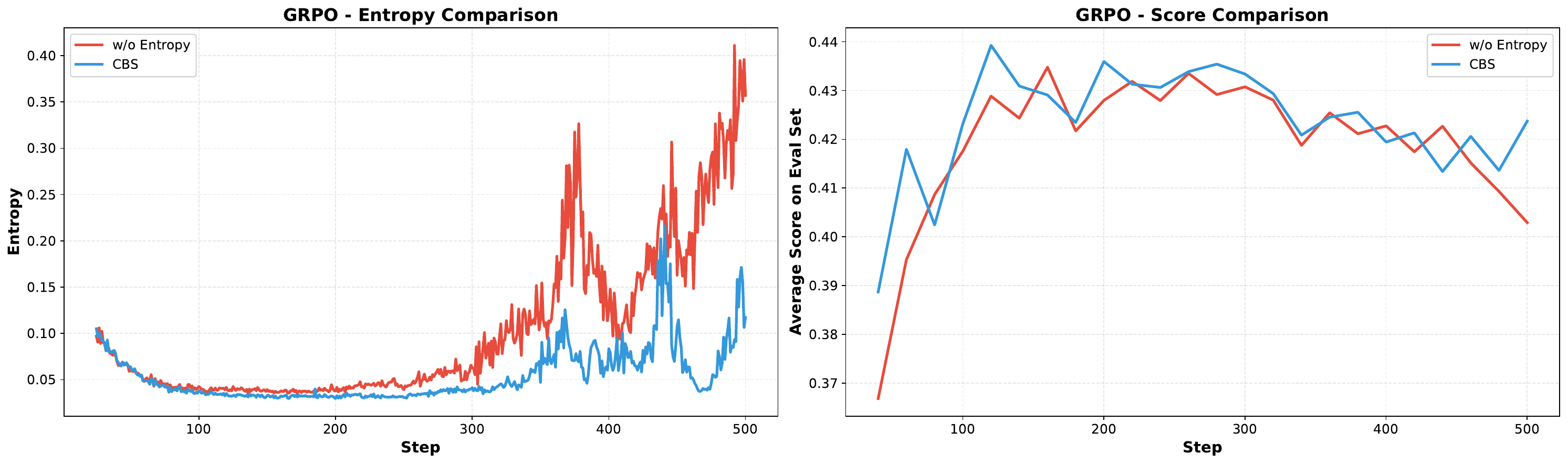} \hfill
    \includegraphics[width=0.95\linewidth]{figure/dapo_entropy_comparison.pdf} \hfill
    \includegraphics[width=0.95\linewidth]{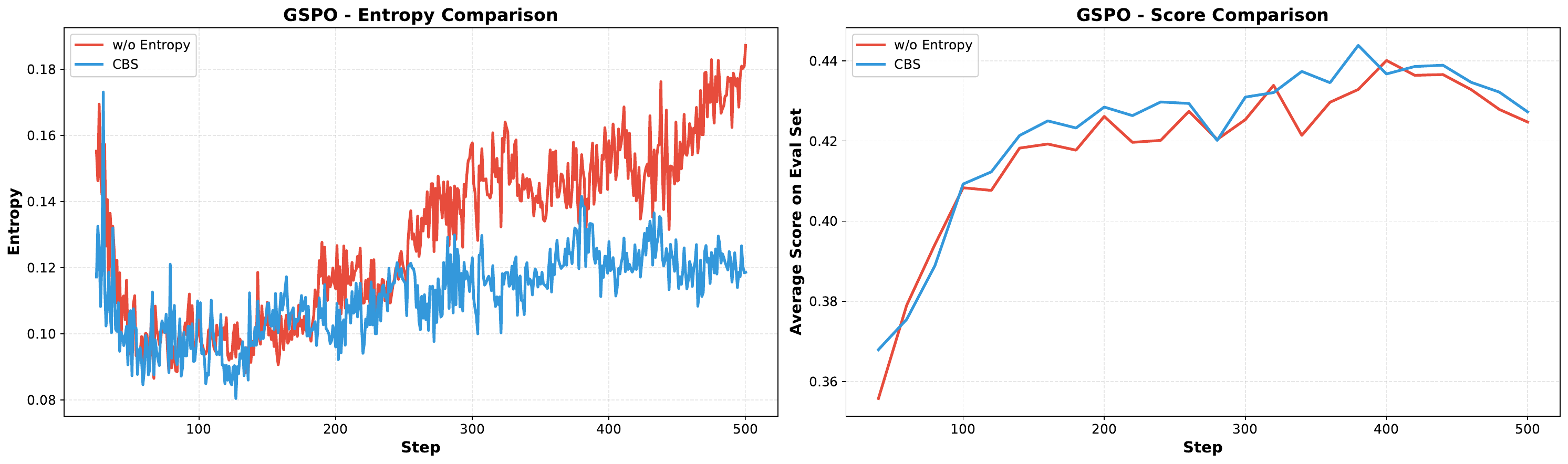}
    \caption{Entropy dynamics of \model~and w/o Entropy.}
    \label{fig:all_entropy_dynamic}
\end{figure}

\begin{table}[htbp]
\centering
\caption{Dataset-specific ablation study results for Qwen3-4B-Base under global scheduling.}
\label{tab:dataset_specific_ablation_study results}
\begin{tabular}{lccccccc}
\toprule
\textbf{Variant} & \textbf{AIME24} & \textbf{AIME25} & \textbf{AMC23} & \textbf{MATH500} & \textbf{Minerva} & \textbf{Olympiad} & \textbf{Average} \\
\midrule
\multicolumn{8}{c}{\textit{GRPO}} \\
\midrule
CBS          & 20.31 & 18.44 & 63.05 & 74.45 & 34.47 & 43.73 & 42.41 \\
w/o EMA      & 20.10 & 14.37 & 61.95 & 74.80 & 34.65 & 44.47 & 41.72 \\
w/o Entropy  & 19.17 & 21.15 & 63.83 & 74.35 & 33.00 & 44.07 & \textbf{42.60} \\
Random       & 17.19 & 15.83 & 61.72 & 74.85 & 35.29 & 43.58 & 41.41 \\
\midrule
\multicolumn{8}{c}{\textit{DAPO}} \\
\midrule
CBS          & 20.62 & 20.94 & 62.89 & 75.85 & 36.58 & 46.70 & \textbf{43.93} \\
w/o EMA      & 17.19 & 16.04 & 59.30 & 73.55 & 34.65 & 42.21 & 40.49 \\
w/o Entropy  & 17.81 & 17.50 & 61.64 & 73.95 & 31.43 & 43.32 & 40.94 \\
Random       & 15.94 & 17.08 & 58.44 & 73.95 & 31.07 & 41.54 & 39.67 \\
\midrule
\multicolumn{8}{c}{\textit{GSPO}} \\
\midrule
CBS          & 21.46 & 18.33 & 67.03 & 76.05 & 36.49 & 44.18 & \textbf{43.92} \\
w/o EMA      & 18.02 & 17.40 & 64.92 & 75.50 & 32.63 & 44.81 & 42.21 \\
w/o Entropy  & 19.17 & 19.17 & 67.11 & 76.65 & 35.75 & 44.92 & 43.80 \\
Random       & 16.88 & 15.21 & 61.17 & 75.15 & 32.72 & 42.14 & 40.55 \\
\bottomrule
\end{tabular}
\end{table}

\begin{figure}[htbp]
    \centering
    \includegraphics[width=0.49\linewidth]{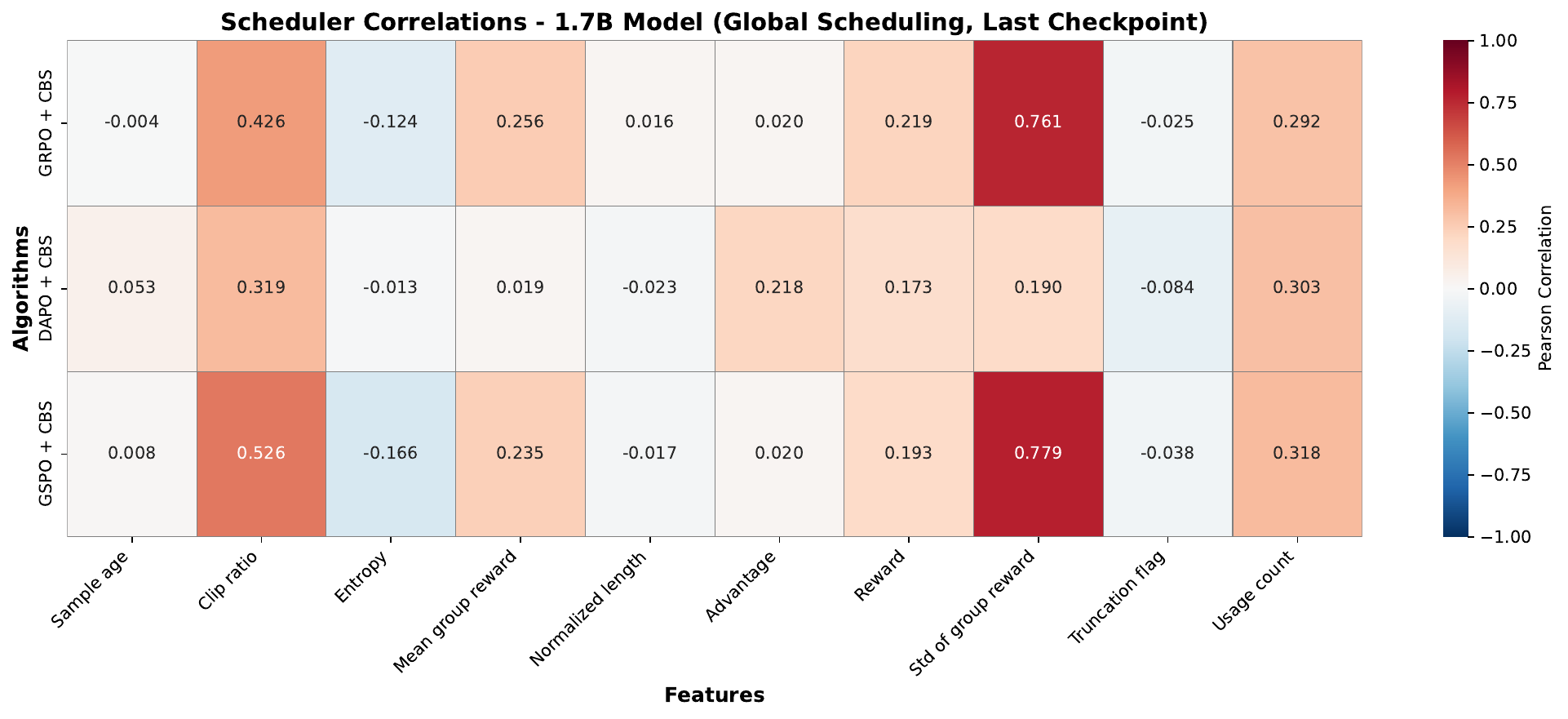} \hfill
    \includegraphics[width=0.49\linewidth]{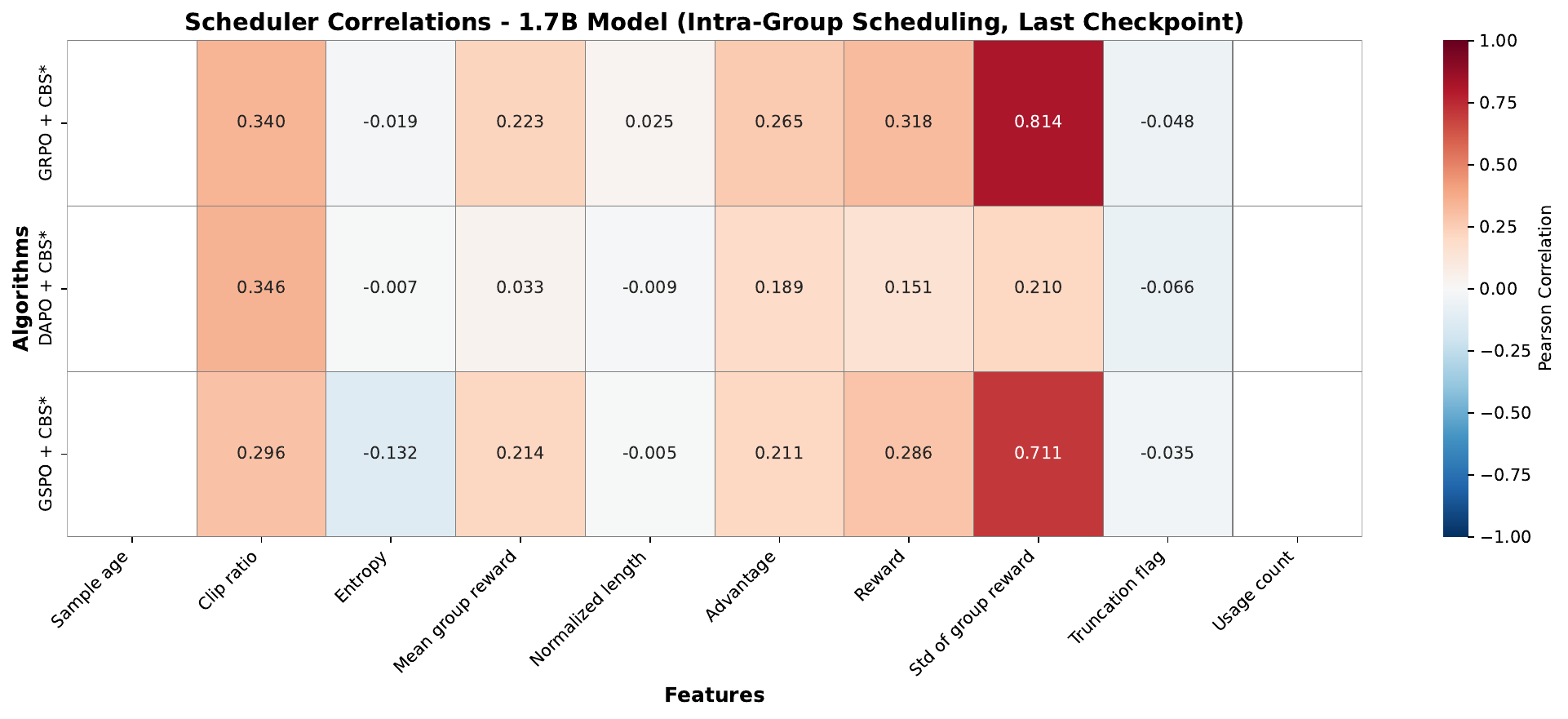} \hfill
    \includegraphics[width=0.49\linewidth]{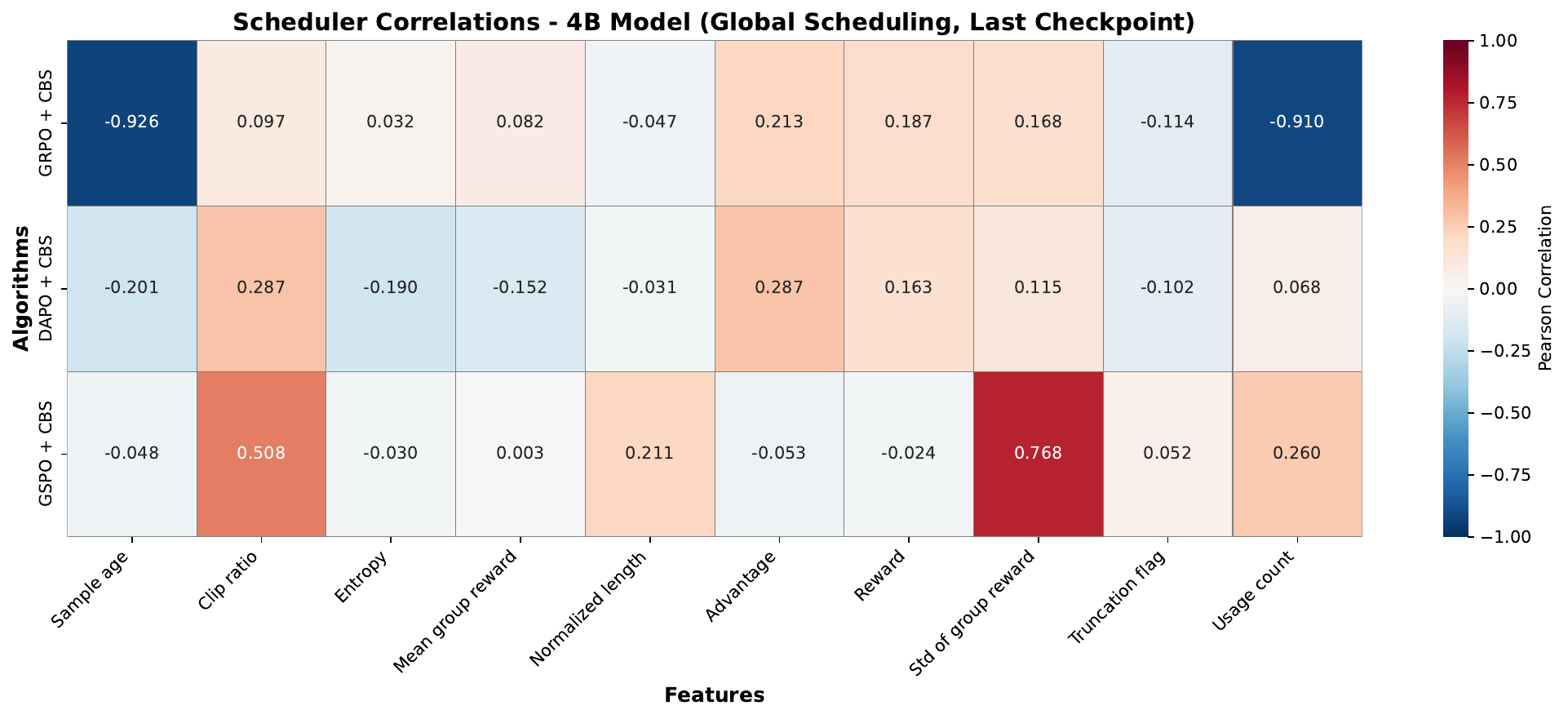} \hfill
    \includegraphics[width=0.49\linewidth]{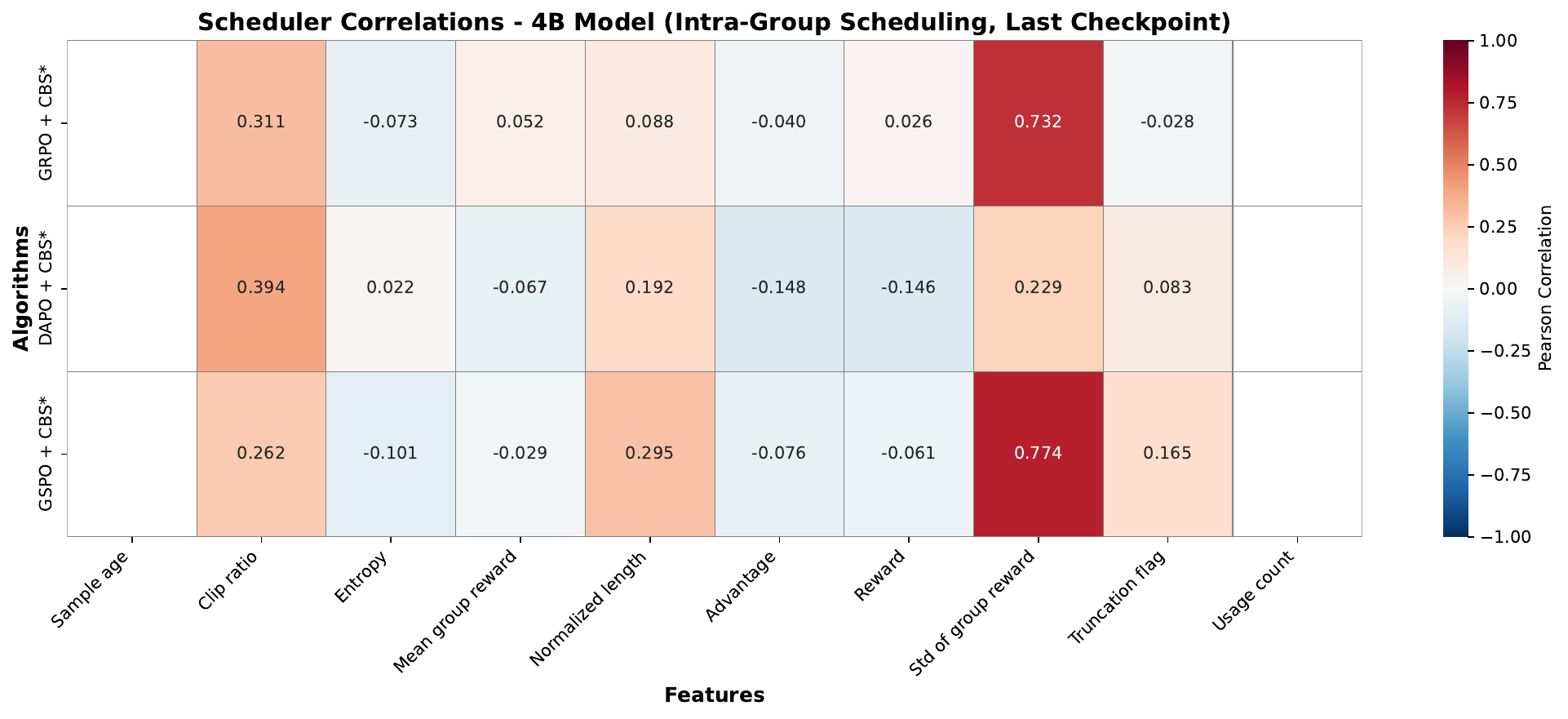} \hfill
    \includegraphics[width=0.49\linewidth]{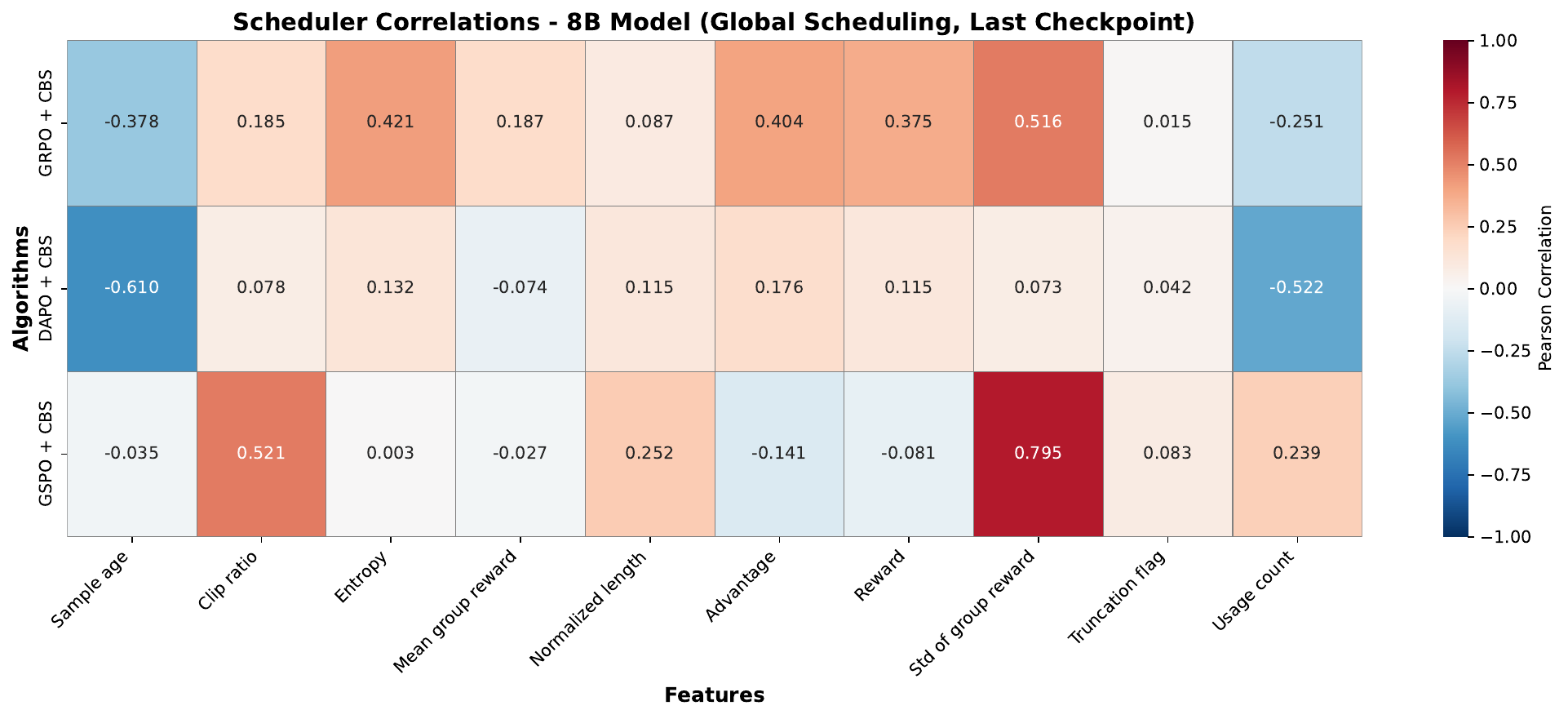} \hfill
    \includegraphics[width=0.49\linewidth]{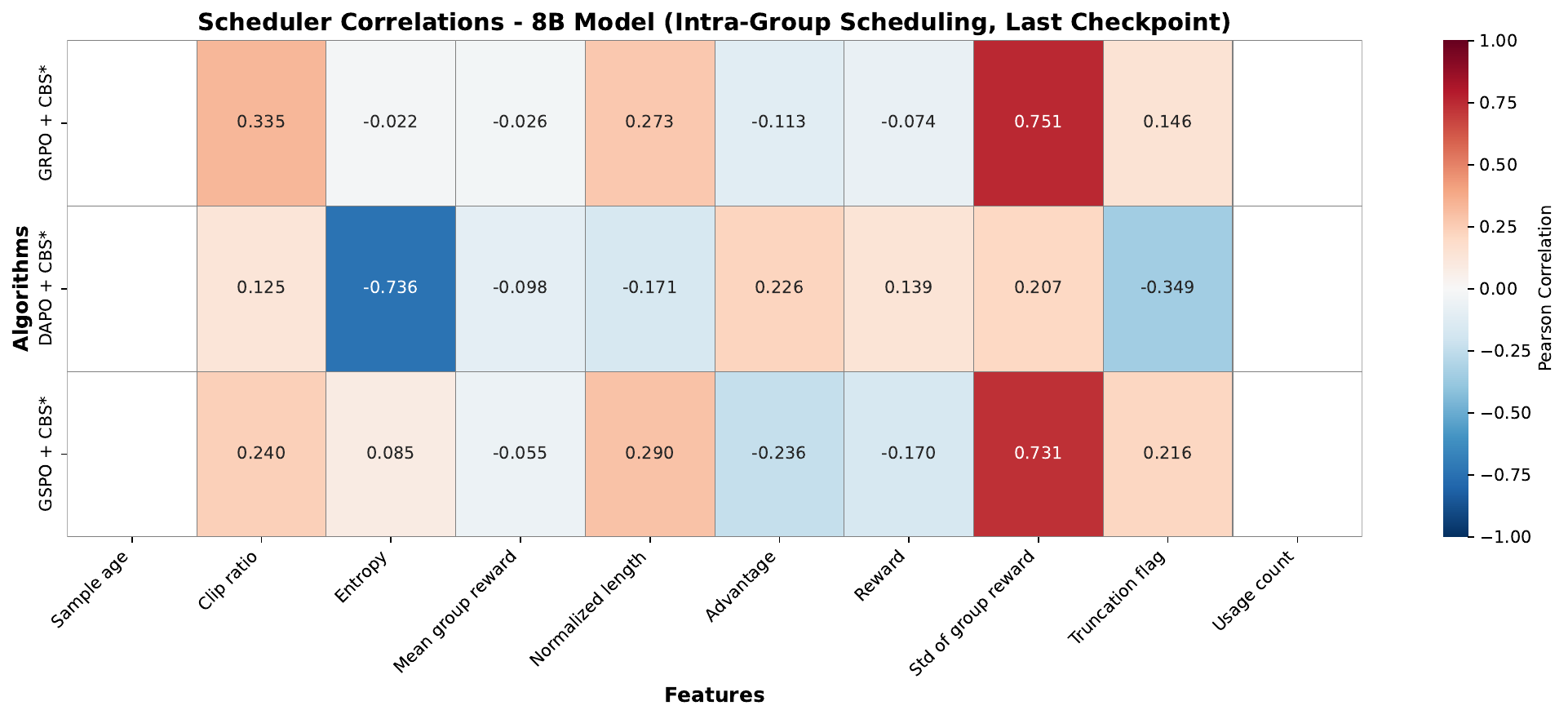}
    \caption{The Pearson correlation coefficient (PCC) between scheduler scores and training dynamic features, with empty blocks representing NaN values due to identical values across the training dynamic features in that dimension.}
    \label{fig:scheduler pattern}
\end{figure}

\begin{figure}
    \centering
    \includegraphics[width=0.49\linewidth]{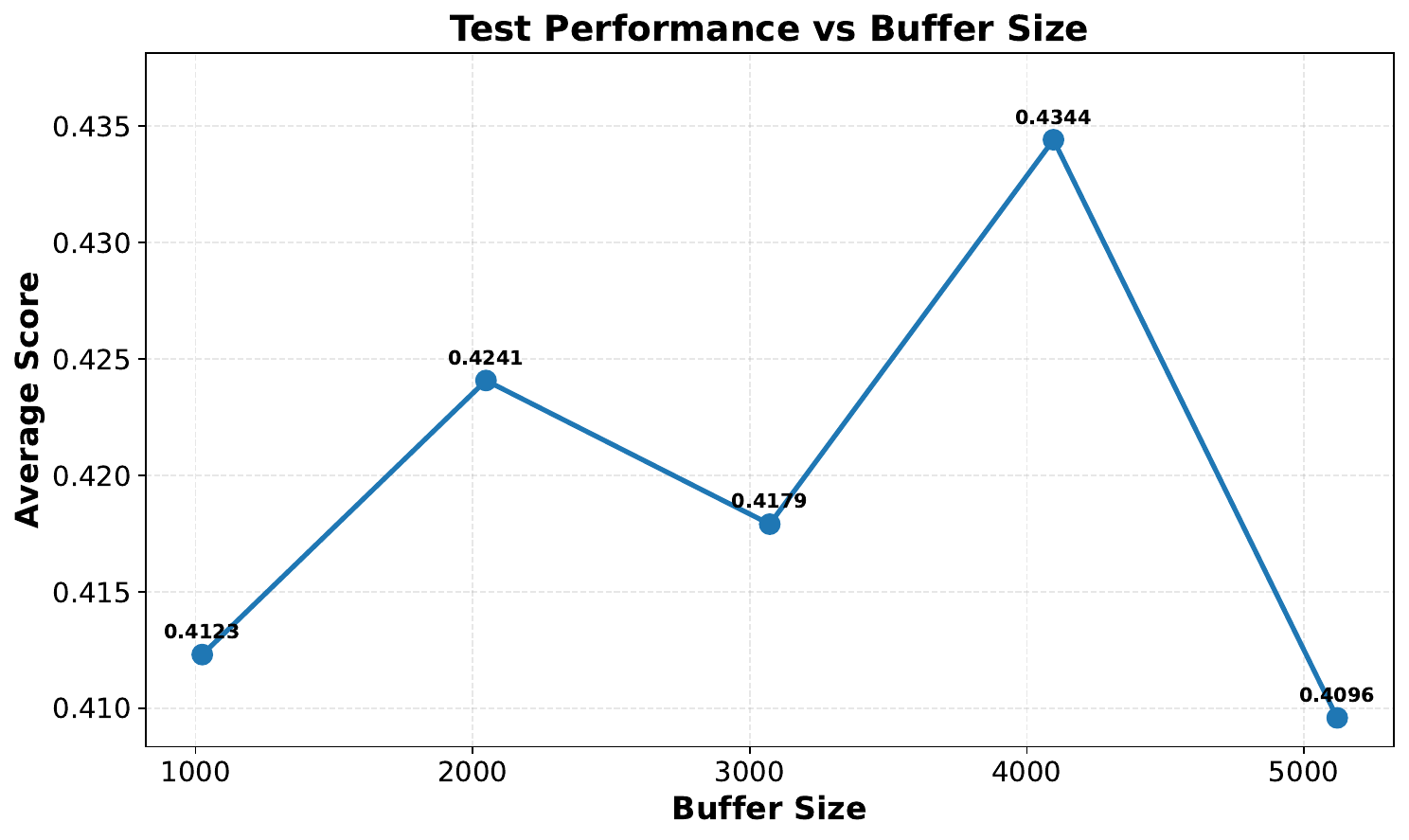}
    \includegraphics[width=0.49\linewidth]{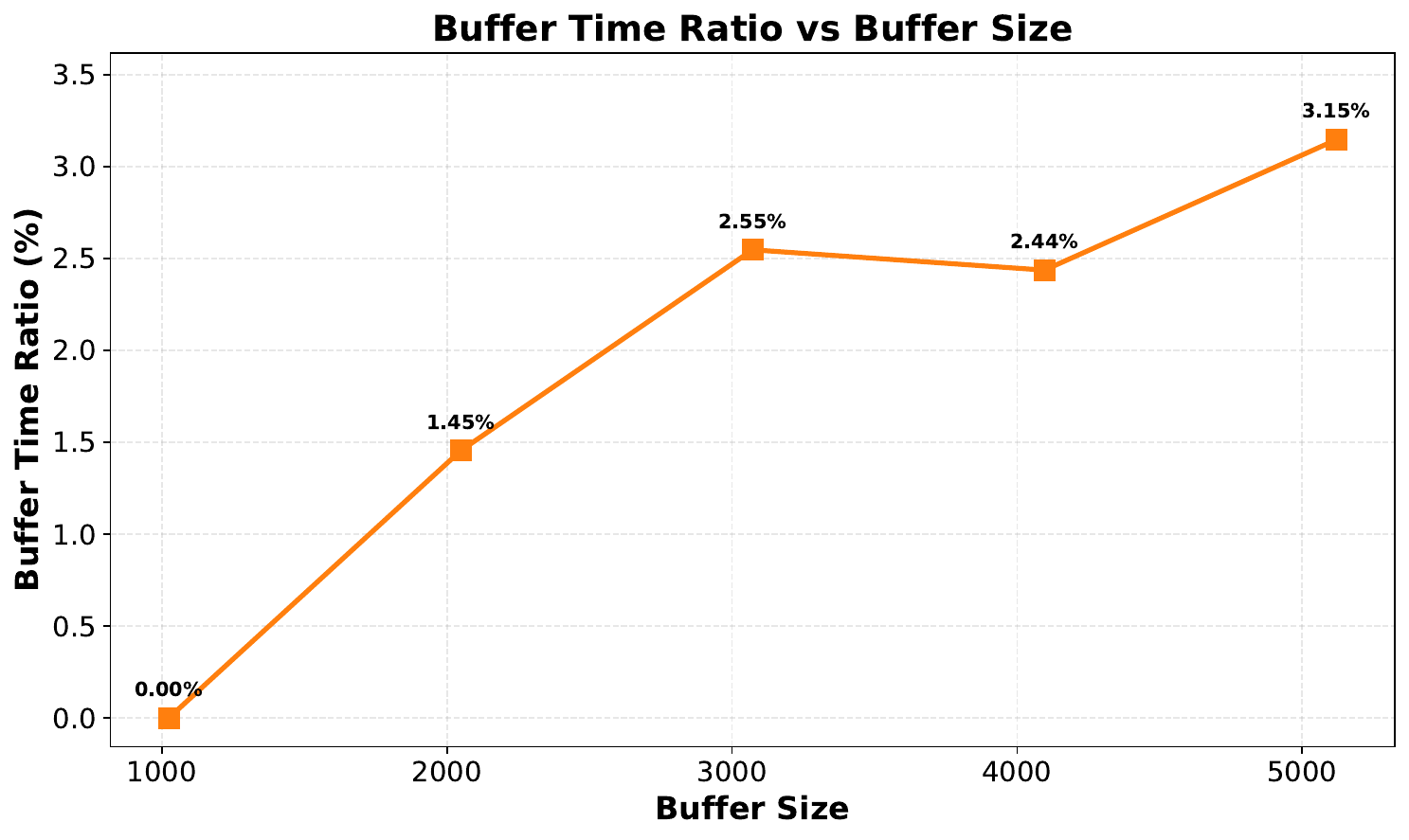}
    \caption{Performance and buffer-related runtime change of buffer size, where buffer size=1024 indicates the original GRPO.}
    \label{fig:buffer_size_influence}
\end{figure}

\subsection{Training Dynamics} \label{appdx:training_dynamics}
We show the dynamics of average score on the validation set, training reward and actor update time in \figref{fig:all_dynamics_average_score}, \figref{fig:all_dynamics_train_reward} and \figref{fig:all_actor_update_time} respectively.

\subsection{Ablation Study} \label{appdx:ablation_study}
The dataset-specific performance of different variants is shown in \tabref{tab:dataset_specific_ablation_study results}. Then entropy dynamics of the three PO methods are shown in \figref{fig:all_entropy_dynamic}.

\subsection{Hyperparameter Sensitivity} \label{appdx:hyperparamter_sensitivity}
To explore the impact of buffer size $|C_t|$, we vary the buffer size of GRPO + \model~ under the Qwen3-4B-Base setting, ranging from 1024 to 5120 in increments of 1024, and analyze both the performance and buffer-related runtime. As shown in \figref{fig:buffer_size_influence}, \model~ consistently outperforms the original GRPO within the range of [1024, 4096], demonstrating its robustness. However, when the buffer size reaches 5120, performance degrades below that of GRPO, which is expected due to factors such as increased noise from more data and the fact that the importance sampling in policy optimization is less likely to prioritize outdated data. Besides, the runtime generally increases as the buffer size increase. Moreover, the runtime typically increases with the buffer size. To optimize both performance and efficiency, we set the buffer size to 2048, balancing the tradeoff between performance and runtime.

\subsection{Scheduling Pattern Analysis and Case Study} \label{appdx:case_study}
We analyze the scheduling pattern of \model~by first examining the Pearson Correlation Coefficient (PCC) between training-dynamics features and the scheduler score. As shown in \figref{fig:scheduler pattern}, the patterns differ across policy optimization methods. For instance, GRPO + \model~relies heavily on data interaction history (i.e., usage count and sample age), while GSPO + \model~depends significantly on reward-related signals. These results demonstrate that \model~can adaptively adjust its scheduling behavior based on different policy optimization methods, leading to consistent performance improvements. Furthermore, we also respectively show rollouts that are filtered (in \figref{fig:filtered_sample}) and selected by the scheduler (in \figref{fig:selected_sample}). 

\begin{figure*}[t]
\centering
\begin{tcolorbox}[
    colback=white,          
    colframe=black,         
    colbacktitle=darkgray,  
    coltitle=white,         
    title={Case Study},     
    boxrule=0.8pt           
]

\textbf{\small Prompt:}\\[0.3em]
\scriptsize
\noindent \textbf{system} \\
You are a helpful assistant.

\noindent \textbf{user} \\
Let $n$ be the largest integer that is the product of exactly 3 distinct prime numbers: $x$, $y$, and $10x+y$, where $x$ and $y$ are digits. What is the sum of the digits of $n$?\\
Please reason step by step, and put your final answer within \textbackslash boxed\{\}.

\textbf{assistant} \\

{\small \textbf{Answer}: \small 12}
\tcblower 

\textbf{\small Model Response:}\\[0.3em]
\scriptsize
\linespread{1.2}\selectfont
Alright, I have this problem in front of me, and I need to find the largest integer $n$ that is the product of exactly three distinct prime numbers: $x$, $y$, and $10x + y$, where $x$ and $y$ are single-digit numbers (from 1 to 9). Then, I need to find the sum of the digits of this integer $n$.

First, let's break it down:

\textbf{Understanding the Problem}

We need three distinct prime numbers:
1. $x$: a single-digit prime number.
2. $y$: another single-digit prime number.
3. $10x + y$: a two-digit number that is also prime.

And we want the product $n = x \times y \times (10x + y)$ to be as large as possible.

\textbf{Identifying Single-Digit Prime Numbers}

Single-digit primes are: 2, 3, 5, 7.
So, both $x$ and $y$ can be any of these values, and they must be different because the primes need to be distinct.

\vspace{0.5em} 
\begin{center}
    \color{gray} \itshape \small
    ... [Intermediate reasoning steps omitted for brevity] ...
\end{center}
\vspace{0.5em}

\textbf{Calculating the Products}

1. For $x = 2$, $y = 3$:
\[ n = 2 \times 3 \times 23 = 6 \times 23 = 138 \]

2. For $x = 5$, $y = 3$:
\[ n = 5 \times 3 \times 53 = 15 \times 53 = 795 \]
Wait, 795 is larger than 138, so it's worth keeping.

3. For $x = 3$, $y = 7$:
\[ n = 3 \times 7 \times 37 = 21 \times 37 = 777 \]

4. For $x = 7$, $y = 3$:
\[ n = 7 \times 3 \times 73 = 21 \times 73 = 1533 \]

Now, comparing all the values of $n$ we have: 138, 795, 777, 1533.
The largest $n$ is 1533.

\textbf{Computing the Sum of the Digits of $n$}

Now, we need to find the sum of the digits of 1533.
Let's add them:
\[ 1 + 5 + 3 + 3 = 12 \]

\textbf{Final Answer}

Therefore, the sum of the digits of $n$ is \boxed{12}.  NullOrUndefinedAssistant systemassistantassistantassistantassistantassistantassistantassistant

\vspace{0.5em}
\begin{center}
    \color{gray} \itshape \small
    ... [Output repeats "assistant" continuously until truncate] ...
\end{center}

\end{tcolorbox}
\caption{Rollout filtered by \model, where the model answers correctly but fails to stop, entering a repetitive loop of token generation.} 
\label{fig:filtered_sample}
\end{figure*}

\begin{figure*}[t]
\centering
\begin{tcolorbox}[
    colback=white,          
    colframe=black,         
    colbacktitle=darkgray,  
    coltitle=white,         
    title={Case Study},     
    boxrule=0.8pt           
]

\textbf{\small Prompt:}\\[0.3em]
\scriptsize
\noindent \textbf{system} \\
You are a helpful assistant.

\noindent \textbf{user} \\
The positive five-digit integers that use each of the digits 1, 2, 3, 4 and 5 exactly once are ordered from least to greatest. What is the $50^{\text{th}}$ integer in the list?\\
Please reason step by step, and put your final answer within \textbackslash boxed\{\}.

\noindent \textbf{assistant} \\

\textbf{\small Answer: 31254}
\tcblower 
\scriptsize
\textbf{\small Model Response:}\\[0.3em]
\linespread{1.2}\selectfont
Alright, I have this problem in front of me:

\textit{The positive five-digit integers that use each of the digits 1, 2, 3, 4, and 5 exactly once are ordered from least to greatest. What is the 50th integer in the list?}

I need to find the 50th permutation of the digits 1, 2, 3, 4, and 5 arranged in ascending order. Let's break this down.

\textbf{Understanding the Problem}

First, I need to generate five-digit numbers using each digit from 1 to 5 exactly once, sorted from smallest to largest, and then find the number that is in the 50th position.

\textbf{Total Number of Permutations}

I know that the total number of unique five-digit numbers using each of the digits 1, 2, 3, 4, and 5 exactly once is the number of permutations of 5 distinct digits.

The number of permutations of 5 distinct digits is:
\[ 5! = 5 \times 4 \times 3 \times 2 \times 1 = 120 \]

So, there are 120 such numbers, and they are ordered from smallest to largest.

\textbf{Finding the 50th Permutation}

To find the 50th permutation, I think it would be helpful to understand how the permutations are ordered.

Each unique starting digit (first digit) can lead to $4! = 24$ permutations because there are 24 ways to arrange the remaining four digits.

Since $24 \times 2 = 48$, the 50th permutation falls into the range where the first digit is 5 (because the first 48 permutations must have 1, 2, or 3 as the first digit).

Wait, let's check:
\begin{itemize}
    \item The first 24 numbers start with 1.
    \item The next 24 numbers start with 2.
    \item The next 24 numbers start with 3.
    \item The next 24 numbers start with 4.
    \item The next 24 numbers start with 5.
\end{itemize}

So:
\begin{itemize}
    \item Numbers 1 to 24: first digit = 1
    \item Numbers 25 to 48: first digit = 2
    \item Numbers 49 to 72: first digit = 3
\end{itemize}

Aha, so the 50th permutation is between numbers 49 and 72, which corresponds to the first digit being 3.

\textbf{Narrowing Down Further}

Now, I need to find the 50th permutation.
Since the first 48 are covered (24 starting with 1 + 24 starting with 2), the 49th to 72nd permutations start with 3.

Thus, the 50th permutation is the \textbf{2nd permutation} in the set of permutations starting with 3 (since $48 + 2 = 50$).

To find permutations starting with 3, the remaining digits are 1, 2, 4, 5.
Now, the task reduces to finding the 2nd smallest permutation of the digits 1, 2, 4, 5 with 3 fixed as the first digit.

\textbf{Finding the 2nd Smallest Permutation of 1, 2, 4, 5}

First, find all permutations of 1, 2, 4, 5 ordered from smallest to largest.
To find the 2nd permutation, let's list them step by step.

The smallest permutation (1st) will have the smallest possible digits in the subsequent places.
\begin{enumerate}
    \item \textbf{1st permutation:} Start with 1, remaining digits 2, 4, 5 in ascending order: \textbf{1 2 4 5}.
    \item \textbf{2nd permutation:} Keep 1 as the second digit, swap the last two: \textbf{1 2 5 4}.
\end{enumerate}

Thus, the second permutation starting with 3 (which is the 50th overall) is \textbf{3 1 2 5 4}.

\textbf{Final Answer}

After this reasoning, the 50th five-digit integer using each of the digits 1, 2, 3, 4, and 5 exactly once, in ascending order, is
\[ \boxed{31254} \]

\end{tcolorbox}
\caption{A rollout elected by \model, where the answers and CoT are both right.} 
\label{fig:selected_sample}
\end{figure*}

\end{document}